\def\given{\,|\,}
\def\tr{\mathop{\text{tr}}\kern.2ex}
\long\def\comment#1{}
\def\tr{\mathop{\text{Tr}}}
\def\cS{{\mathcal{S}}}
\newcommand{\bel}{\begin{eqnarray}\label}
\newcommand{\eel}{\end{eqnarray}}
\newcommand{\bes}{\begin{eqnarray*}}
\newcommand{\ees}{\end{eqnarray*}}
\def\real{{\mathbb{R}}}
\def\R{{\real}}
\def\mA{\mathscr{A}}
\def\##1\#{\begin{align}#1\end{align}}
\def\$#1\${\begin{align*}#1\end{align*}}
\begin{document}

\title{\huge On the Global Optimality of\\
 Model-Agnostic Meta-Learning}

\author
{%\thanks{equal contribution}~
\normalsize Lingxiao Wang\thanks{Northwestern University; \texttt{lingxiaowang2022@u.northwestern.edu}}
\qquad%footnotemark[1]~
\normalsize Qi Cai\thanks{Northwestern University; \texttt{qicai2022@u.northwestern.edu}}
\qquad
\normalsize Zhuoran Yang\thanks{Princeton University; \texttt{zy6@princeton.edu}}
\qquad
\normalsize Zhaoran Wang\thanks{Northwestern University; \texttt{zhaoranwang@gmail.com}}
}
\date{\today}

\maketitle

% !TEX root =main.tex

\begin{abstract}
Model-agnostic meta-learning (MAML) formulates meta-learning as a bilevel optimization problem, where the inner level solves each subtask based on a shared prior, while the outer level searches for the optimal shared prior by optimizing its aggregated performance over all the subtasks. 
Despite its empirical success, MAML remains less understood in theory, especially in terms of its global optimality, due to the nonconvexity of the meta-objective (the outer-level objective).
To bridge such a gap between theory and practice, we characterize the optimality gap of the stationary points attained by MAML for both reinforcement learning and supervised learning, where the inner-level and outer-level problems are solved via first-order optimization methods.
In particular, our characterization connects the optimality gap of such stationary points with (i) the functional geometry of inner-level objectives and (ii) the representation power of function approximators, including linear models and neural networks.
To the best of our knowledge, our analysis establishes the global optimality of MAML with  nonconvex meta-objectives for the first time.
\end{abstract}

\section{Introduction}
Meta-learning aims to find a prior that efficiently adapts to a new subtask based on past subtasks. One of the most popular meta-learning methods, namely model-agnostic meta-learning (MAML) \citep{finn2017model},  is based on bilevel optimization, where the inner level solves each subtask based on a shared prior, while the outer level optimizes the aggregated performance of the shared prior over all the subtasks. In particular, MAML associates the solution to each subtask with the shared prior through one step of gradient descent based on the subtask data. Due to its model-agnostic property, MAML is widely adopted in reinforcement learning \citep{finn2017model, finn2017one, xu2018learning, nagabandi2018deep, gupta2018meta, yu2018one, mendonca2019guided} and supervised learning \citep{finn2017model,li2017meta, finn2018probabilistic, rakelly2018few, yoon2018bayesian}.

Despite its popularity in empirical studies, MAML is scarcely explored theoretically. In terms of the global optimality of MAML, \cite{finn2019online} show that the meta-objective is strongly convex assuming that the inner-level objective is strongly convex (in its finite-dimensional parameter). However, such an assumption fails to hold for neural function approximators, which leads to a gap between theory and practice. For nonconvex meta-objectives, \cite{fallah2019convergence} characterize the convergence of MAML to a stationary point under certain regularity conditions. Meanwhile, \cite{rajeswaran2019meta} propose a variant of MAML that utilizes implicit gradients, which is also guaranteed to converge to a stationary point. However, the global optimality of such stationary points remains unclear. On the other hand, \cite{pentina2014pac,amit2017meta} establish PAC-Bayes bounds for the generalization error of two variants of MAML. However, such generalization guarantees only apply to the global optima of the two meta-objectives rather than their stationary points.

In this work, we characterize the global optimality of the $\epsilon$-stationary points attained by MAML for both reinforcement learning (RL) and supervised learning (SL). For meta-RL, we study a variant of MAML, which associates the solution to each subtask with the shared prior, namely $\pi_\theta$, through one step of proximal policy optimization (PPO) \citep{schulman2015trust, schulman2017proximal} in the inner level of optimization. In the outer level of optimization, we maximize the expected total reward associated with the shared prior aggregated over all the subtasks. We prove that the $\epsilon$-stationary point attained by such an algorithm is (approximately) globally optimal given that the function approximator has sufficient representation power. For example, for the linear function approximator $\pi_\theta(s, a) \propto \exp(\phi(s, a)^\top \theta)$, the optimality gap of the $\epsilon$-stationary point is characterized by the representation power of the linear class $\{\phi(\cdot, \cdot)^\top v: v \in \cB\}$, where $\cB$ is the parameter space (which is specified later). The core of our analysis is the functional one-point monotonicity \citep{facchinei2007finite} of the expected total reward $J(\pi)$ with respect to the policy $\pi$ \citep{liu2019neural} for each subtask. Based on a similar notion of functional geometry in the inner level of optimization, we establish similar results on the optimality gap of meta-SL. Moreover, our analysis of both meta-RL and meta-SL allows for neural function approximators. More specifically, we prove that the optimality gap of the attained $\epsilon$-stationary points is characterized by the representation power of the corresponding classes of overparameterized two-layer neural networks.

\vskip4pt
{\noindent\bf Challenge.} We highlight that the bilevel structure of MAML makes it challenging for the analysis of its global optimality. In the simple case where the inner-level objective is strongly convex and smooth, \cite{finn2019online} show that the meta-objective is also strongly convex assuming that the stepsize of inner-level optimization is sufficiently small. 
\begin{itemize}
\item In practice, however, both the inner-level objective and the meta-objective can be nonconvex, which leads to a gap between theory and practice. For example, the inner-level objective of meta-RL is nonconvex even in the (infinite-dimensional) functional space of policies.
\item Even assuming that the inner-level objective is convex in the (infinite-dimensional) functional space, nonlinear function approximators, such as neural networks, can make the inner-level objective nonconvex in the finite-dimensional space of parameters. 
\item Furthermore, even for linear function approximators, the bilevel structure of MAML can make the meta-objective nonconvex in the finite-dimensional space of parameters, especially when the stepsize of inner-level optimization is large. 
\end{itemize}
In this work, we tackle all these challenges by analyzing the global optimality of both meta-RL and meta-SL for both linear and neural function approximators.

\vskip4pt
{\noindent\bf Contribution.} Our contribution is three-fold. First, we propose a meta-RL algorithm and characterize the optimality gap of the $\epsilon$-stationary point attained by such an algorithm for linear function approximators. Second, under an assumption on the functional convexity of the inner-level objective, we characterize the optimality gap of the $\epsilon$-stationary point attained by meta-SL. Finally, we extend our optimality analysis for linear function approximators to handle overparameterized two-layer neural networks. To the best of our knowledge, our analysis establishes the global optimality of MAML with  nonconvex meta-objectives for the first time.
\vskip4pt
{\noindent\bf Related Work.} Meta-learning is studied by various communities \citep{evgeniou2004regularized, thrun2012learning, pentina2014pac,amit2017meta, nichol2018first, nichol2018reptile, khodak2019provable}. See \cite{pan2009survey, weiss2016survey} for the surveys of meta-learning and \cite{taylor2009transfer} for a survey of meta-RL. Our work focuses on the model-agnostic formulation of meta-learning (MAML) proposed by \cite{finn2017model}. In contrast to existing empirical studies, the theoretical analysis of MAML is relatively scarce. \cite{fallah2019convergence} establish the convergence of three variants of MAML for nonconvex meta-objectives. \cite{rajeswaran2019meta} propose a variant of MAML that utilizes implicit gradients of the inner level of optimization and establish the convergence of such an algorithm. This line of work characterizes the convergence of MAML to the stationary points of the corresponding meta-objectives. Our work is complementary to this line of work in the sense that we characterize the global optimality of the stationary points attained by MAML. Meanwhile, \cite{finn2019online} propose an online algorithm for MAML with regret guarantees, which rely on the strong convexity of the meta-objectives. In contrast, our work tackles nonconvex meta-objectives, which allows for neural function approximators, and characterizes the global optimality of MAML. \cite{mendonca2019guided} propose a meta-policy search method and characterize the global optimality for solving the subtasks under the assumption that the meta-objective is (approximately) globally optimal. Our work is complementary to their work in the sense that we characterize the global optimality of MAML in terms of optimizing the meta-objective. See also the concurrent work \citep{meta-learning-convergence-kernel}.

There is a large body of literature that studies the training and generalization of overparameterized neural networks for SL \citep{daniely2017sgd, jacot2018neural, wu2018sgd, allen2018learning, allen2018convergence, du2018gradient1, du2018gradient, zou2018stochastic, chizat2018note, li2018learning, cao2019bounds,  cao2019generalization, arora2019fine, lee2019wide, bai2019beyond}. See \cite{fan2019selective} for a survey. In comparison, we study MAML with overparameterized neural networks for both RL and SL. The bilevel structure of MAML makes our analysis significantly more challenging than that of RL and SL.

\vskip4pt
{\noindent\bf Notation.} We denote by $[n]=\{1, 2, ..., n\}$ the index set. Also, we denote by $x = ([x]_1^\top, \ldots, [x]_m^\top)^\top \in \RR^{md}$ a vector in $\RR^{md}$, where $[x]_k\in \RR^d$ is the $k$-th block of $x$ for $k\in[m]$. For a real-valued function $f$ defined on $\cX$, we denote by $\|f(\cdot)\|_{p, \nu} = \{\int_\cX f^p(x) \ud \nu(x)\}^{1/p}$ the $L_p(\nu)$-norm of $f$, where $\nu$ is a measure on $\cX$. We write $\|f(\cdot)\|_{2, \nu} = \|f(\cdot)\|_\nu$ for notational simplicity and $\|f\|_{p, \nu} = \|f(\cdot)\|_{p, \nu}$ when the variable is clear from the context. For a vector $\phi \in \RR^n$, we denote by $\|\phi\|_2$ the $\ell_2$-norm of $\phi$.

\section{Background}
In this section, we briefly introduce reinforcement learning and meta-learning.
\subsection{Reinforcement Learning}
We define a Markov decision process (MDP) by a tuple $(\cS, \cA, P, r, \gamma, \zeta)$, where $\cS$ and $\cA$ are the state and action spaces, respectively, $P$ is the Markov kernel, $r$ is the reward function, which is possibly stochastic, $\gamma\in(0, 1)$ is the discount factor, and $\zeta$ is the initial state distribution over $\cS$. In the sequel, we assume that $\cA$ is finite. An agent interacts with the environment as follows. At each step $t$, the agent observes the state $s_t$ of the environment, takes the action $a_t$, and receives the reward $r(s_t, a_t)$. The environment then transits into the next state according to the distribution $P(\cdot\given s_t, a_t)$ over $\cS$. We define a policy $\pi$ as a mapping from $\cS$ to distributions over $\cA$. Specifically, $\pi(a\given s)$ gives the probability of taking the action $a$ at the state $s$. Given a policy $\pi$, we define for all $(s, a)\in\cS\times\cA$ the corresponding state- and action-value functions $V^\pi$ and $Q^\pi$ as follows,
\#
\label{eq::def_V} V^\pi(s) &= (1 - \gamma)\cdot \EE\biggl[\sum^\infty_{t = 0} \gamma^t \cdot r(s_t, a_t) \,\biggl|\, s_0 = s\biggr],\\
\label{eq::def_Q} Q^\pi(s, a) &= (1 - \gamma)\cdot \EE\biggl[\sum^\infty_{t = 0}\gamma^t \cdot r(s_t, a_t)\,\biggl|\, s_0 = s, a_0 = a\biggr],
\#
where $s_{t+1}\sim P(\cdot\given s_t, a_t)$ and $a_t\sim \pi(\cdot\given s_t)$ for all $t \geq 0$. Correspondingly, the advantage function $A^\pi$ is defined as follows,
\#\label{eq::def_A}
A^\pi(s, a) = Q^\pi(s, a) - V^\pi(s),\quad \forall (s, a)\in\cS\times\cA.
\#
A policy $\pi$ induces a state visitation measure $\nu_\pi$ on $\cS$, which takes the form of
\#\label{eq::def_visit}
\nu_\pi(s) = (1 - \gamma)\cdot \sum^\infty_{t = 0} \gamma^t \cdot \PP(s_t = s),
\#
where $s_0 \sim \zeta$, $s_{t+1}\sim P(\cdot\given s_t, a_t)$, and $a_t \sim \pi(\cdot\given s_t)$ for all $t \geq 0$. Correspondingly, we define the state-action visitation measure by $\sigma_\pi(s, a) = \pi(a\given s)\cdot\nu_\pi(s)$ for all $(s, a)\in\cS\times \cA$, which is a probability distribution over $\cS\times\cA$. The goal of reinforcement learning is to find the optimal policy $\pi^*$ that maximizes the expected total reward $J(\pi)$, which is defined as
\#\label{eq::def_J}
J(\pi) = \EE_{s\sim \zeta}\bigl[V^\pi(s)\bigr] = \EE_{(s, a)\sim \sigma_\pi}\bigl[r(s, a)\bigr].
\#
When $\cS$ is continuous, maximizing $J(\pi)$ over all possible $\pi$ is computationally intractable. A common alternative is to parameterize the policy by $\pi_\theta$ with the parameter $\theta\in\Theta$, where $\Theta$ is the parameter space, and maximize $J(\pi_\theta)$ over $\theta\in\Theta$.
\subsection{Meta-Learning}

In meta-learning, the meta-learner is given a sample of learning subtasks $\{\cT_i\}_{i\in[n]}$ drawn independently from the task distribution $\iota$ and a set of parameterized algorithms $\cA = \{\mA_\theta : \theta \in \Theta\}$, where $\Theta$ is the parameter space. Specifically, given $\theta$, the algorithm $\mA_\theta \in\cA$ maps from a learning subtask $\cT$ to its desired outcome. For example, an algorithm that solves reinforcement learning subtasks maps from an MDP $\cT = (\cS, \cA, P, r, \gamma, \zeta)$ to a policy $\pi$, aiming at maximizing the expected total reward $J(\pi)$ defined in \eqref{eq::def_J}. As an example, given a hypothesis class $\cH$, a distribution $\cD$ over $\cZ$, which is the space of data points, and a loss function $\ell:\cH\times\cZ \mapsto \RR$, a supervised learning subtask aims at minimizing the risk $\EE_{z\sim \cD}[\ell(h, z)]$ over $h\in\cH$. We denote the supervised learning subtask $\cT$ by the tuple $(\cD,  \ell, \cH)$. Similarly, an algorithm that solves supervised learning subtasks maps from $\cT = (\cD, \ell, \cH)$ to a hypothesis $h \in \cH$, aiming at minimizing the risk $R(h) = \EE_{z\sim \cD}[\ell(h, z)]$ over $h \in \cH$. In what follows, we denote by $H_{\cT}$ the objective of a learning subtask $\cT$. If $\cT$ is a reinforcement learning subtask, we have $H_{\cT} = J$, and if $\cT$ is a supervised learning subtask, we have $H_{\cT} = R$.

The goal of the meta-learner is to find $\theta^*\in\Theta$ that optimizes the population version of the meta-objective $\overline L(\theta)$, which is defined as 
\#\label{eq::def_meta_L}
\overline L(\theta) = \EE_{\cT\sim \iota} \Bigl[H_{\cT}\bigl(\mA_{\theta}(\cT)\bigr)\Bigr].
\#
To approximately optimize $\overline L$ defined in \eqref{eq::def_meta_L} based on the sample $\{\cT_i\}_{i\in[n]}$ of subtasks, the meta-learner optimizes the following empirical version of the meta-objective,
\#\label{eq::def_meta_eL}
L(\theta) = \frac{1}{n}\cdot \sum^n_{i = 1}H_{\cT_i}\bigl(\mA_{\theta}(\cT_i)\bigr).
\#
The algorithm $\mA_{\theta^*}$ corresponding to the global optimum $\theta^*$ of \eqref{eq::def_meta_eL} incorporates the past experience through the observed learning subtasks $\{\cT_i\}_{i\in[n]}$, and therefore, facilitates the learning of a new subtask \citep{pentina2014pac, finn2017model, amit2017meta, yoon2018bayesian}.~As an example, in model-agnostic meta-learning (MAML) \citep{finn2017model} for supervised learning, the hypothesis class $\cH$ is parameterized by $h_\theta$ with $\theta \in \Theta$, and the algorithm $\mA_\theta$ performs one step of gradient descent with $\theta\in\Theta$ as the starting point. In this setting, MAML aims to find the globally optimal starting point $\theta^*$ by minimizing the following meta-objective by gradient descent,
\$
L(\theta) = \frac{1}{n}\cdot \sum^n_{i = 1}R_i \bigl( h_{\theta - \eta\cdot\nabla_\theta R_i(h_\theta)}\bigr),
\$
where $\eta$ is the learning rate of $\mA_\theta$ and $R_i(h) = \EE_{z\sim\cD_i}[\ell(h, z)]$ is the risk of the supervised learning subtask $\cT_i = (\cD_i, \ell, \cH)$.~Similarly, in MAML for reinforcement learning, the algorithm $\mA_\theta$ performs, e.g., one step of policy gradient with $\theta$ as the starting point. We call $\pi_\theta$ the main effect in the sequel. MAML aims to find the globally optimal main effect $\pi_{\theta^*}$ by maximizing the following meta-objective by gradient ascent,
\$
L(\theta) = \frac{1}{n}\cdot \sum^n_{i = 1}J_i\bigl(\pi_{\theta + \eta\cdot\nabla_\theta J_i(\pi_\theta)}\bigr),
\$
where $\eta$ is the learning rate of $\mA_\theta$ and $J_i$ is the expected total reward of the reinforcement learning subtask $\cT_i = (\cS, \cA, P_i, r_i, \gamma_i, \zeta_i)$. 
%In this paper, we consider a unified framework that optimizes $L(\theta)$ defined in \eqref{eq::def_meta_eL} via gradient-based algorithms, which are task-specific. More specifically, we adapt MAML to reinforcement learning and supervised learning and explicitly calculate the gradient of the meta-objective $L(\theta)$ when adopted to reinforcement learning and supervised learning, respectively. 

\section{Meta-Reinforcement Learning}
\label{sec::MRL}
In this section, we present the analysis of meta-reinforcement learning (meta-RL). We first define the detailed problem setup of meta-RL and propose a meta-RL algorithm. We then characterize the global optimality of the stationary point attained by such an algorithm. 
\subsection{Problem Setup and Algorithm}
In meta-RL, the meta-learner observes a sample of MDPs $\{(\cS, \cA, P_i, r_i, \gamma_i, \zeta_i)\}_{i\in[n]}$ drawn independently from a task distribution $\iota$. We set the algorithm $\mA_\theta$ in \eqref{eq::def_meta_eL}, which optimizes the policy, to be one step of (a variant of) proximal policy optimization (PPO) \citep{schulman2015trust, schulman2017proximal} starting from the main effect $\pi_\theta$. More specifically, $\mA_\theta$ solves the following maximization problem,
\#\label{eq::def_PPO}
&\mA_\theta(\cS, \cA, P_i, r_i, \gamma_i, \zeta_i) \notag\\
&\quad =\argmax_{\pi} \EE_{s \sim \nu_{i, \pi_\theta}}\Bigl[\langle Q^{\pi_\theta}_i(s, \cdot), \pi(\cdot\given s) \rangle- 1/\eta \cdot D_{\text{\rm KL}}\bigl(\pi(\cdot\given s )\big\| \pi_\theta(\cdot \given s)\bigr)\Bigr].
\#
Here $\langle\cdot, \cdot\rangle$ is the inner product over $\RR^{|\cA|}$, $\eta$ is the tuning parameter of $\mA_\theta$, and $Q^{\pi_\theta}_i$, $\nu_{i, \pi_\theta}$ are the action-value function and the state visitation measure, respectively, corresponding to the MDP $(\cS, \cA, P_i, r_i, \gamma_i, \zeta_i)$ and the policy $\pi_\theta$. Note that the objective in \eqref{eq::def_PPO} has $D_{\text{\rm KL}}(\pi(\cdot\given s)\| \pi_\theta(\cdot\given s))$ in place of $D_{\text{\rm KL}}(\pi_\theta(\cdot\given s)\| \pi(\cdot\given s))$ compared with the original version of PPO \citep{schulman2015trust, schulman2017proximal}. As shown by \cite{liu2019neural}, such a variant of PPO enjoys global optimality and convergence.

%Note that here we use a slight variation of PPO by \cite{schulman2015trust, schulman2017proximal}. Specifically, the objective in \eqref{eq::def_PPO} has $D_{\text{\rm KL}}(\pi(\cdot\given s)\| \pi_\theta(\cdot\given s))$ in place of $D_{\text{\rm KL}}(\pi_\theta(\cdot\given s)\| \pi(\cdot\given s))$ in the PPO by \cite{schulman2015trust, schulman2017proximal}. As shown by \cite{liu2019neural}, such a variant of PPO enjoys the global optimality and convergence.

We parameterize the main effect $\pi_\theta$ as the following energy-based policy \citep{haarnoja2017reinforcement},
\#\label{eq::def_leader}
\pi_\theta(a\given s) = \frac{\exp\bigl(1/\tau\cdot \phi(s, a)^\top \theta \bigr)}{\sum_{a'\in\cA}\exp\bigl(1/\tau\cdot\phi(s, a')^\top \theta \bigr)}, \quad \forall (s, a)\in\cS\times\cA,
\#
where $\phi: \cS\times\cA \mapsto \RR^d$ is the feature mapping, $\theta \in \RR^d$ is the parameter, $\phi(\cdot, \cdot)^\top \theta$ is the energy function, and $\tau$ is the temperature parameter. The maximizer $\pi_{i, \theta}=\mA_\theta(\cS, \cA, P_i, r_i, \gamma_i, \zeta_i) $ defined in \eqref{eq::def_PPO} then takes the following form \citep[Proposition~3.1]{liu2019neural},
\#\label{eq::PPO_optimal}
\pi_{i, \theta}(\cdot \given s) \propto \exp\bigl(1/\tau\cdot \phi(s, \cdot)^\top \theta + \eta \cdot Q^{\pi_\theta}_i(s, \cdot)\bigr), \quad \forall s \in \cS.
\#
%Under a PAC Bayes learning regime, the policy $\pi_{i, \theta}$ is the optimal posterior given the prior $\pi_\theta$ and the objective $\langle Q^{\pi_\theta}_i(s, \cdot), \pi(\cdot, s) \rangle$.
The goal of meta-RL is to find the globally optimal main effect $\pi_\theta$ by maximizing the following meta-objective,
\#\label{eq::def_meta_obj_rl}
L(\theta) = \frac{1}{n}\cdot \sum^n_{i = 1} J_i(\pi_{i, \theta}), \quad \text{\rm where}~\pi_{i,\theta} = \mA_\theta(\cS, \cA, P_i, r_i, \gamma_i, \zeta_i).
\#
Here $J_i$ is the expected total reward defined in \eqref{eq::def_J} corresponding to the MDP $(\cS, \cA, P_i, r_i, \gamma_i, \zeta_i)$ for all $i\in[n]$. To maximize $L(\theta)$, we use gradient ascent, which iteratively updates $\theta$ as follows,
\#\label{eq::update}
\theta_{\ell+1} \leftarrow \theta_\ell + \alpha_\ell \cdot \nabla_\theta L(\theta_\ell), \quad\text{\rm for}~\ell = 0, 1, \ldots, T-1,
\#
where $\nabla_\theta L(\theta_\ell)$ is the gradient of the meta-objective at $\theta_\ell$, $\alpha_\ell$ is the learning rate at the $\ell$-th iteration, and $T$ is the number of iterations. It remains to calculate the gradient $\nabla_\theta L(\theta)$. To this end, we first define the state-action visitation measures induced by the main effect $\pi_{\theta}$, and then calculate $\nabla_\theta L(\theta)$ in closed form based on such state-action visitation measures.
\begin{definition}[Visitation Measures of Main Effect]
\label{def::meta_visit}
For all $i\in[n]$, given the MDP $(\cS, \cA, P_i, r_i, \gamma_i, \zeta_i)$ and the main effect $\pi_\theta$, we denote by $\sigma_{i, \pi_\theta}$ the state-action visitation measure induced by the main effect $\pi_\theta$. We further define the state-action visitation measure $\sigma^{(s, a)}_{i, \pi_\theta}$ initialized at $(s, a) \in \cS\times\cA$ as follows,
\#\label{eq::def_visit_m_init}
\sigma^{(s, a)}_{i, \pi_\theta}(s', a') = (1 - \gamma_i)\cdot \sum^\infty_{t = 0} \gamma_i^t \cdot \PP(s_t = s', a_t = a'),\quad \forall (s', a')\in\cS\times\cA,
\#
where $s_0 \sim P_i(\cdot\given s, a)$, $s_{t+1}\sim P_i(\cdot\given s_t, a_t)$, and $a_t \sim \pi_\theta(\cdot \given s_t)$ for all $t \geq 0$.
\end{definition}
In other words, given the transition kernel $P_i$ and the discount factor $\gamma_i$, $\sigma^{(s, a)}_{i, \pi_\theta}$ 
%and the joint meta visitation measure $\rho_{i, \theta}$ are  %and $\sigma_{\pi_{i, \theta}}$, respectively,
is the state-action visitation measure induced by the main effect $\pi_\theta$ where the initial state distribution is given by $s_0\sim P_i(\cdot\given s, a)$. 
Based on the policy gradient theorem \citep{sutton2018reinforcement}, the following proposition calculates the gradient of the meta-objective $L$ defined in \eqref{eq::def_meta_obj_rl} with respect to the parameter $\theta$ of the main effect $\pi_\theta$.
\begin{proposition}[Gradient of Meta-Objective]
\label{prop::meta_grad}
It holds for all $\theta\in\RR^d$ that
\#\label{eq::meta_grad_thm}
&\nabla_\theta L(\theta) = \frac{1}{n}\cdot \sum^n_{i = 1} \EE_{(s, a)\sim \sigma_{\pi_{i, \theta}}}\bigl[h_{i, \theta}(s, a)\cdot A_i^{\pi_{i, \theta}}(s, a)\bigr],
\#%
where the auxiliary function $h_{i, \theta}$ takes the form of
\#\label{eq::def_h}
h_{i, \theta}(s, a) 
%= 1/\tau  + \eta/\tau\cdot A^{\pi_\theta}_{i}(s, a)\cdot u_{i, \theta}(s, a) \cdot \frac{\ud \varsigma_{i, \theta}}{\ud \sigma_{\pi_{i, \theta}}}(s, a).
= 1/\tau\cdot\phi(s, a) + \eta\cdot\gamma_i/\tau\cdot \EE_{(s', a')\sim \sigma^{(s, a)}_{i, \pi_\theta}}\bigl[\phi(s', a')\cdot A^{\pi_{\theta}}_i(s', a')\bigr].
\#
Here $A^{\pi_{i, \theta}}_i$ and $A_i^{\pi_\theta}$ are the advantage functions of the policy $\pi_{i, \theta}$ and the main effect $\pi_\theta$, respectively, both corresponding to the MDP $(\cS, \cA, P_i, r_i, \gamma_i, \zeta_i)$. Also, $\sigma^{(s, a)}_{i, \pi_\theta}$ is the state-action visitation measure induced by the main effect $\pi_\theta$ defined in Definition \ref{def::meta_visit}, and $\sigma_{\pi_{i, \theta}}$ is the state-action visitation measure induced by the policy $\pi_{i, \theta}$, both corresponding to the MDP $(\cS, \cA, P_i, r_i, \gamma_i, \zeta_i)$.
%, $\ud \varsigma_{i, \theta}/\ud \sigma_{\pi_{i, \theta}}$ is the Radon-Nikodym derivative, and
%\#\label{eq::def_u}
%u_{i, \theta}(s', a') = \EE_{(s', a', s, a)\sim\rho_{i, \theta}}\bigl[ A_i^{\pi_{i, \theta}}(s, a)\,\big| s', a'\,\bigr].
%\#
%Here $\sigma_{\pi_{i, \theta}}$ is the state-action visitation measure induced by the policy $\pi_{i, \theta}$, and $\sigma_{i, \theta}^{(s, a)}$, $\rho_{i, \theta}$, and $\varsigma_{i, \theta}$ are the visitation measures defined in Definition \ref{def::meta_visit}.
\end{proposition}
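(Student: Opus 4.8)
The plan is to differentiate $L$ by the chain rule, reduce to a per-subtask computation, and then apply the policy gradient theorem twice: once to $J_i(\pi_{i,\theta})$ viewed as the objective of the parametric policy $\pi_{i,\theta}$, and once, nested inside, to the action-value function $Q_i^{\pi_\theta}$ of the main effect. Since $L(\theta) = n^{-1}\sum_{i=1}^n J_i(\pi_{i,\theta})$ and the subtasks decouple, it suffices to compute $\nabla_\theta J_i(\pi_{i,\theta})$ for each $i$. The crucial first observation is that although $\mA_\theta$ is defined through an $\argmax$, its maximizer has the closed form \eqref{eq::PPO_optimal}, so $\pi_{i,\theta}$ is itself an energy-based parametric policy in $\theta$ and the policy gradient theorem \citep{sutton2018reinforcement} applies directly, giving
\[
\nabla_\theta J_i(\pi_{i,\theta}) \propto \EE_{(s,a)\sim\sigma_{\pi_{i,\theta}}}\bigl[\nabla_\theta\log\pi_{i,\theta}(a\given s)\cdot A_i^{\pi_{i,\theta}}(s,a)\bigr],
\]
where the advantage $A_i^{\pi_{i,\theta}}$ replaces the action-value $Q_i^{\pi_{i,\theta}}$ because subtracting the state-dependent baseline $V_i^{\pi_{i,\theta}}(s)$ leaves the expectation unchanged, as $\EE_{a\sim\pi_{i,\theta}(\cdot\given s)}[\nabla_\theta\log\pi_{i,\theta}(a\given s)] = 0$.

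Next I would compute $\nabla_\theta\log\pi_{i,\theta}(a\given s)$ from \eqref{eq::PPO_optimal}. Writing $\log\pi_{i,\theta}(a\given s) = 1/\tau\cdot\phi(s,a)^\top\theta + \eta\cdot Q_i^{\pi_\theta}(s,a) - \log Z_{i,\theta}(s)$, where $Z_{i,\theta}(s)$ is the normalizing constant, differentiation yields three contributions: the explicit feature term $1/\tau\cdot\phi(s,a)$, the implicit term $\eta\cdot\nabla_\theta Q_i^{\pi_\theta}(s,a)$ arising because the prior's action-value enters the inner PPO step, and the log-normalizer gradient $\nabla_\theta\log Z_{i,\theta}(s)$. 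The last term depends on $s$ only, so it is again annihilated by the zero-mean property $\EE_{a\sim\pi_{i,\theta}(\cdot\given s)}[A_i^{\pi_{i,\theta}}(s,a)] = 0$ once inserted into the expectation above. Thus the effective integrand is $(1/\tau\cdot\phi(s,a) + \eta\cdot\nabla_\theta Q_i^{\pi_\theta}(s,a))\cdot A_i^{\pi_{i,\theta}}(s,a)$, and it remains only to evaluate $\nabla_\theta Q_i^{\pi_\theta}$.

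The main obstacle is precisely this gradient of the prior's action-value function, $\nabla_\theta Q_i^{\pi_\theta}(s,a)$, which encodes how the inner-level solution reacts to the shared prior through the bilevel coupling. I would handle it by a second application of the policy gradient machinery: starting from the Bellman relation $Q_i^{\pi_\theta}(s,a) = (1-\gamma_i)\cdot r_i(s,a) + \gamma_i\cdot\EE_{s'\sim P_i(\cdot\given s,a)}[V_i^{\pi_\theta}(s')]$, differentiating in $\theta$, and unrolling the resulting recursion for $\nabla_\theta V_i^{\pi_\theta}$ produces a discounted sum of $\nabla_\theta\log\pi_\theta\cdot Q_i^{\pi_\theta}$ along trajectories started from $s_0\sim P_i(\cdot\given s,a)$, i.e., an expectation against the initialized visitation measure $\sigma^{(s,a)}_{i,\pi_\theta}$ of Definition \ref{def::meta_visit}. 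Invoking the zero-mean advantage property a third time lets me replace $Q_i^{\pi_\theta}$ by $A_i^{\pi_\theta}$ and simplify $\nabla_\theta\log\pi_\theta(a'\given s') = 1/\tau\cdot(\phi(s',a') - \EE_{\pi_\theta(\cdot\given s')}[\phi(s',\cdot)])$ to $1/\tau\cdot\phi(s',a')$, giving $\nabla_\theta Q_i^{\pi_\theta}(s,a) \propto \gamma_i/\tau\cdot\EE_{(s',a')\sim\sigma^{(s,a)}_{i,\pi_\theta}}[\phi(s',a')\cdot A_i^{\pi_\theta}(s',a')]$. Substituting this back identifies the bracketed quantity with $h_{i,\theta}$ of \eqref{eq::def_h}, which finishes the proof. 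Throughout, the only genuinely error-prone step is tracking the $(1-\gamma_i)$ normalization factors induced by the definitions of $V_i^\pi$ and $Q_i^\pi$ in \eqref{eq::def_V}--\eqref{eq::def_Q} and of the visitation measures in \eqref{eq::def_visit} and \eqref{eq::def_visit_m_init}.
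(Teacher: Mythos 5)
Your proposal follows essentially the same route as the paper's proof: apply the policy gradient theorem to $J_i(\pi_{i,\theta})$ using the closed form \eqref{eq::PPO_optimal} of $\pi_{i,\theta}$, then differentiate the Bellman relation for $Q_i^{\pi_\theta}$ and apply the policy gradient theorem a second time over the initialized visitation measure $\sigma^{(s,a)}_{i,\pi_\theta}$, with the zero-mean score/advantage properties annihilating the log-normalizer and baseline terms. The only differences are presentational: you spell out why the normalizer terms vanish (the paper leaves this implicit), and your hedged $\propto$'s resolve to exact equalities under the paper's $(1-\gamma_i)$-normalized conventions via Lemma \ref{lem::pg_thm}, yielding precisely the constants $1/\tau$ and $\eta\cdot\gamma_i/\tau$ in \eqref{eq::def_h}.
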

\begin{proof}
See \S\ref{pf::meta_grad} for a detailed proof.
\end{proof}
%We summarize meta-RL in Algorithm \ref{alg::meta_rl}.
In the sequel, we assume without loss of generality that the action-value function $Q^\pi$ is available once we obtain the policy $\pi$, and the expectations over state-action visitation measures in \eqref{eq::meta_grad_thm} and \eqref{eq::def_h} of Theorem \ref{prop::meta_grad} are available once we obtain the policies $\{\pi_{i, \theta}\}_{i\in[n]}$ and the main effect $\pi_\theta$. We summarize meta-RL in Algorithm \ref{alg::meta_rl}.~In practice, we can estimate the action-value functions by temporal difference learning \citep{sutton1988learning} and the expectations over the visitation measures by Monte Carlo sampling \citep{konda2002actor}.
\begin{algorithm}[htpb]
\caption{Meta-RL}
\begin{algorithmic}[1]\label{alg::meta_rl}
\REQUIRE MDPs $\{v\}_{i\in[n]}$ sampled from the task distribution $\iota$, feature mapping $\phi$, number of iterations $T$, learning rate $\{\alpha_\ell\}_{\ell\in[T]}$,  temperature parameter $\tau$, tuning parameter $\eta$, initial parameter $\theta_0$. 
\FOR{$\ell = 0, \ldots, T-1$}
\FOR{$i \in [n]$}
\STATE Obtain the action-value function $Q^{\pi_{\theta_\ell}}_i$ and the advantage function $A^{\pi_{\theta_\ell}}_i$ corresponding to the MDP $(\cS, \cA, P_i, r_i, \gamma_i, \zeta_i)$ and the main effect $\pi_{\theta_\ell}$. 
\STATE Update the policy $\pi_{i, \theta_\ell}(\cdot \given s) \propto \exp\bigl(1/\tau\cdot \phi(s, \cdot)^\top \theta_\ell + \eta \cdot Q^{\pi_{\theta_\ell}}_i(s, \cdot)\bigr)$.
\STATE Obtain the advantage function $A^{\pi_{i, \theta_\ell}}_i$ corresponding to the MDP  $(\cS, \cA, P_i, r_i, \gamma_i, \zeta_i)$.
\STATE {Compute the auxiliary function \$
h_{i, \theta_\ell}(s, a) \leftarrow 1/\tau\cdot\phi(s, a) + \gamma_i\cdot\eta/\tau\cdot \EE_{(s', a')\sim \sigma^{(s, a)}_{i, \pi_{\theta_\ell}}}\bigl[\phi(s', a')\cdot A^{\pi_{\theta_\ell}}_i(s', a')\bigr].\$}
\ENDFOR
\STATE Compute the gradient of the meta-objective \$
\nabla_\theta L(\theta_\ell) \leftarrow \frac{1}{n}\cdot\sum^n_{i = 1} \EE_{(s, a)\sim \sigma_{\pi_{i, \theta_\ell}}}\bigl[h_{i, \theta_\ell}(s, a)\cdot A_i^{\pi_{i, \theta_\ell}}(s, a)\bigr].
\$
\STATE Update the parameter of the main effect $\theta_{\ell+1} \leftarrow \theta_\ell + \alpha_\ell\cdot\nabla_\theta L(\theta_\ell)$.
\STATE Update the main effect $\pi_{\theta_{\ell+1}}(\cdot\given s) \propto \exp\bigl(1/\tau\cdot \phi(s, \cdot)^\top \theta_{\ell+1}\bigr)$.
\ENDFOR
\STATE {\bf Output:} $\theta_T$ and $\pi_{\theta_T}$.
\end{algorithmic}
\end{algorithm}

\subsection{Theoretical Results}
In this section, we analyze the global optimality of the $\epsilon$-stationary point attained by meta-RL (Algorithm \ref{alg::meta_rl}). In the sequel, we assume that the reward functions $\{r_i\}_{i\in[n]}$ are upper bounded by an absolute constant $Q_{\max} > 0$ in absolute value. It then follows from \eqref{eq::def_V} and \eqref{eq::def_Q} that $|V^\pi_i(s, a)|$ and $|Q^\pi_i(s, a)|$ are upper bounded by $Q_{\max}$ for all $i\in[n]$ and $(s, a)\in\cS\times\cA$. Here we define $Q^\pi_i$ and $V^\pi_i$ as the state- and action-value functions of the policy $\pi$, respectively, corresponding to the MDP $(\cS, \cA, P_i, r_i, \gamma_i, \zeta_i)$.

To analyze the global optimality of meta-RL, we define the following meta-visitation measures induced by the main effect $\pi_\theta$.
\begin{definition}[Meta-Visitation Measures]
\label{def::meta_meta_visit}
We define the joint meta-visitation measure $\rho_{i, \pi_\theta}$ induced by the main effect $\pi_\theta$ and the policy $\pi_{i, \theta}$ as follows,
\#\label{eq::def_meta_joint}
\rho_{i, \pi_\theta}(s', a', s, a) = \sigma^{(s, a)}_{i, \pi_\theta}(s', a')\cdot \sigma_{\pi_{i, \theta}}(s, a), \quad\forall (s', a', s, a)\in\cS\times\cA\times\cS\times\cA.
\#
We further define the meta-visitation measure $\varsigma_{i, \pi_\theta}$ as the marginal distribution of the joint meta-visitation measure $\rho_{i, \pi_\theta}$ of $(s', a')$, that is,
\#\label{eq::def_meta_visit}
\varsigma_{i, \pi_\theta}(s', a') = \EE_{(s, a)\sim \sigma_{\pi_{i, \theta}}}\bigl[\sigma^{(s, a)}_{i, \pi_\theta}(s', a')\bigr],\quad \forall (s', a')\in \cS\times\cA.
\#
In addition, we define the mixed meta-visitation measure $\varrho_{\pi_\theta}$ over all the subtasks as follows,
\#\label{eq::def_mixed_meta}
\varrho_{\pi_\theta}(s', a') = \frac{1}{n} \cdot\sum^n_{i = 1}\varsigma_{i, \pi_\theta}(s', a'),\quad \forall (s', a')\in \cS\times\cA.
\#
\end{definition}
In other words, the meta-visitation measure $\varsigma_{i, \pi_\theta}$ is the state-action visitation measure induced by $\pi_\theta$ given the transition kernel $P_i$, the discount factor $\gamma_i$, and the initial state distribution $s_0\sim \EE_{(s, a)\sim \sigma_{\pi_{i, \theta}}}[P_i(\cdot\given s, a)]$.

In what follows, we impose an assumption on the meta-visitation measures defined in Definition \ref{def::meta_meta_visit}.
\begin{assumption}[Regularity Condition on Meta-Visitation Measures]
\label{asu::concen_coeff}
We assume for all $\theta \in \RR^d$ and $ i \in [n]$ that
\#
\label{eq::def_C_0_1}\EE_{(s', a')\sim \varrho_{\pi_\theta} }\Bigl[\bigl(\ud \sigma_{\pi_{i, \theta}} / \ud \varrho_{\pi_\theta}(s', a')\bigr)^2 \Bigr] \leq C_0^2,\\
\label{eq::def_C_0_2}\EE_{(s', a')\sim \varrho_{\pi_\theta} }\Bigl[\bigl(\ud \varsigma_{i, \pi_{\theta}} / \ud \varrho_{\pi_\theta}(s', a')\bigr)^2 \Bigr] \leq C_0^2, \#
where $C_0>0$ is an absolute constant . Here $\varsigma_{i, \pi_{\theta}}$ and $\varrho_{\pi_\theta}$ are the meta-visitation measure and the mixed meta-visitation measure induced by the main effect $\pi_\theta$, which are defined in \eqref{eq::def_meta_visit}  and \eqref{eq::def_mixed_meta} of Definition \ref{def::meta_meta_visit}, respectively. Meanwhile, $\sigma_{\pi_{i, \theta}}$ is the state-action visitation measure induced by the policy $\pi_{i, \theta}$, which is defined in \eqref{eq::def_visit}.
Here $\ud \sigma_{\pi_{i, \theta}} / \ud \varrho_{\pi_\theta}$ and $\ud  \varsigma_{i, \pi_{\theta}} / \ud \varrho_{\pi_\theta}$ are the Radon-Nikodym derivatives.
\end{assumption}
According to \eqref{eq::def_mixed_meta} of Definition \ref{def::meta_meta_visit}, the upper bound in \eqref{eq::def_C_0_1} of Assumption \ref{asu::concen_coeff} holds if the $L_2(\varrho_{\pi_\theta})$-norms of $\ud \sigma_{\pi_{i, \theta}} / \ud \varsigma_{j, \pi_\theta}$ is upper bounded by $C_0$ for all $i, j \in [n]$. 
For $i = j$, note that $\pi_{i, \theta}$ is obtained by one step of PPO with $\pi_\theta$ as the starting point. Thus, for a sufficiently small tuning parameter $\eta$ in \eqref{eq::PPO_optimal}, $\pi_{i, \theta}$ is close to $\pi_\theta$. Hence, the assumption that $\ud \sigma_{\pi_{i, \theta}} / \ud \varsigma_{j, \pi_\theta}$ has an upper bounded $L_2(\varrho_{\pi_\theta})$-norm for all $i=j$ is a mild regularity condition. For $i \neq j$, to ensure the upper bound of the $L_2(\varrho_{\pi_\theta})$-norms of $\ud \sigma_{\pi_{i, \theta}} / \ud \varsigma_{j, \pi_\theta}$ in \eqref{eq::def_C_0_1}, Assumption \ref{asu::concen_coeff} requires the task distribution $\iota$ to generate similar MDPs so that the meta-visitation measures $\{\varsigma_{i, \pi_\theta}\}_{i\in[n]}$ are similar across all the subtasks indexed by $i\in[n]$. Similarly, to ensure the upper bound in \eqref{eq::def_C_0_2}, Assumption \ref{asu::concen_coeff} also requires that the meta-visitation measures $\{\varsigma_{i, \pi_\theta}\}_{i\in[n]}$ are similar across all the subtasks indexed by $i\in[n]$.

The following theorem characterizes the optimality gap of the $\epsilon$-stationary point attained by meta-RL (Algorithm \ref{alg::meta_rl}). Let $\theta^*$ be a global maximizer of the meta-objective $L(\theta)$ defined in \eqref{eq::def_meta_obj_rl}. For all $(s', a')\in\cS\times\cA$ and $\omega \in \RR^d$, we define
\#\label{eq::def_f}
f_\omega(s', a') = \biggl(\sum^n_{i = 1}\frac{A^{\pi_{i, \omega}}_i(s', a')}{1 - \gamma_i}\cdot \frac{\ud \sigma_{\pi_{i, \theta^*}}}{ \ud \varrho_{\pi_\omega}}(s', a')\biggr)\bigg/\biggl(\sum^n_{i = 1} g_{i, \omega}(s', a') \cdot \frac{\ud \varsigma_{i, \pi_{\omega}}}{ \ud \varrho_{\pi_\omega}}(s', a')\biggr),
\#
where we defined $g_{i, \omega}$ as follows, 
\$
g_{i, \omega}(s', a') = 1/\tau \cdot A^{\pi_{i, \omega}}_i(s', a') \cdot (\ud \sigma_{\pi_{i, \omega}}/\ud \varsigma_{i, \pi_\omega})(s', a') + \gamma_i\cdot\eta/\tau\cdot G_{i, \pi_\omega}(s', a')\cdot A^{\pi_\omega}_i(s', a').
\$
Here $\tau$ is the temperature parameter in \eqref{eq::def_leader}, $\eta$ is the tuning parameter defined in \eqref{eq::def_PPO}, $A^{\pi_{i, \omega}}_i$ and $A^{\pi_\omega}_i$ are the advantage functions of the policy $\pi_{i, \omega}$ and the main effect $\pi_\omega$, respectively, corresponding to the MDP $(\cS, \cA, P_i, r_i, \gamma_i, \zeta_i)$, and $G_{i, \pi_\omega}$ is defined as follows,
\#\label{eq::meta_grad_def_A}
G_{i, \pi_\omega}(s', a') = \EE_{(s', a', s, a)\sim \rho_{i, \pi_\omega}}\bigl[A^{\pi_{i, \omega}}_i(s, a) \,\big|\, s', a'\bigr],
\#
where $\rho_{i, \pi_\omega}$ is the joint meta-visitation measure defined in \eqref{eq::def_meta_joint} of Definition \ref{def::meta_meta_visit}.

\begin{theorem}[Optimality Gap of $\epsilon$-Stationary Point]
\label{thm::eps_opt}

Under Assumption \ref{asu::concen_coeff}, for all $R > 0$, $\omega \in \RR^d$, and $\epsilon > 0$ such that
\$%#\label{eq::eps_optimal}
\nabla_\omega L(\omega)^\top v \leq \epsilon, \quad\forall v \in \cB = \{\theta \in \RR^d: \|\theta\|_2 \leq 1\},
\$%#
we have
\#\label{eq::no_spurious}
L(\theta^*) - L(\omega) &\leq R\cdot\epsilon + 2C_0\cdot Q_{\max}/\tau\cdot(1 + 2Q_{\max}\cdot\overline\gamma\cdot\eta)\cdot \inf_{v\in\cB_R} \|f_{\omega}(\cdot, \cdot) - \phi(\cdot, \cdot)^\top v\|_{\varrho_{\pi_\omega}},
\#
where $\cB_R = \{\theta\in\RR^d: \|\theta\|_2\leq R\}$, $\overline \gamma = (\sum^n_{i = 1}\gamma_i)/n$, $C_0$ is defined in Assumption \ref{asu::concen_coeff}, $\tau$ is the temperature parameter in \eqref{eq::def_leader}, $\eta$ is the tuning parameter defined in \eqref{eq::def_PPO}, and $Q_{\max}$ is the upper bound of the reward functions $\{r_i\}_{i\in[n]}$ in absolute value.
\end{theorem}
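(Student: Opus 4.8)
The plan is to convert the optimality gap into an expectation against the mixed meta-visitation measure $\varrho_{\pi_\omega}$, recognize that this expectation is precisely $f_\omega$ reweighted by a "denominator" that also governs the meta-gradient, and then play the two representations against each other using $\epsilon$-stationarity and Cauchy--Schwarz. Concretely, I would write $L(\theta^*)-L(\omega) = \frac1n\sum_i[J_i(\pi_{i,\theta^*})-J_i(\pi_{i,\omega})]$ and apply the performance difference lemma to each subtask, giving $J_i(\pi_{i,\theta^*})-J_i(\pi_{i,\omega}) = (1-\gamma_i)^{-1}\EE_{(s',a')\sim\sigma_{\pi_{i,\theta^*}}}[A_i^{\pi_{i,\omega}}(s',a')]$; this functional identity is the one-point-monotonicity ingredient, where the infinite-dimensional geometry of $J_i$ enters. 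Changing measure from $\sigma_{\pi_{i,\theta^*}}$ to $\varrho_{\pi_\omega}$ through the Radon--Nikodym derivative $\ud\sigma_{\pi_{i,\theta^*}}/\ud\varrho_{\pi_\omega}$ and summing over $i$ produces exactly the numerator of $f_\omega$ in \eqref{eq::def_f}, which I will call $N_\omega$; that is, $L(\theta^*)-L(\omega) = \frac1n\EE_{\varrho_{\pi_\omega}}[N_\omega] = \frac1n\EE_{\varrho_{\pi_\omega}}[f_\omega\cdot D_\omega]$, where $D_\omega$ denotes the denominator of \eqref{eq::def_f} and I have used $N_\omega = f_\omega D_\omega$.

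Second, I would show that this same $D_\omega$ appears in the meta-gradient. Starting from Proposition \ref{prop::meta_grad}, I substitute $h_{i,\omega}$ from \eqref{eq::def_h} into \eqref{eq::meta_grad_thm}. The first summand $\tau^{-1}\phi\cdot A_i^{\pi_{i,\omega}}$, integrated against $\sigma_{\pi_{i,\omega}}$, becomes after inserting $\ud\sigma_{\pi_{i,\omega}}/\ud\varsigma_{i,\pi_\omega}$ an integral against $\varsigma_{i,\pi_\omega}$; the second summand, which couples $(s,a)$ and $(s',a')$ through $\sigma^{(s,a)}_{i,\pi_\omega}$, is an expectation against the joint measure $\rho_{i,\pi_\omega}$, and integrating out $(s,a)$ conditionally on $(s',a')$ introduces $G_{i,\pi_\omega}$ from \eqref{eq::meta_grad_def_A}. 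Collecting the two pieces recovers $g_{i,\omega}$ and yields $\nabla_\omega L(\omega) = \frac1n\EE_{\varrho_{\pi_\omega}}[\phi\cdot D_\omega]$, so that $\nabla_\omega L(\omega)^\top v = \frac1n\EE_{\varrho_{\pi_\omega}}[\phi^\top v\cdot D_\omega]$ for every $v$. Subtracting this from the optimality-gap expression gives the clean identity
\[
L(\theta^*)-L(\omega)-\nabla_\omega L(\omega)^\top v = \frac1n\EE_{\varrho_{\pi_\omega}}\bigl[(f_\omega-\phi^\top v)\cdot D_\omega\bigr], \qquad \forall\, v\in\RR^d.
\]

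Finally, I close the argument. For any $v\in\cB_R$ we have $v/R\in\cB$, so the $\epsilon$-stationarity condition gives $\nabla_\omega L(\omega)^\top v\le R\epsilon$. Applying Cauchy--Schwarz in $L_2(\varrho_{\pi_\omega})$ bounds the right-hand side above by $\frac1n\|f_\omega-\phi^\top v\|_{\varrho_{\pi_\omega}}\cdot\|D_\omega\|_{\varrho_{\pi_\omega}}$, and taking the infimum over $v\in\cB_R$ isolates the representation error. It then remains to show $\|D_\omega\|_{\varrho_{\pi_\omega}}\le n\cdot\tfrac{2C_0 Q_{\max}}{\tau}(1+2Q_{\max}\overline\gamma\eta)$; for this I would use the triangle inequality over $i$, the chain rule $(\ud\sigma_{\pi_{i,\omega}}/\ud\varsigma_{i,\pi_\omega})(\ud\varsigma_{i,\pi_\omega}/\ud\varrho_{\pi_\omega}) = \ud\sigma_{\pi_{i,\omega}}/\ud\varrho_{\pi_\omega}$ to reduce the two parts of $g_{i,\omega}$ to the Radon--Nikodym derivatives controlled in \eqref{eq::def_C_0_1} and \eqref{eq::def_C_0_2}, together with the pointwise bounds $|A_i^\pi|\le 2Q_{\max}$ and $|G_{i,\pi_\omega}|\le 2Q_{\max}$ and the identity $\sum_i\gamma_i = n\overline\gamma$.

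The main obstacle is the second step: matching the meta-gradient to the denominator $D_\omega$ requires careful bookkeeping of three distinct visitation measures ($\sigma_{\pi_{i,\omega}}$, $\varsigma_{i,\pi_\omega}$, and the joint $\rho_{i,\pi_\omega}$) and inserting the correct Radon--Nikodym derivatives so that every subtask contributes against the common reference $\varrho_{\pi_\omega}$. The function $f_\omega$ in \eqref{eq::def_f} is reverse-engineered precisely so that $N_\omega/D_\omega$ makes the performance-difference term and the stationarity term share a common factor; verifying that the grouping in \eqref{eq::def_f} is simultaneously consistent with \eqref{eq::meta_grad_thm} and with the performance difference lemma is the delicate part, whereas the concluding norm bound is routine given Assumption \ref{asu::concen_coeff}.
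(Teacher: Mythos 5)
Your proposal is correct and follows essentially the same route as the paper's proof: performance difference lemma per subtask, the refined gradient representation (the paper's Lemma \ref{lem::meta_grad_refine}, which you re-derive from Proposition \ref{prop::meta_grad}), change of measure to $\varrho_{\pi_\omega}$, Cauchy--Schwarz against the denominator of $f_\omega$, and the norm bound via Assumption \ref{asu::concen_coeff} with $|A^\pi_i|\leq 2Q_{\max}$, $|G_{i,\pi_\omega}|\leq 2Q_{\max}$. The only (cosmetic) difference is that you package the argument as an exact identity $L(\theta^*)-L(\omega)-\nabla_\omega L(\omega)^\top v=\frac{1}{n}\EE_{\varrho_{\pi_\omega}}[(f_\omega-\phi^\top v)\cdot D_\omega]$ before invoking stationarity, whereas the paper carries the stationarity term as an inequality from the start; the algebra and all key ingredients are identical.
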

\begin{proof}
See \S\ref{pf::thm_eps_opt} for a detailed proof.
\end{proof}
By Theorem \ref{thm::eps_opt}, the global optimality of the $\epsilon$-stationary point $\omega$ hinges on the representation power of the linear class $\{\phi(\cdot)^\top\theta: \theta \in\cB_R\}$. More specifically, if the function $f_\omega$ defined in \eqref{eq::def_f} is well approximated by $\phi(\cdot)^\top \theta$ for a parameter $\theta\in\cB_R$, then $\omega$ is approximately globally optimal.

\section{Meta-Supervised Learning}
\label{sec::MSL}
In this section, we present the analysis of meta-supervised learning (meta-SL). We first define the detailed problem setup of meta-SL and present a meta-SL algorithm. We then characterize the global optimality of the stationary point attained by such an algorithm. 
\subsection{Problem Setup and Algorithm}
In meta-SL, the meta-learner observes a sample of supervised learning subtasks $\{(\cD_i, \ell, \cH)\}_{i \in [n]}$ drawn independently from a task distribution $\iota$. Specifically, each subtask $(\cD_i, \ell, \cH)$ consists of a distribution $\cD_i$ over $\cX\times\cY$, where $\cY\subseteq \RR$, a loss function $\ell:\cH\times\cX\times\cY\mapsto\RR$, and a hypothesis class $\cH$. Each hypothesis $h \in \cH$ is a mapping from $\cX$ to $\cY$. The goal of the supervised learning subtask $(\cD_i, \ell, \cH)$ is to obtain the following hypothesis,
\#\label{eq::sl_task}
h^*_i = \argmin_{h\in\cH} R_i(h) = \argmin_{h\in\cH} \EE_{z\sim \cD_i}\bigl[ \ell(h, z)\bigr],
\# 
where $R_i(h) = \EE_{z\sim \cD_i}[ \ell(h, z)]$ is the risk of $h\in\cH$. To approximately attain the minimizer defined in \eqref{eq::sl_task}, we parameterize the hypothesis class $\cH$ by $\cH_\theta$ with a feature mapping $\phi:\cX\mapsto \RR^d$ as follows, \#\label{eq::def_H_theta}
\cH_{\theta} = \bigl\{ h_\theta(\cdot) = \phi(\cdot)^\top\theta : \theta \in \RR^d\bigr\},
\#
and minimize $R_i(h_\theta)$ over $\theta\in\RR^d$. We set the algorithm $\mA_\theta$ in \eqref{eq::def_meta_eL}, which solves $(\cD_i, \ell, \cH)$, to be one step of gradient descent with the starting point  $\theta$, that is,
\#\label{eq::def_sl_alg}
\mA_\theta(\cD_i, \ell, \cH) = h_{\theta - \eta \cdot \nabla_\theta R_i(h_\theta)}.
\#
Here $\eta$ is the learning rate of $\mA_\theta$. The goal of meta-SL is to minimize the following meta-objective,
\#\label{eq::def_sl_meta_obj}
L(\theta) = \frac{1}{n}\cdot \sum^n_{i = 1} R_i( h_{\theta_i} ), \quad \text{\rm where}~h_{\theta_i} = \mA_\theta(\cD_i, \cR, \cH).
\# 
To minimize $L(\theta)$ defined in \eqref{eq::def_sl_meta_obj}, we adopt gradient descent, which iteratively updates $\theta_\ell$ as follows,
\#\label{eq::def_maml_sl_alg}
\theta_{\ell+1} \leftarrow \theta_{\ell} - \alpha_\ell\cdot \nabla_\theta L(\theta_\ell), \quad\text{\rm for}~\ell = 0, 1, \ldots, T-1.
\#
Here $\nabla_\theta L(\theta_\ell)$ is the gradient of the meta-objective at $\theta_\ell$, $\alpha_\ell$ is the learning rate at the $\ell$-th iteration, and $T$ is the number of iterations.~\cite{fallah2019convergence} show that the update defined in \eqref{eq::def_maml_sl_alg} converges to an $\epsilon$-stationary point of the meta-objective $L$ under a smoothness assumption on $L$. In what follows, we characterize the optimality gap of such an $\epsilon$-stationary point.

We first introduce the Fr\'echet differentiability of the risk $R_i$ in \eqref{eq::sl_task}.
\begin{definition}[Fr\'echet Differentiability]
\label{def::fre_diff}
Let $\cH$ be a Banach space with the norm $\|\cdot\|_\cH$. A functional $R: \cH \mapsto \RR$ is Fr\'echet differentiable at $h \in \cH$ if it holds for a bounded linear operator $A: \cH \mapsto \RR$ that
\#\label{eq::Fre_diff}
\lim_{h_1 \in \cH, ~\|h_1\|_\cH \to 0} |R(h + h_1) - R(h) - A(h_1)|/\|h_1\|_\cH \to 0.
\# 
We define $A$ as the Fr\'echet derivative of $R$ at $h \in \cH$, and write
\#\label{eq::D_operator}
D_h R(\cdot)= A(\cdot).
\#
\end{definition}
In what follows, we assume that the hypothesis class $\cH$ with the $L_2(\rho)$-inner product is a Hilbert space, where $\rho$ is a distribution over $\cX$. Thus, following from the definition of the Fr\'echet derivative in Definition \ref{def::fre_diff} and the Rieze representation theorem \citep{rudin2006real}, it holds for an $a_{h} \in \cH$ that
\#\label{eq::def_Hil}
D_h R(\cdot) = A(\cdot) = \langle \cdot, a_{h} \rangle_{\cH},
\#
Here we denote by $\langle f, g \rangle_\cH = \int_\cX f(x)\cdot g(x) \ud \rho$ the $L_2(\rho)$-inner product. In what follows, we write
\#\label{eq::def_sl_funcdiff}
(\delta R/\delta h)  (x) = a_{h}(x), \quad \forall x \in \cX, h \in \cH.
\#
We refer to \S\ref{sec::sl_squaredloss} for an example of the Fr\'echet derivative defined in \eqref{eq::def_sl_funcdiff}. We assume that $\cH$ contains the parameterized hypothesis class $\cH_\theta$ defined in \eqref{eq::def_H_theta}, and impose the following assumption on the convexity and the Fr\'echet differentiability of the risk $R_i$ in \eqref{eq::sl_task}.
\begin{assumption}[Convex and Differentiable Risk]
\label{asu::diff_risk}
We assume for all $i\in[n]$ that the risk $R_i$ defined in \eqref{eq::sl_task} is convex and Fr\'echet differentiable on $\cH$.
\end{assumption}
Assumption \ref{asu::diff_risk} is a mild regularity condition on the risk $R_i$, which holds for the risks induced by commonly used loss functions, such as the squared loss and the cross entropy loss. Specifically, the convexity of $R_i$ holds if the loss function $\ell(h, z)$ is convex in $h\in\cH$ for all $z \in \cZ$ \citep{rockafellar1968integrals}. 

The following proposition holds under Assumption \ref{asu::diff_risk}.

\begin{proposition}[Convex and Differentiable Risk \citep{ekeland1999convex}]
\label{prop::conv}
Under Assumption \ref{asu::diff_risk}, it holds for all $i\in[n]$ that
\$%#\label{eq::prop_convexity}
R_i(h_1) \geq R_i(h_2) + \langle \delta R_i/\delta h_2,  h_1 - h_2 \rangle_\cH, \quad\forall h_1, h_2 \in\cH.
\$%#
\end{proposition}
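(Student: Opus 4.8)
The plan is to establish the first-order convexity inequality by combining the definition of convexity of $R_i$ with the identification of the Fr\'echet derivative as the directional (G\^ateaux) derivative along the segment joining $h_2$ to $h_1$. First I would fix $i \in [n]$ and $h_1, h_2 \in \cH$. If $h_1 = h_2$, both sides of the claimed inequality coincide, so it suffices to treat the case $g := h_1 - h_2 \neq 0$, for which $\|g\|_\cH > 0$.

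For the convexity step, I would note that for every $t \in (0, 1)$,
\[
R_i(h_2 + t g) = R_i\bigl((1 - t)\cdot h_2 + t \cdot h_1\bigr) \leq (1 - t)\cdot R_i(h_2) + t \cdot R_i(h_1),
\]
where the inequality is the definition of convexity of $R_i$ (Assumption \ref{asu::diff_risk}). Subtracting $R_i(h_2)$ and dividing by $t > 0$ yields the secant inequality
\[
\frac{R_i(h_2 + t g) - R_i(h_2)}{t} \leq R_i(h_1) - R_i(h_2).
\]
For the differentiability step, I would apply the Fr\'echet expansion \eqref{eq::Fre_diff} at $h_2$ with increment $t g$ and use the linearity of $D_{h_2} R_i$ to write $R_i(h_2 + t g) - R_i(h_2) = t\cdot D_{h_2} R_i(g) + \psi(t)$, where $\psi(t)/(t \|g\|_\cH) \to 0$ as $t \to 0^+$. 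Since $\|g\|_\cH$ is a fixed positive constant, this forces $\psi(t)/t \to 0$, so the secant quotient converges to $D_{h_2} R_i(g)$ as $t \to 0^+$. Passing to the limit in the secant inequality gives $D_{h_2} R_i(g) \leq R_i(h_1) - R_i(h_2)$. Finally, by the Riesz representation \eqref{eq::def_Hil}, the notation \eqref{eq::def_sl_funcdiff}, and the symmetry of the $L_2(\rho)$-inner product, $D_{h_2} R_i(g) = \langle \delta R_i/\delta h_2, h_1 - h_2 \rangle_\cH$, and rearranging yields the claim.

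I expect the main obstacle to be the passage to the limit in the differentiability step, namely ensuring that the Fr\'echet derivative coincides with the directional derivative along $g$. The little-$o$ remainder in \eqref{eq::Fre_diff} is controlled as the \emph{norm} of the increment tends to zero, whereas here the increment $t g$ shrinks along a \emph{fixed} direction. The key observation is that substituting $t g$ makes the remainder $o(t)$ for fixed $g$, so dividing by $t$ still sends it to zero; I would spell this out to avoid conflating the Fr\'echet derivative with the weaker G\^ateaux derivative. All remaining manipulations are elementary rearrangements.
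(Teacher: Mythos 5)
Your proof is correct. The paper does not give its own argument for this proposition—it simply cites \cite{ekeland1999convex}—and what you have written is precisely the standard argument that citation stands for: the secant inequality from convexity, the identification of the Fréchet derivative with the directional derivative along the fixed increment $t(h_1-h_2)$ (with the $o(t)$ remainder handled correctly, since $\|h_1-h_2\|_\cH$ is a fixed positive constant), and the Riesz representation $D_{h_2}R_i(g)=\langle \delta R_i/\delta h_2, g\rangle_\cH$ from \eqref{eq::def_Hil} and \eqref{eq::def_sl_funcdiff}.
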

\begin{proof}
See \cite{ekeland1999convex} for a detailed proof.
\end{proof}

We highlight that the convexity of the risks over $h\in\cH$ does not imply the convexity of the meta-objective defined in \eqref{eq::def_sl_meta_obj}. In contrast, Proposition \ref{prop::conv} characterizes the functional geometry of the risk $R_i$ in the Hilbert space $\cH$ for all $i\in[n]$, which allows us to analyze the global optimality of meta-SL in the sequel.                                                                                                                                                                                                                                                                                                                                                                                                                                                                                                                                                                             

\subsection{Theoretical Results}
\label{sec::sl_opt}
In this section, we characterize the global optimality of the $\epsilon$-stationary point attained by meta-SL defined in \eqref{eq::def_maml_sl_alg}. Let $\theta^*$ be a global minimizer of the meta-objective $L(\theta)$ defined in \eqref{eq::def_sl_meta_obj}, and $\omega$ be the $\epsilon$-stationary point attained by meta-SL such that
\#\label{eq::sl_def_stationary}
\nabla_\omega L(\omega)^\top v \leq \epsilon, \quad\forall v \in \cB = \{\theta \in \RR^d: \|\theta\|_2 \leq 1\}.
\#
Our goal is to upper bound the optimality gap $L(\omega) - L(\theta^*)$. To this end, we first define the mixed distribution $\cM$ over all the distributions $\{\cD_i\}_{i\in[n]}$ as follows,
\#\label{eq::def_mix_sl}
\cM(x, y) = \frac{1}{n} \cdot \sum^n_{i = 1} \cD_i(x, y), \quad \forall (x, y)\in\cX\times\cY.
\#
%where recall that $\cD_i$ is the distribution in the subtask $(\cD_i, \ell, \cH)$ generated by $\tau(\ell, \cH)$ for all $i\in[n]$. 
To simplify the notation, we write $\omega_i$ and $\theta^*_i$ as the parameters that correspond to the outputs of the algorithms $\mA_{\omega}(\cD_i, \ell, \cH)$ and $\mA_{\theta^*}(\cD_i, \ell, \cH)$, respectively. More specifically, according to \eqref{eq::def_sl_alg}, we have
\#\label{eq::pf_sl_eq2}
\omega_i = \omega  - \eta \cdot \nabla_\omega R_i(h_{\omega}), \quad \theta^*_i = \theta^*  - \eta \cdot \nabla_{\theta^*} R_i(h_{\theta^*}), \quad \forall i \in [n],
\# 
where $\eta$ is the learning rate of the algorithms $\mA_{\omega}(\cD_i, \ell, \cH)$ and $\mA_{\theta^*}(\cD_i, \ell, \cH)$.

The following theorem  characterizes the optimality gap of the $\epsilon$-stationary point $\omega$ attained by meta-SL. We define for all $(x, y, x')\in\cX\times\cY\times\cX$ that
\#
\label{eq::def_sl_w} w(x, y, x') &= \frac{1}{n}\cdot \sum^n_{i = 1}  (\delta R_i/\delta h_{\omega_i})(x')\cdot  (\ud \cD_i/\ud \cM)(x, y),\\
\label{eq::def_sl_u} u(x,y, x') &= \biggl(\frac{1}{n}\cdot \sum^n_{i = 1} (\delta R_i/\delta h_{\omega_i})(x')\cdot  \bigl(h_{\omega_i}(x') - h_{\theta^*_i}(x') \bigr)\biggr) \bigg/w(x, y, x'),\\
\label{eq::def_sl_kerphi} \phi_{\ell, \omega}(x, y, x') &= \Bigl(I_d - \eta\cdot\nabla^2_{\omega}\ell\bigl(\phi(x)^\top\omega, (x, y)\bigr)\Bigr) \phi(x'),
\#
where $\ud \cD_i/\ud \cM$ is the Radon-Nikodym derivative and $\delta R_i/\delta h_{\omega_i}$ is the Fr\'echet derivative defined in \eqref{eq::def_sl_funcdiff}. 

\begin{theorem}[Optimality Gap of $\epsilon$-Stationary Point]
\label{thm::sl_opt}
Let $\theta^*$ be a global minimizer of $L(\theta)$. Also, let $\omega$ be the $\epsilon$-stationary point defined in \eqref{eq::sl_def_stationary}. Let $\ell(h_\theta(x), (x, y))$ be twice differentiable with respect to all $\theta\in\R^d$ and $(x, y)\in\cX\times\cY$. Under Assumption \ref{asu::diff_risk}, it holds for all $R > 0$ that
\#\label{eq::sl_opt_bound}
L(\omega) - L(\theta^*)\leq \underbrace{R\cdot \epsilon}_{\textstyle{\rm (i)}} + \underbrace{\| w \|_{\cM\cdot\rho}}_{\textstyle{\rm (ii)}}\cdot \underbrace{\inf_{v\in\cB_R}\|u(\cdot) - \phi_{\ell, \omega}(\cdot)^\top v\|_{\cM\cdot\rho}}_{\textstyle{\rm (iii)}},
\#
where we define $\cB_R = \{\theta \in \RR^d: \|\theta\|_2 \leq R\}$ as the ball with radius $R$ and \$\|w\|_{\cM\cdot \rho} = \biggl(\int w^2(x, y, x') \ud \cM(x, y)\ud \rho(x')\biggr)^{1/2}\$ as the $L_2(\cM\cdot\rho)$-norm of $w$. 
\end{theorem}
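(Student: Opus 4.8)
The plan is to convert the parametric optimality gap $L(\omega) - L(\theta^*)$ into a functional inner product via the convexity in Proposition~\ref{prop::conv}, and then to relate that inner product to the stationarity condition \eqref{eq::sl_def_stationary} through an explicit computation of the meta-gradient $\nabla_\omega L(\omega)$. First I would write $L(\omega) - L(\theta^*) = \frac{1}{n}\sum_{i=1}^n [R_i(h_{\omega_i}) - R_i(h_{\theta^*_i})]$ and apply Proposition~\ref{prop::conv} with $h_1 = h_{\theta^*_i}$, $h_2 = h_{\omega_i}$ to obtain, for each $i$, the bound $R_i(h_{\omega_i}) - R_i(h_{\theta^*_i}) \leq \langle \delta R_i/\delta h_{\omega_i},\, h_{\omega_i} - h_{\theta^*_i}\rangle_\cH$. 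Summing over $i$ and unfolding the $L_2(\rho)$-inner product $\langle\cdot,\cdot\rangle_\cH$ over $\cX$, the right-hand side becomes the integral over $x'$ of $\frac{1}{n}\sum_i (\delta R_i/\delta h_{\omega_i})(x')\,(h_{\omega_i}(x') - h_{\theta^*_i}(x'))$, which is exactly the product $u\cdot w$ integrated in $x'$ by the definitions \eqref{eq::def_sl_w} and \eqref{eq::def_sl_u}. Since this product does not depend on $(x,y)$ and $\int \ud\cM = 1$, I can insert the integral against $\cM$ for free and record $L(\omega) - L(\theta^*)\leq \langle u, w\rangle_{\cM\cdot\rho}$, the $L_2(\cM\cdot\rho)$-inner product.

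Next I would compute the meta-gradient. Differentiating $R_i(h_{\theta_i})$ with $\theta_i = \theta - \eta\cdot\nabla_\theta R_i(h_\theta)$ through the chain rule gives $\nabla_\theta R_i(h_{\theta_i}) = (I_d - \eta\cdot\nabla^2_\theta R_i(h_\theta))\nabla_{\theta_i}R_i(h_{\theta_i})$, where the symmetric Hessian arises from differentiating the inner gradient-descent step. The crucial identity is that the parametric gradient is the feature-weighted Fr\'echet derivative, namely $\nabla_{\omega_i}R_i(h_{\omega_i}) = \int_\cX (\delta R_i/\delta h_{\omega_i})(x')\,\phi(x')\,\ud\rho(x')$, which follows from the definition \eqref{eq::def_sl_funcdiff} of $\delta R_i/\delta h$ together with $h_\theta = \phi(\cdot)^\top\theta$. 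Writing $\nabla^2_\omega R_i(h_\omega) = \EE_{(x,y)\sim\cD_i}[\nabla^2_\omega\ell(\phi(x)^\top\omega, (x,y))]$, moving the expectation inside, and recognizing the definition \eqref{eq::def_sl_kerphi} of $\phi_{\ell,\omega}$ via the symmetry of the Hessian, I would arrive at $\nabla_\omega L(\omega)^\top v = \langle w,\, \phi_{\ell,\omega}(\cdot)^\top v\rangle_{\cM\cdot\rho}$ after converting $\ud\cD_i$ to $\ud\cM$ through the Radon-Nikodym derivative appearing in \eqref{eq::def_sl_w}.

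Finally I would combine the two displays. For any $v\in\cB_R$, I decompose $\langle u, w\rangle_{\cM\cdot\rho} = \langle u - \phi_{\ell,\omega}^\top v,\, w\rangle_{\cM\cdot\rho} + \langle \phi_{\ell,\omega}^\top v,\, w\rangle_{\cM\cdot\rho}$. The second term equals $\nabla_\omega L(\omega)^\top v$, which is at most $R\cdot\epsilon$ by rescaling $v$ to the unit ball $\cB$ in the stationarity condition \eqref{eq::sl_def_stationary}; the Cauchy-Schwarz inequality bounds the first term by $\|w\|_{\cM\cdot\rho}\cdot\|u - \phi_{\ell,\omega}^\top v\|_{\cM\cdot\rho}$. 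Taking the infimum over $v\in\cB_R$ then yields \eqref{eq::sl_opt_bound}. I expect the main obstacle to be the gradient computation of the second paragraph: carefully differentiating through the inner-level step, correctly relating the parametric gradient and Hessian to the Fr\'echet derivative in the Hilbert space $\cH$, and packaging the Hessian-feature interaction into $\phi_{\ell,\omega}$, all while juggling the data measure $\cD_i$ and the functional measure $\rho$ before reducing everything to the mixed measure $\cM$.
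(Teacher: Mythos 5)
Your proposal is correct and follows essentially the same route as the paper's proof: functional convexity (Proposition \ref{prop::conv}) to bound the gap by $\langle u, w\rangle_{\cM\cdot\rho}$, the chain-rule computation of $\nabla_\omega L(\omega)$ through the inner gradient step (yielding $\langle w, \phi_{\ell,\omega}^\top v\rangle_{\cM\cdot\rho}$ after the Radon--Nikodym change to $\cM$), and then stationarity plus Cauchy--Schwarz with the infimum over $\cB_R$. The only difference is cosmetic bookkeeping — you decompose the inner product after computing the gradient, while the paper inserts $R\cdot\epsilon - \nabla_\omega L(\omega)^\top(R\cdot v)$ up front — and your two-step chain rule (Jacobian of the inner step composed with the parametric gradient at $\omega_i$) is equivalent to the paper's single functional chain rule via the symmetry of the Hessian.
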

\begin{proof}
See \S\ref{pf::sl_opt} for a detailed proof.
\end{proof}

By Theorem \ref{thm::sl_opt}, the optimality gap of the $\epsilon$-stationary point $\omega$ hinges on the three terms on the right-hand side of \eqref{eq::sl_opt_bound}.  
Here term (i) characterizes the deviation of the $\epsilon$-stationary point $\omega$ from a stationary point. Term (ii) characterizes the difficulty of all the subtasks sampled from the task distribution $\iota$. Specifically, given the $\epsilon$-stationary point $\omega$, if the output $h_{\omega_i}$ of $\mA_\omega(\cD_i, \ell, \cH)$ well approximates the minimizer of the risk $R_i$ in \eqref{eq::sl_task}, then the Fr\'echet derivative $\delta R_i/\delta h_{\omega_i}$ defined in \eqref{eq::def_sl_funcdiff} is close to zero. Meanwhile, the Radon-Nikodym derivative $\ud \cD_i/\ud \cM$ characterizes the deviation of the distribution $\cD_i$ from the mixed distribution $\cM$ defined in \eqref{eq::def_mix_sl}, which is upper bounded if $\cD_i$ is close to $\cM$. Thus, term (ii) is upper bounded if $h_{\omega_i}$ well approximates the minimizer of $R_i$ and $\cD_i$ is close to $\cM$ for all $i\in[n]$. Term (iii) characterizes the representation power of the feature mapping $\phi_{\ell, \omega}$ defined in \eqref{eq::def_sl_kerphi}. %, which hinges on the feature mapping $\phi$, the loss function $\ell$, and the learning rate $\eta$ of subtasks. 
Specifically, if the function $u$ defined in \eqref{eq::def_sl_u} of Theorem \ref{thm::sl_opt} is well approximated by $\phi_{\ell, \omega}(\cdot)^\top v$ for some $v\in\cB_R$, then term (iii) is small. In conclusion, if the subtasks generated by the task distribution $\iota$ are sufficiently regular so that term (ii) is upper bounded, and the linear class $\{\phi_{\ell, \omega}(\cdot)^\top v: v\in\cB_R\}$ has sufficient representation power, then $\omega$ is approximately globally optimal. 
See \S\ref{sec::sl_squaredloss} for a corollary of Theorem \ref{thm::sl_opt} when it is adapted to the squared loss.

\section{Neural Network Prameterization}
In this section, we present the global optimality analysis of meta-RL and meta-SL with the overparameterized two-layer neural network parameterization, namely neural meta-RL and neural meta-SL, respectively. Specifically, for both neural meta-RL and neural meta-SL, we show that the global optimality of the attained $\epsilon$-stationary points hinges on the representation power of the corresponding classes of overparameterized two-layer neural networks.

\subsection{Neural Network}
We first introduce the neural network parameterization. For $x \in \RR^d$, $b = (b_1, \ldots, b_m)\in\RR^m$, and $W = ([W]^\top_1, \ldots, [W]^\top_m) \in \RR^{md}$, we define
\#\label{eq::def_nn}
f(x; b, W) = \frac{1}{\sqrt{m}}\cdot\sum^m_{r = 1}b_r\cdot \sigma\bigl( [W]_r^\top x\bigr),
\#
where $\sigma(x) = x\cdot \ind\{x > 0\}$ is the rectified linear unit (ReLU).

We set $m$ to be divisible by two and initialize the parameter $W$ with  $[W_{\text{\rm init}}]_r = [W_{\text{\rm init}}]_{r+m/2} \sim N(0, I_d/d)$ for $r \in [m/2]$. Meanwhile, we initialize $b_r = 1$ and $b_{r + m/2} = -1$ for all $r \in [m/2]$. Such initialization \citep{bai2019beyond} is almost equivalent to the independent and identical initialization of $[W_{\text{\rm init}}]_r$ for $r\in[m]$ in our analysis, and ensures that $f(x; W_{\text{\rm init}}) = 0$ for all $x\in\cX$. In what follows, we fix $b_r$ for all $r\in[m]$ and only optimize over $W$. We write $f(x; W) = f(x; b, W)$ in the sequel for notational simplicity. Note that $f(x; W)$ is almost everywhere differentiable with respect to $W$, and it holds for all $x \in \RR^d$ and $W\in\RR^{md}$ that $\nabla_W f(x; W) = ([\nabla_W f(x; W)]^\top_1, \ldots, [\nabla_W f(x; W)]^\top_m)^\top \in \RR^{md}$, where
\#\label{eq::def_nn_phi}
\bigl[\nabla_W f(x; W)\bigr]_r =\bigl[\phi_W (x)\bigr]_r = \frac{b_r}{\sqrt{m}}\cdot x\cdot \ind\bigl\{ [W]_r^\top x > 0\bigr\}, \quad \forall r \in [m].
\#
Here we define the feature mapping as $\phi_W (x) = ([\phi_W(x)]^\top_1, \ldots, [\phi_W(x)]^\top_m)$ for all $x \in \RR^d$ and $W\in\RR^{md}$. It then follows from the definition of $f(x; W)$ in \eqref{eq::def_nn} that $f(x; W) = \phi_W(x)^\top W$. In the sequel, we denote by $\EE_{\text{\rm init}}$ the expectation with respect to the random initialization of the neural network.
\subsection{Neural Meta-RL}
In this section, we analyze the global optimality of the $\epsilon$-stationary point attained by meta-RL when the main effect $\pi_\theta$ is parameterized by the neural network defined in \eqref{eq::def_nn}. Without loss of generality, we assume that $\cS\times\cA \subseteq \RR^d$ and $\|(s, a)\|_2\leq 1$ for all $(s, a)\in\cS\times\cA$. Similar to \eqref{eq::def_leader}, we parameterize the main effect $\pi_\theta$ as follows,
\#\label{eq::def_leader_nn}
\pi_\theta(a\given s) = \frac{\exp\bigl\{1/\tau\cdot f\bigl((s, a); \theta\bigr) \bigr\}}{\sum_{a'\in\cA}\exp\bigl\{1/\tau\cdot f\bigl((s, a'); \theta\bigr) \bigr\}}, \quad \forall (s, a)\in\cS\times\cA,
\#
where $f(\cdot; \theta)$ is the neural network defined in \eqref{eq::def_nn} with $W=\theta$ for all $\theta\in\RR^{md}$. Correspondingly, given the MDP $(\cS, \cA, P_i, r_i, \gamma_i, \zeta_i)$, the maximizer $\pi_{i, \theta}$ defined in \eqref{eq::def_PPO} takes the form of
\#\label{eq::PPO_opt_nn}
\pi_{i, \theta}(\cdot\given s) \propto \exp\Bigl(1/\tau\cdot f\bigl((s, \cdot);\theta\bigr) +\eta\cdot Q^{\pi_\theta}_i(s, \cdot)\Bigr), \quad \forall s \in \cS,
\#
where $Q^{\pi_\theta}_i$ is the action-value function of $\pi_\theta$ corresponding the MDP $(\cS, \cA, P_i, r_i, \gamma_i, \zeta_i)$. Neural meta-RL maximizes the following meta-objective via gradient ascent with $W_{\text{\rm init}}$ as the starting point,
\#\label{eq::def_meta_obj_rl_nn}
L(\theta) = \frac{1}{n}\cdot \sum^n_{i =1} J_i(\pi_{i, \theta}),
\#
where $\pi_{i, \theta}$ is defined in \eqref{eq::PPO_opt_nn}, and $J_i(\pi_{i, \theta})$ is the expected total reward of $\pi_{i, \theta}$ corresponding to the MDP $(\cS, \cA, P_i, r_i, \gamma_i, \zeta_i)$. In what follows, we analyze the global optimality of the $\epsilon$-stationary point $\omega$ of the meta-objective $L$ attained by neural meta-RL. Specifically, we define $\omega$ as follows,
\#\label{eq::eps_optimal}
\nabla_\omega L(\omega)^\top (v - \omega) \leq \epsilon, \quad\forall v \in \cB_{\text{\rm init}} = \{\theta \in \RR^d: \|\theta - W_{\text{\rm init}}\|_2 \leq R_T\}.
\#
Here $W_{\text{\rm init}}$ is the initial parameter, and the radius $R_T$ is the maximum trajectory length of $T$ gradient ascent steps. %We refer to \cite{gupta2019path} for the analysis of trajectory length bound in gradient-based algorithms.

We impose the following regularity condition on the mixed meta-visitation measure $\varrho_{\pi_\theta}$ defined in \eqref{eq::def_mixed_meta} of Definition \ref{def::meta_meta_visit}.
\begin{assumption}[Regularity Condition on $\varrho_{\pi_\theta}$]
\label{asu::reg_cond_rl}
We assume for all $\theta \in \RR^{md}$ that
\$
\EE_{(s, a)\sim \varrho_{\pi_\theta}}\Bigl[\ind\bigl\{|y^\top (s, a)|\leq u\bigr\}\Bigr]\leq c\cdot u/\|y\|_2,\quad \forall y \in \RR^d, u >0,
\$
where $c >0$ is an absolute constant.% and $\varrho_{\pi_\theta}$ is the mixed meta-visitation measure defined in \eqref{eq::def_mixed_meta} of Definition \ref{def::meta_meta_visit}.
\end{assumption}
Assumption \ref{asu::reg_cond_rl} is imposed to rule out the corner case where $\varrho_{\pi_\theta}$ has a point mass at a specific state action pair $(s, a)\in\cS\times\cA$. Similar assumptions arise in the analysis of RL with neural network parameterization \citep{cai2019neural, liu2019neural}.

The following corollary characterizes the optimality gap of the $\epsilon$-stationary point defined in \eqref{eq::eps_optimal}. Let $\theta^*$ be a global maximizer of the meta-objective $L(\theta)$ defined in \eqref{eq::def_meta_obj_rl_nn}. We define
\#\label{eq::def_c_nn}
c_\omega(s, a) = f\bigl((s, a); \omega\bigr)+ f_\omega(s, a), \quad \forall (s, a)\in\cS\times\cA,
\#
where $f(\cdot; \omega)$ is the neural network defined in \eqref{eq::def_nn} with $W = \omega$ and $f_{ \omega}$ is defined in \eqref{eq::def_f}. 

\begin{corollary}[Optimality Gap of $\epsilon$-Stationary Point]
\label{cor::eps_nn_opt}
Under Assumptions \ref{asu::concen_coeff} and \ref{asu::reg_cond_rl}, for the $\epsilon$-stationary point $\omega$ defined in \eqref{eq::eps_optimal}, we have
\#\label{eq::nn_opt_rl}
\EE_{\text{\rm init}}\bigl[L(\theta^*) - L(\omega)\bigr] &\leq  \underbrace{\epsilon}_{\textstyle{\rm (i)}} + \underbrace{C\cdot \EE_{\text{\rm init}}\Bigl[ \inf_{v\in\cB_{\text{\rm init}}} \bigl\|c_{\omega}(\cdot, \cdot) - f\bigl((\cdot, \cdot); v\bigr)\bigr\|_{\varrho_{\pi_\omega}}\Bigr]}_{\textstyle{\rm (ii)}} + \underbrace{\cO(R_T^{3/2}\cdot m^{-1/4})}_{\textstyle{\rm (iii)}},
\#
where $C = 2C_0\cdot Q_{\max}/\tau\cdot(1 + 2Q_{\max}\cdot\overline\gamma\cdot\eta)$, $\overline \gamma = (\sum^n_{i = 1}\gamma_i)/n$, $C_0$ is defined in Assumption \ref{asu::concen_coeff}, and $\cB_{\text{\rm init}}$ is the parameter space defined in \eqref{eq::eps_optimal}.
\end{corollary}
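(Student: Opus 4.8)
The plan is to reduce the neural case to the linear case of Theorem~\ref{thm::eps_opt} via the neural tangent feature $\phi_\omega(\cdot) = \nabla_\omega f(\cdot;\omega)$, and then to exchange the linearized network $\phi_\omega^\top(v-\omega)$ for the network $f(\cdot;v)$ itself at the cost of a linearization residual. Since the neural policy in \eqref{eq::def_leader_nn} merely replaces $\phi(s,a)^\top\theta$ by $f((s,a);\theta)$, the gradient identity of Proposition~\ref{prop::meta_grad} holds verbatim with $\phi$ replaced by $\phi_\omega$, so the entire argument behind Theorem~\ref{thm::eps_opt} goes through under Assumption~\ref{asu::concen_coeff} with this substitution. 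The one bookkeeping difference is the form of the stationarity condition: whereas the linear bound pairs $\nabla_\omega L(\omega)$ against the unit ball and thus incurs $R\cdot\epsilon$, the neural stationarity \eqref{eq::eps_optimal} already pairs $\nabla_\omega L(\omega)$ against $v-\omega$ for $v\in\cB_{\text{\rm init}}$. Choosing the approximating parameter $v^\star\in\cB_{\text{\rm init}}$ and invoking \eqref{eq::eps_optimal} therefore contributes only $\epsilon$, yielding
\#
L(\theta^*) - L(\omega) \leq \epsilon + C\cdot \inf_{v\in\cB_{\text{\rm init}}} \bigl\| f_\omega(\cdot,\cdot) - \phi_\omega(\cdot,\cdot)^\top (v-\omega) \bigr\|_{\varrho_{\pi_\omega}},
\#
with $C$ exactly as in the statement.

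Next I would convert the linearized approximation error into the representation error of the neural network. Writing the local linearization $f((s,a);v) = f((s,a);\omega) + \phi_\omega(s,a)^\top(v-\omega) + e_v(s,a)$ and substituting, the definition $c_\omega = f(\cdot;\omega) + f_\omega$ in \eqref{eq::def_c_nn} gives $f_\omega - \phi_\omega^\top(v-\omega) = c_\omega - f(\cdot;v) + e_v$. By the triangle inequality,
\#
\inf_{v\in\cB_{\text{\rm init}}} \bigl\| f_\omega - \phi_\omega^\top(v-\omega) \bigr\|_{\varrho_{\pi_\omega}} \leq \inf_{v\in\cB_{\text{\rm init}}} \bigl\| c_\omega - f(\cdot;v) \bigr\|_{\varrho_{\pi_\omega}} + \sup_{v\in\cB_{\text{\rm init}}} \| e_v \|_{\varrho_{\pi_\omega}}.
\#
Taking $\EE_{\text{\rm init}}$ on both sides reproduces terms (i) and (ii) of \eqref{eq::nn_opt_rl}, and it remains to absorb $\EE_{\text{\rm init}}[\sup_{v} \|e_v\|_{\varrho_{\pi_\omega}}]$ into term (iii).

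The main obstacle is precisely this linearization residual. Because $f$ is a $1/\sqrt m$-scaled sum of ReLU units, the $r$-th neuron contributes to $e_v$ only when the activation indicators $\ind\{[\omega]_r^\top(s,a)>0\}$ and $\ind\{[v]_r^\top(s,a)>0\}$ disagree, which forces $(s,a)$ to lie in a slab around the hyperplane $\{[\omega]_r^\top x = 0\}$ of width proportional to $\|[v]_r-[\omega]_r\|_2$. Since both $\omega$ and $v$ lie within $R_T$ of $W_{\text{\rm init}}$, I would invoke Assumption~\ref{asu::reg_cond_rl} to bound the $\varrho_{\pi_\omega}$-measure of each such slab and thereby control the expected number of flipped neurons. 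Combining the per-neuron residual magnitude of order $m^{-1/2}$ with this anti-concentration count, balancing the two against the radius $R_T$, and averaging over the Gaussian initialization yields the stated rate $\cO(R_T^{3/2}\cdot m^{-1/4})$. The delicate parts are this sign-flip counting under overparameterization---together with the verification that $R_T$ genuinely bounds the trajectory length of the $T$ ascent steps---whereas the remaining manipulations are routine.
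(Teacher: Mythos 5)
Your proposal follows essentially the same route as the paper's proof: the paper likewise reduces to the Theorem~\ref{thm::eps_opt} argument with the neural tangent features $\phi_\omega$ (picking up only $\epsilon$ from the stationarity condition \eqref{eq::eps_optimal}), and your residual $e_v$ equals $(\phi_v-\phi_\omega)^\top v$ by the ReLU homogeneity identity $f(\cdot\,;W)=\phi_W(\cdot)^\top W$, which is exactly the term the paper bounds by $\cO(R_T^{3/2}\cdot m^{-1/4})$ via its Lemma~\ref{lem::lin_err}. Your sign-flip/slab-counting argument under Assumption~\ref{asu::reg_cond_rl} is precisely the standard proof of that linearization lemma (adapted from \cite{cai2019neural}), so the two proofs coincide in substance.
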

\begin{proof}
See \S\ref{pf::eps_nn_opt} for a detailed proof.
\end{proof}

By Corollary \ref{cor::eps_nn_opt}, the global optimality of the $\epsilon$-stationary point $\omega$ is upper bounded by the three terms on the right-hand side of \eqref{eq::nn_opt_rl}. Here term (i) characterizes the deviation of $\omega$ from a stationary point. Term (ii) characterizes the representation power of neural networks. Specifically, if the function $c_{\omega}$ defined in \eqref{eq::def_c_nn} is well approximated by the neural network defined in \eqref{eq::def_nn} with a parameter from the parameter space $\cB_{\text{\rm init}}$, then term (ii) is small. Term (iii) is the linearization error of the neural networks, which characterizes the deviation of a neural network from its first-order Taylor expansion at the initial parameter $W_{\text{\rm init}}$. Such an error is small for a sufficiently large width $m$, that is, if the neural network is overparameterized. In conclusion, if the class of overparameterized two-layer neural networks with the parameter space $\cB_{\text{\rm init}}$ has sufficient representation power, then the $\epsilon$-stationary point $\omega$ attained by neural meta-RL is approximately globally optimal.

\subsection{Neural Meta-SL}
In this section, we analyze the global optimality of the $\epsilon$-stationary point attained by neural meta-SL associated with the squared loss, where we parameterize the hypothesis $h_\theta(\cdot) = f(\cdot; \theta)$ by the neural network defined in \eqref{eq::def_nn}. Specifically, neural meta-SL minimizes the meta-objective $L$ defined in \eqref{eq::def_sl_meta_obj} via gradient descent defined in \eqref{eq::def_maml_sl_alg} with $W_{\text{\rm init}}$ as the starting point. We analyze the global optimality of the $\epsilon$-stationary point $\omega$ attained by neural meta-SL, which is defined as follows,
\#\label{eq::eps_def_sl}
\nabla_\omega L(\omega)^\top ( \omega - v) \leq \epsilon, \quad\forall v  \in \cB_{\text{\rm init}} = \{\theta \in \RR^d: \|\theta - W_{\text{\rm init}}\|_2 \leq R_T\}.
\#
Here $R_T$ is the maximum trajectory length of $T$ gradient descent steps. In what follows, we set $\cX = \{x \in \RR^d: \|x\|_2 \leq 1\}$. Similar to Assumption \ref{asu::reg_cond_rl}, we impose the following regularity condition on the distribution $\rho$ that defines the Hilbert space in \eqref{eq::def_Hil}.

\begin{assumption}[Regularity Condition on $\rho$]
\label{asu::reg_cond_sl}
We assume for an absolute constant $c >0$ that
\$
\EE_{x\sim \rho}\Bigl[\ind\bigl\{|x^\top y| \leq u\bigr\}\Bigr] \leq c\cdot u/\|y\|_2, \quad \forall y \in \RR^d, u >0.
\$
\end{assumption}

Such an assumption holds if the probability density function of $\rho$ is upper bounded by an absolute constant. Under Assumption \ref{asu::reg_cond_sl}, the following corollary characterizes the optimality gap of the $\epsilon$-stationary point $\omega$ defined in \eqref{eq::eps_def_sl}. We define
\#
\label{eq::def_sl_nn_B}K_{\omega, \eta} &= \EE_{x\sim \rho}\bigl[I_{md} - 2\eta\cdot \phi_\omega(x)\phi_\omega(x)^\top\bigr], \quad\cB_0 = K_{\omega, \eta}(\omega - \cB_{\text{\rm init}}) + W_{\text{\rm init}},\\
\label{eq::def_sl_nn_u}\overline u(x) &= f(x;W_{\text{\rm init}}) +  \biggl( \sum^n_{i = 1} (\delta R_i/\delta h_{\omega_i})(x)\cdot  \bigl(h_{\omega_i}(x) - h_{\theta^*_i}(x) \bigr)\biggr) \bigg/\biggl(\sum^n_{i = 1}  (\delta R_i/\delta h_{\omega_i})(x)\biggr),
\#
where $f(\cdot;W_{\text{\rm init}})$ and $\phi_\omega$ are the neural network and the feature mapping defined in \eqref{eq::def_nn} with $W = W_{\text{\rm init}}$ and \eqref{eq::def_nn_phi} with $W = \omega$, respectively, $\cB_{\text{\rm init}}$ is the parameter space defined in \eqref{eq::eps_def_sl}, $W_{\text{\rm init}}$ is the initial parameter, and $\omega_i$, $\theta_i^*$ are the parameters defined in \eqref{eq::pf_sl_eq2}. We further define the average risk $\overline R$ as follows,
\#\label{eq::def_sl_sqloss_R11}
\overline R  = \frac{1}{n}\cdot\sum^n_{i = 1} R_i^{1/2}(h_{\omega_i})=\frac{1}{n}\cdot\sum^n_{i = 1}\Bigl\{\EE_{(x, y) \sim \cD_i}\Bigl[\bigl(y - h_{\omega_i}(x)\bigr)^2\Bigr]\Bigr\}^{1/2}.
\# 
\begin{corollary}[Optimality Gap of $\epsilon$-Stationary Point]
\label{cor::nnsl}
We denote by $\overline\cD_i$ the marginal distribution of $\cD_i$ over $\cX$. Let $\overline\cD_i = \rho$ for all $i \in [n]$ and $|y| \leq Y_{\max}$ for all $y\in\cY$. Under Assumptions \ref{asu::diff_risk} and \ref{asu::reg_cond_sl}, for the squared loss $\ell(h, (x, y)) = (h(x) - y)^2$ and $\omega$ defined in \eqref{eq::eps_def_sl}, we have
\#\label{eq::opt_nn_sl}
\EE_{\text{\rm init}}\bigl[L(\omega) - L(\theta^*) \bigr]\leq \epsilon + \EE_{\rm init} \Bigl[2\overline R \cdot \inf_{v\in\cB_{0}} \|\overline u(\cdot) - f(\cdot; v)\|_{\rho} \Bigr]+  \cO(G_T^{3/2}\cdot m^{-1/4}),
\#
where $G_T = (1  + \eta)\cdot R_T + \eta\cdot Y_{\max}$, $R_T$ is the maximum trajectory length in \eqref{eq::eps_def_sl}, and $\eta$ is the learning rate of $\mA_{\omega}$ in \eqref{eq::pf_sl_eq2}.
\end{corollary}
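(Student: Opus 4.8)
The plan is to reduce Corollary \ref{cor::nnsl} to the linear analysis of Theorem \ref{thm::sl_opt} by exploiting that an overparameterized network behaves locally like a linear model in its gradient feature. Since $f(\cdot;\theta) = \phi_\theta(\cdot)^\top\theta$ with the gradient feature $\phi_\theta$ in \eqref{eq::def_nn_phi}, and since $f$ is piecewise linear in $\theta$, the Hessian $\nabla^2_\theta\ell(f(x;\theta),(x,y))$ of the squared loss equals $2\phi_\theta(x)\phi_\theta(x)^\top$ almost everywhere. Consequently the meta-gradient feature $\phi_{\ell,\omega}$ in \eqref{eq::def_sl_kerphi} specializes to $(I_{md} - 2\eta\,\phi_\omega(x)\phi_\omega(x)^\top)\phi_\omega(x')$. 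I would therefore follow the proof of Theorem \ref{thm::sl_opt}, but with the fixed feature $\phi$ replaced by $\phi_\omega$ and with $f(\cdot;\theta)$ replaced by its first-order expansion at $W_{\text{init}}$, so that every step remains valid up to a linearization remainder to be controlled at the end.

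The second step is to recast the three resulting terms into the neural form of \eqref{eq::opt_nn_sl}. Because $\overline\cD_i = \rho$ for all $i$, the marginal of $\cM$ over $x$ equals $\rho$, so integrating the linear functional $v\mapsto\phi_\omega(x')^\top(I_{md} - 2\eta\phi_\omega(x)\phi_\omega(x)^\top)v$ against $x\sim\rho$ turns the data-dependent operator into its expectation $K_{\omega,\eta}$ from \eqref{eq::def_sl_nn_B}. Reparameterizing via $v_0 = K_{\omega,\eta}(\omega - v) + W_{\text{init}}$ then transports the infimum over $\cB_{\text{init}}$ to an infimum over $\cB_0$ and identifies $\phi_\omega(x')^\top K_{\omega,\eta}v$ with the linearized network evaluated at $v_0$, the residual constant $f(x';W_{\text{init}})$ being absorbed into $\overline u$ in \eqref{eq::def_sl_nn_u} (and in fact vanishing under the antisymmetric initialization). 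For the prefactor I would use the squared-loss identity $(\delta R_i/\delta h_{\omega_i})(x') = 2\bigl(h_{\omega_i}(x') - \EE_{\cD_i}[y\given x']\bigr)$ together with $R_i(h_{\omega_i}) \geq \|h_{\omega_i} - \EE_{\cD_i}[y\given\cdot]\|_\rho^2$, so that after the $u\mapsto\overline u$ renormalization the product of terms (ii) and (iii) in Theorem \ref{thm::sl_opt} collapses to $2\overline R\cdot\inf_{v\in\cB_0}\|\overline u(\cdot) - f(\cdot;v)\|_\rho$, where $\overline R$ is the average risk in \eqref{eq::def_sl_sqloss_R11}.

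The heart of the argument, and the main obstacle, is to bound the accumulated linearization error and show it is $\cO(G_T^{3/2}m^{-1/4})$ after taking $\EE_{\text{init}}$. Every substitution above silently replaced $f(\cdot;v)$ by $\phi_{W_{\text{init}}}(\cdot)^\top(v - W_{\text{init}})$ and $\phi_\omega$ by $\phi_{W_{\text{init}}}$; these two features differ only in the coordinates $r$ whose activation pattern $\ind\{[W]_r^\top x > 0\}$ flips between $W$ and $W_{\text{init}}$, and Assumption \ref{asu::reg_cond_sl} bounds the $\rho$-measure of the set where $[W_{\text{init}}]_r^\top x$ lies within $\|[W]_r - [W_{\text{init}}]_r\|_2$ of zero. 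Combining this anti-concentration estimate with the $1/\sqrt{m}$ normalization of $\phi$, the trajectory bound $\|v - W_{\text{init}}\|_2 \leq R_T$, and the extra $\eta$ and $Y_{\max}$ factors that enter through the inner gradient step and the label bound $|y|\leq Y_{\max}$, yields the rate with $G_T = (1+\eta)R_T + \eta Y_{\max}$. The difficulty is not any single estimate but the careful propagation of this remainder through all three replacements---the Hessian-operator averaging, the feature substitution, and the network-versus-linearization gap---together with the corresponding control of the deviation of $\overline R$ from its linearized counterpart; the anti-concentration assumption and overparameterization $m\to\infty$ are exactly what force each contribution to vanish.
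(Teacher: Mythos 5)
Your proposal is correct and follows essentially the same route as the paper's own proof: start from the convexity-plus-stationarity bound of Theorem \ref{thm::sl_opt}, specialize the squared-loss Hessian to $2\phi_\omega(x)\phi_\omega(x)^\top$ (a.e.) and average over $\rho$ to get $K_{\omega,\eta}$, reparameterize $v \mapsto K_{\omega,\eta}(\omega - v) + W_{\text{\rm init}}$ to transport the infimum from $\cB_{\text{\rm init}}$ to $\cB_0$, bound the prefactor by $2\overline R$ via the Fr\'echet-derivative identity and Jensen's inequality, and absorb every feature substitution (including the $\phi_{\omega_i}$-versus-$\phi_{W_{\text{\rm init}}}$ gap, whose radius bound is what injects the $\eta\cdot Y_{\max}$ term into $G_T$) into linearization remainders controlled by the anti-concentration condition of Assumption \ref{asu::reg_cond_sl}. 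The only difference is presentational: the paper packages the remainder estimates as explicit terms $P_0, P_i$ bounded through Lemma \ref{lem::lin_err} (adapted from \cite{cai2019neural}), whereas you sketch the same activation-flip argument from scratch.
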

\begin{proof}
See \S\ref{pf::nnsl} for a detailed proof.
\end{proof}

%By Corollary \ref{cor::nnsl}, the expected sub-optimality of $\epsilon$-stationary point $\omega$ is upper bounded by the three terms on the right-hand side of \eqref{eq::opt_nn_sl}. Term (i) characterizes the deviation of $\omega$ from a stationary point, which is determined by the number of iterations $T$ and the optimization algorithm that attains $\omega$. Term (ii) characterizes the representation power of neural networks. Specifically, given the random initialization $W_{\text{\rm init}}$, if the function $\overline u$ defined in \eqref{eq::def_sl_nn_u} is well approximated by a two-layer neural network defined in \eqref{eq::def_nn} with parameter from the set $\cB_0$ defined in \eqref{eq::def_sl_nn_B} on average, and the averaged risk $\overline R$ defined in \eqref{eq::def_sl_sqloss_R} is upper bounded, then term (ii) is small. Term (iii) characterizes the linearization error of neural networks. Such a term is small for a sufficiently large width $m$ of the neural network, which corresponds to the overparameterized neural networks. In conclusion, the expected sub-optimality of $\epsilon$-stationary point $\omega$ hinges on both the optimization algorithm that attains $\omega$, which characterizes the magnitude of $\epsilon$, and the representation power of overparameterized neural networks, which characterizes the magnitude of terms (ii) and (iii) on the right-hand side of \eqref{eq::opt_nn_sl}.

Similar to Corollary \ref{cor::eps_nn_opt}, by Corollary \ref{cor::nnsl}, if the function $\overline u$ defined in \eqref{eq::def_sl_nn_u} is well approximated by an overparameterized two-layer neural network with a parameter from the parameter space $\cB_0$ defined in \eqref{eq::def_sl_nn_B}, and the average risk $\overline R$ defined in \eqref{eq::def_sl_sqloss_R11} is upper bounded, then the $\epsilon$-stationary point $\omega$ attained by neural meta-SL is approximately globally optimal.

\bibliographystyle{ims}
\bibliography{MAML}

\newpage
\appendix
% !TEX root =main.tex
\section{Meta-SL with Squared Loss}
\label{sec::sl_squaredloss}
In this section, we analyze the global optimality of meta-SL with the squared loss. The optimality gap characterized in Theorem \ref{thm::sl_opt} has a more straightforward interpretation when restricted to meta-SL with the squared loss, which is defined as
\#\label{eq::def_sl_sqloss}
\ell\bigl(h, (x, y)\bigr) = \bigl(h(x) - y\bigr)^2, \quad \forall h\in\cH, (x, y)\in \cX\times\cY.
\#
The following proposition calculates the Fr\'echet derivative $\delta R_i/\delta h$ defined in \eqref{eq::def_sl_funcdiff} for the squared loss.
\begin{proposition}
\label{prop::sqloss_diff}
We denote by $\overline\cD_i$ the marginal distribution of $\cD_i$ over $\cX$. Let $\overline\cD_i = \rho$ for all $i \in [n]$. For the squared loss $\ell$ defined in \eqref{eq::def_sl_sqloss} and $R_i = \EE_{(x, y)\sim \cD_i}[\ell(h, (x, y))]$, it holds that
\#\label{eq::sqloss_diff}
(\delta R_i /\delta h)(x') = 2\EE_{(x, y)\sim \cD_i}\bigl[h(x)- y~\big | ~ x = x'\bigr], \quad \forall h \in \cH,x'\in\cX.
\#
\end{proposition}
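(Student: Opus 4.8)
The plan is to expand the squared-loss risk directly, read off the linear term as the candidate Fréchet derivative, and check that the quadratic remainder vanishes fast enough. First I would fix $h \in \cH$ and an arbitrary perturbation $h_1 \in \cH$ and compute
$$R_i(h + h_1) - R_i(h) = \EE_{(x,y)\sim\cD_i}\bigl[(h(x) + h_1(x) - y)^2 - (h(x) - y)^2\bigr] = \EE_{(x,y)\sim\cD_i}\bigl[2(h(x) - y)\cdot h_1(x)\bigr] + \EE_{(x,y)\sim\cD_i}\bigl[h_1(x)^2\bigr].$$
The first term is linear in $h_1$ and serves as the candidate derivative $A(h_1) = \EE_{(x,y)\sim\cD_i}[2(h(x)-y)\cdot h_1(x)]$, while the second is the remainder. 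Since $h_1(x)^2$ depends only on $x$, the remainder equals $\EE_{x\sim\overline\cD_i}[h_1(x)^2] = \|h_1\|_\rho^2$, where the final equality uses the standing assumption $\overline\cD_i = \rho$.

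Next I would verify the limit in \eqref{eq::Fre_diff} of Definition \ref{def::fre_diff}. Because $|R_i(h + h_1) - R_i(h) - A(h_1)| = \|h_1\|_\rho^2 = \|h_1\|_\cH^2$, we obtain
$$\frac{|R_i(h + h_1) - R_i(h) - A(h_1)|}{\|h_1\|_\cH} = \|h_1\|_\cH \longrightarrow 0 \quad \text{as } \|h_1\|_\cH \to 0,$$
so $A$ is the Fréchet derivative of $R_i$ at $h$, provided $A$ is a bounded linear operator on $\cH$. Boundedness follows once $x'\mapsto \EE_{(x,y)\sim\cD_i}[h(x)-y\mid x=x']$ lies in $L_2(\rho) = \cH$, which holds under the finite-second-moment conditions on $h$ and $y$ (in Corollary \ref{cor::nnsl} the bound $|y|\le Y_{\max}$ supplies this directly).

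Finally I would rewrite $A$ in the Riesz form \eqref{eq::def_Hil}. Conditioning on $x$ via the tower property and again using $\overline\cD_i = \rho$ gives
$$A(h_1) = \EE_{x\sim\rho}\Bigl[\EE_{(x,y)\sim\cD_i}\bigl[2(h(x)-y)\bigm| x\bigr]\cdot h_1(x)\Bigr] = \Bigl\langle h_1,\ 2\,\EE_{(x,y)\sim\cD_i}\bigl[h(\cdot)-y\bigm| x=\cdot\bigr]\Bigr\rangle_\cH.$$
Comparing with \eqref{eq::def_Hil} and the notation \eqref{eq::def_sl_funcdiff} identifies the representer as $(\delta R_i/\delta h)(x') = 2\,\EE_{(x,y)\sim\cD_i}[h(x)-y\mid x=x']$, which is exactly the claim. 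The one genuinely substantive step is this last conditioning manipulation: converting the joint expectation over $\cD_i$ into an $L_2(\rho)$-inner product hinges on matching the marginal $\overline\cD_i$ with the distribution $\rho$ that defines the Hilbert space, after which the tower property cleanly extracts the conditional expectation as the functional derivative. Everything else is an elementary expansion of the square.
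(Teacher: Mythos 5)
Your proof is correct and follows essentially the same route as the paper's: both expand the squared loss so that the second-order remainder is exactly the squared $L_2(\rho)$-norm of the perturbation, and both use the hypothesis $\overline\cD_i = \rho$ to convert the joint expectation over $\cD_i$ into the $L_2(\rho)$-inner product via conditioning on $x$. The only cosmetic differences are that you derive the representer by reading off the linear term of the expansion, whereas the paper verifies the stated formula directly against Definition \ref{def::fre_diff}, and that you explicitly address the boundedness of the linear operator $A$, a point the paper leaves implicit.
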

\begin{proof}
See \S\ref{pf::sqloss_diff} for a detailed proof.
\end{proof}

By Proposition \ref{prop::sqloss_diff}, we obtain from Jensen's inequality that
\$
\|\delta R_i/\delta h_{\omega_i}\|^2_\rho \leq 4\EE_{(x, y)\sim \cD_i} \Bigl[\bigl(h_{\omega_i}(x) - y \bigr)^2\Bigr] = 4R_i(h_{\omega_i}).
\$
Meanwhile, recall that the function $w$ defined in \eqref{eq::def_sl_w} is a weighted average over the Fr\'echet derivatives $\{\delta R_i/\delta h_{\omega_i}\}_{i\in[n]}$. Hence, the $L_2(\rho)$-norm of the function $w$ characterizes the difficulty of subtasks by aggregating the risks $R_i(h_{\omega_i})$.%, where $\omega_i$ is defined in \eqref{eq::pf_sl_eq2}. 

%Moreover, for the squared loss $\ell$ defined in \eqref{eq::def_sl_sqloss}, we obtain that
%\$
%\nabla_\omega^2 \ell\bigl(h_\omega, (x, y)\bigr) = \nabla_\omega^2 \ell\bigl(\phi(x)^\top\omega, (x, y)\bigr)=2\phi(x)\phi(x)^\top,\quad \forall \omega \in \RR^d, (x, y)\in\cX\times\cY.
%\$
%which leads to a simpler form of the right-hand side of \eqref{eq::sl_opt_bound} in Theorem \ref{thm::sl_opt}.

%In conclusion, t
The following corollary characterizes the the optimality gap of the $\epsilon$-stationary point $\omega$ attained by meta-SL, which is defined in \eqref{eq::sl_def_stationary}. We define
\#
\label{eq::def_sl_sqloss_kernel}K_\eta &= \EE_{x\sim \rho}\bigl[I_d - 2\eta\cdot \phi(x)\phi(x)^\top\bigr],\\
\label{eq::def_sl_sqloss_u}u(x') &= \biggl( \sum^n_{i = 1} (\delta R_i/\delta h_{\omega_i})(x')\cdot  \bigl(h_{\omega_i}(x') - h_{\theta^*_i}(x') \bigr)\biggr) \bigg/\biggl(\sum^n_{i = 1}  (\delta R_i/\delta h_{\omega_i})(x')\biggr),\\
\label{eq::def_sl_sqloss_R}
\overline R  &= \frac{1}{n}\cdot\sum^n_{i = 1} R_i^{1/2}(h_{\omega_i})=\frac{1}{n}\sum^n_{i = 1}\Bigl\{\EE_{(x, y) \sim \cD_i}\Bigl[\bigl(y - h_{\omega_i}(x)\bigr)^2\Bigr]\Bigr\}^{1/2},
\#
where $\omega_i$ and $\theta_i^*$ are the parameters defined in \eqref{eq::pf_sl_eq2}, and $\eta$ is the learning rate of $\mA_\omega$ in \eqref{eq::pf_sl_eq2}.

\begin{corollary}
\label{cor::opt_sl_sqloss}
We denote by $\overline\cD_i$ the marginal distribution of $\cD_i$ over $\cX$. Let $\overline\cD_i = \rho$ for all $i \in [n]$. Under Assumption \ref{asu::diff_risk}, for the squared loss $\ell$ defined in \eqref{eq::def_sl_sqloss} and $R >0$, we have
\$
L(\omega) - L(\theta^*) \leq R\cdot\epsilon+ 2\overline R \cdot \inf_{v\in\cB} \|u - (K_\eta\cdot\phi)^\top (R\cdot v)\|_{\rho}.
\$
\end{corollary}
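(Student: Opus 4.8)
The plan is to specialize the general bound of Theorem~\ref{thm::sl_opt} to the squared loss and then massage the three terms using the structure afforded by $\overline\cD_i = \rho$ for all $i\in[n]$. First I would invoke Theorem~\ref{thm::sl_opt}, which gives
\$
L(\omega) - L(\theta^*)\leq R\cdot\epsilon + \|w\|_{\cM\cdot\rho}\cdot \inf_{v\in\cB_R}\|u(\cdot) - \phi_{\ell, \omega}(\cdot)^\top v\|_{\cM\cdot\rho},
\$
where $w$, $u$, and $\phi_{\ell,\omega}$ are defined in \eqref{eq::def_sl_w}--\eqref{eq::def_sl_kerphi}. The goal is to show that under the squared loss this reduces to the stated bound with $\|w\|_{\cM\cdot\rho}$ controlled by $2\overline R$ and the feature mapping $\phi_{\ell,\omega}$ collapsing to $K_\eta\cdot\phi$.

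The key simplifications come from two observations. For the second factor, I would compute the Hessian $\nabla^2_\omega \ell(\phi(x)^\top\omega, (x, y)) = 2\phi(x)\phi(x)^\top$ for the squared loss, so that $\phi_{\ell,\omega}(x, y, x') = (I_d - 2\eta\cdot\phi(x)\phi(x)^\top)\phi(x')$, which crucially no longer depends on $y$. Since $\overline\cD_i = \rho$, the mixed distribution $\cM$ has marginal $\rho$ over $x$, and integrating $\phi(x)\phi(x)^\top$ against this marginal yields exactly the matrix $K_\eta$ defined in \eqref{eq::def_sl_sqloss_kernel}. The subtlety here is that term (iii) in Theorem~\ref{thm::sl_opt} is an $\inf$ over $v$ of an $L_2(\cM\cdot\rho)$-norm of a quantity depending on both the inner expectation argument and $x'$; I would need to push the $x$-dependence of $\phi_{\ell,\omega}$ through by moving the expectation over $(x, y)\sim\cM$ inside and using Jensen's inequality, so that $\inf_v \|u - \phi_{\ell,\omega}^\top v\|_{\cM\cdot\rho}$ is upper bounded by $\inf_v \|u - (K_\eta\phi)^\top v\|_\rho$ after recognizing that $u(x, y, x')$ depends only on $x'$ when $\overline\cD_i=\rho$ (matching the definition \eqref{eq::def_sl_sqloss_u}). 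The reparameterization $v \mapsto R\cdot v$ over $\cB = \{\|\theta\|_2\le 1\}$ then rewrites $\inf_{v\in\cB_R}$ as $\inf_{v\in\cB}$ of the argument with $R\cdot v$, producing the factor $R$ inside the norm.

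For the first factor $\|w\|_{\cM\cdot\rho}$, I would apply Proposition~\ref{prop::sqloss_diff} to write $(\delta R_i/\delta h_{\omega_i})(x') = 2\EE_{(x,y)\sim\cD_i}[h_{\omega_i}(x)-y \mid x=x']$, then use the Cauchy--Schwarz or Jensen bound $\|\delta R_i/\delta h_{\omega_i}\|_\rho^2 \le 4R_i(h_{\omega_i})$ already derived in the excerpt. Since $w$ is a weighted average of the $\delta R_i/\delta h_{\omega_i}$ with Radon--Nikodym weights $\ud\cD_i/\ud\cM$, integrating $w^2$ against $\cM\cdot\rho$ and using the convexity of $t\mapsto t^2$ together with $\EE_{\cM}[\ud\cD_i/\ud\cM] = 1/n$ would let me bound $\|w\|_{\cM\cdot\rho}$ by a convex combination of the $R_i^{1/2}(h_{\omega_i})$, i.e.\ by $2\overline R$ with $\overline R$ as in \eqref{eq::def_sl_sqloss_R}. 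The main obstacle I anticipate is bookkeeping the interplay between the $(x, y)$ and $x'$ variables across the $\cM\cdot\rho$ norm: both $w$ and $u$ are functions of the triple $(x, y, x')$, and I must verify that under $\overline\cD_i=\rho$ the relevant quantities factor cleanly so that the $y$-integration and the change of measure from $\cM$ to $\rho$ produce exactly $K_\eta$ and $\overline R$ without leaving residual cross terms. Once that factorization is confirmed, the three pieces assemble directly into the claimed inequality.
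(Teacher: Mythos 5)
Your proposal has a genuine gap at its central step, and the gap is not repairable within the strategy you chose. You try to obtain the corollary by specializing the \emph{final} bound of Theorem \ref{thm::sl_opt}, i.e., after the Cauchy--Schwarz inequality has already been applied in $L_2(\cM\cdot\rho)$, and then to replace $\phi_{\ell,\omega}$ by $K_\eta\cdot\phi$ by ``moving the expectation over $(x,y)\sim\cM$ inside'' the norm via Jensen's inequality. Jensen runs in the opposite direction here. Under $\overline\cD_i=\rho$ and for the squared loss we indeed have $K_\eta\phi(x')=\EE_{(x,y)\sim\cM}[\phi_{\ell,\omega}(x,y,x')]$, so (granting for the moment your claim that $u$ depends only on $x'$) convexity of the square gives, for every fixed $v$,
\[
\bigl\|u-(K_\eta\phi)^\top v\bigr\|_\rho\;\le\;\bigl\|u-\phi_{\ell,\omega}(\cdot)^\top v\bigr\|_{\cM\cdot\rho}.
\]
Averaging the feature map can only \emph{decrease} the $L_2$ approximation error, so term (iii) of the theorem is lower bounded, not upper bounded, by the quantity you want; the slack is precisely the conditional variance of $\phi_{\ell,\omega}(x,y,x')^\top v$ over $(x,y)\sim\cM$ given $x'$, which is strictly positive in general. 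Moreover, your claim that under $\overline\cD_i=\rho$ the theorem's $u(x,y,x')$ depends only on $x'$ is false: equality of the marginals on $\cX$ does not force the joint distributions to coincide, since the subtasks differ through the conditional laws of $y$ given $x$; hence $\ud\cD_i/\ud\cM$, and therefore $w(x,y,x')$ and $u(x,y,x')$, genuinely depend on $(x,y)$. (Also, $\EE_{\cM}[\ud\cD_i/\ud\cM]=1$, not $1/n$, and bounding the theorem's factor $\|w\|_{\cM\cdot\rho}$ by $2\overline R$ would require the chi-square-type quantities $\int(\ud\cD_i/\ud\cM)^2\,\ud\cM\ge 1$ to equal one, which holds only if every $\cD_i=\cM$.)

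The correct route, and the one the paper takes, is to re-enter the proof of Theorem \ref{thm::sl_opt} \emph{before} Cauchy--Schwarz is applied, namely at \eqref{eq::pf_sl_eq003}. For the squared loss the Hessian $\nabla^2_\omega\ell(\phi(x)^\top\omega,(x,y))=2\phi(x)\phi(x)^\top$ is free of $y$, so with $\overline\cD_i=\rho$ one can integrate out $(x,y)$ \emph{inside the gradient} first, $\int_{\cX\times\cY}N(\omega,x,y)\,\ud\cD_i(x,y)=K_\eta$, which yields $\nabla_\omega R_i(h_{\omega_i})^\top v=\int_\cX(\delta R_i/\delta h_{\omega_i})(x')\bigl(K_\eta\phi(x')\bigr)^\top v\,\ud\rho(x')$. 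Only after this reduction is Cauchy--Schwarz invoked, now in $L_2(\rho)$, with $w(x')=\frac{1}{n}\sum^n_{i=1}(\delta R_i/\delta h_{\omega_i})(x')$ and the corollary's $u(x')$ both functions of $x'$ alone; the bound $\|w\|_\rho\le 2\overline R$ then follows from Proposition \ref{prop::sqloss_diff} and Jensen, exactly as you outline. Your Hessian computation, the identification of $K_\eta$, and the bound $\|\delta R_i/\delta h_{\omega_i}\|^2_\rho\le 4R_i(h_{\omega_i})$ are all correct; what your proposal gets backwards is the order of operations---average over $(x,y)$ first, Cauchy--Schwarz second---and with your ordering the inequality you need is exactly the reverse of what Jensen provides.
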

\begin{proof}
See \S\ref{pf::opt_sl_sqloss} for a detailed proof.
\end{proof}
By Corollary \ref{cor::opt_sl_sqloss}, the optimality gap $L(\omega) - L(\theta^*)$ hinges on the average risk $\overline R$, the representation power of the feature $\phi$, and the kernel $K_\eta$ defined in \eqref{eq::def_sl_sqloss_kernel}. Note that
\$
\inf_{v\in\cB} \|u - (K_\eta\cdot\phi)^\top (R\cdot v)\|_{\rho} = \inf_{v\in R\cdot K_\eta\cdot\cB} \|u - \phi^\top  v\|_{\rho},
\$
where we write $R\cdot K_\eta\cdot \cB = \{v\in\RR^d: v = R\cdot K_\eta\cdot u, u \in \cB\}$. Hence, if $\phi(\cdot)^\top \theta$ well approximates the function $u$ defined in \eqref{eq::def_sl_sqloss_u} for a parameter $\theta\in R\cdot K_\eta\cdot \cB$ and $\overline R$ is upper bounded, then the $\epsilon$-stationary point $\omega$ attained by meta-SL is approximately globally optimal.

\section{Proof of Main Result}
In this section, we present the proofs of the main results.
\subsection{Proof of Theorem \ref{thm::eps_opt}}
\label{pf::thm_eps_opt}
\begin{proof}
By Lemma \ref{lem::performance_diff}, it holds for all $i\in[n]$ that
\#\label{eq::pf_eps_opt_eq4}
J_i(\pi_{i, \theta^*}) - J_i(\pi_{i, \omega}) =  (1 - \gamma	_i)^{-1}\cdot \EE_{(s, a) \sim \sigma_{ \pi_{i, \theta^*}}}\bigl[A^{\pi_{i, \omega}}_i(s, a)\bigr],
\#
where $A^{\pi_{i, \omega}}_i$ and $\sigma_{\pi_{i, \theta^*}}$ are the advantage function and the state-action visitation measure of the policies $\pi_{i, \theta^*}$, $\pi_{i, \omega}$, respectively, corresponding to the MDP $(\cS, \cA, P_i, r_i, \gamma_i, \zeta_i)$. Meanwhile, note that
\#\label{eq::pf_eps_opt_eq5}
\nabla_\omega L(\omega)^\top v \leq \epsilon,\quad\forall v\in\RR^d, \|v\|_2 \leq 1.
\# 
Thus, combining \eqref{eq::pf_eps_opt_eq4} and \eqref{eq::pf_eps_opt_eq5}, we obtain for all $ v \in \cB = \{\theta \in \RR^d: \|\theta\|_2 \leq 1\}$ and $R>0$ that
\#\label{eq::pf_eps_opt_eq6}
L(\theta^*) - L(\omega) &\leq R\cdot\epsilon - R\cdot \nabla_\omega L(\omega)^\top v + \frac{1}{n}\cdot \sum^n_{i = 1}J_i(\pi_{i, \theta^*}) - J_i(\pi_{i, \omega}) \notag\\
&= R\cdot\epsilon - R\cdot \nabla_\omega L(\omega)^\top v +   \frac{1}{n}\cdot\sum^n_{i = 1}(1 - \gamma	_i)^{-1}\cdot \EE_{(s, a) \sim \sigma_{\pi_{i, \theta^*}}}\bigl[A^{\pi_{i, \omega}}_i(s, a)\bigr].
\#
In what follows, we upper bound the right-hand side of \eqref{eq::pf_eps_opt_eq6}. By Proposition \ref{prop::meta_grad}, we have the following lemma that calculates $\nabla_\omega L(\omega)$.

\begin{lemma}
\label{lem::meta_grad_refine}
It holds for all $\theta \in \RR^d$ that
\#\label{eq::meta_gra_refine}
\nabla_\theta L(\theta) = \frac{1}{n}\cdot \sum^n_{i = 1}\EE_{(s', a')\sim \varsigma_{i, \pi_\theta}}\bigl[g_{i, \theta}(s', a')\cdot \phi(s', a')\bigr],
\#
where $\varsigma_{i, \pi_\theta}$ is the meta-visitation measure defined in \eqref{eq::def_meta_visit} of Definition \ref{def::meta_meta_visit}, and 
\#\label{eq::def_g}
g_{i, \theta}(s', a') = 1/\tau \cdot A^{\pi_{i, \theta}}_i(s', a') \cdot (\ud \sigma_{\pi_{i, \theta}}/\ud \varsigma_{i, \pi_\theta})(s', a') + \gamma_i\cdot\eta/\tau\cdot G_{i, \pi_\theta}(s', a')\cdot A^{\pi_\theta}_i(s', a').
\#
Here $A^{\pi_{i, \theta}}_i$ and $A^{\pi_\theta}_i$ are the advantage functions of $\pi_{i, \theta}$ and $\pi_\theta$, respectively, corresponding to the MDP $(\cS, \cA, P_i, r_i, \gamma_i, \zeta_i)$, and $G_{i, \pi_\theta}$ is defined as follows,
\#\label{eq::meta_grad_def_A}
G_{i, \pi_\theta}(s', a') = \EE_{(s',a',s,a)\sim\rho_{i, \pi_\theta}}\bigl[A^{\pi_{i, \theta}}_i(s, a) \,\big|\, s', a'\bigr],
\#
where $\rho_{i, \pi_\theta}$ is the joint meta-visitation measure defined in \eqref{eq::def_meta_joint} of Definition \ref{def::meta_meta_visit}.
\end{lemma}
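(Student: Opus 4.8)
The plan is to start from the gradient formula in Proposition~\ref{prop::meta_grad}, substitute the explicit form of the auxiliary function $h_{i,\theta}$ from \eqref{eq::def_h}, and then recast the resulting two pieces as expectations against the meta-visitation measure $\varsigma_{i,\pi_\theta}$ so that the integrand collects into $g_{i,\theta}\cdot\phi$. Writing $h_{i,\theta}(s,a) = 1/\tau\cdot\phi(s,a) + \eta\gamma_i/\tau\cdot \EE_{(s',a')\sim\sigma^{(s,a)}_{i,\pi_\theta}}[\phi(s',a')\cdot A^{\pi_\theta}_i(s',a')]$ and plugging into \eqref{eq::meta_grad_thm} splits $\nabla_\theta L(\theta)$ into a \emph{linear term} and a \emph{nested-expectation term}, each summed over $i\in[n]$, which I would treat separately and then recombine.

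For the linear term $1/(n\tau)\cdot\sum_i \EE_{(s,a)\sim\sigma_{\pi_{i,\theta}}}[A^{\pi_{i,\theta}}_i(s,a)\cdot\phi(s,a)]$, I would apply a change of measure from $\sigma_{\pi_{i,\theta}}$ to $\varsigma_{i,\pi_\theta}$. Since the Radon-Nikodym derivative $\ud\sigma_{\pi_{i,\theta}}/\ud\varsigma_{i,\pi_\theta}$ exists, this expectation equals $\EE_{(s',a')\sim\varsigma_{i,\pi_\theta}}[A^{\pi_{i,\theta}}_i(s',a')\cdot(\ud\sigma_{\pi_{i,\theta}}/\ud\varsigma_{i,\pi_\theta})(s',a')\cdot\phi(s',a')]$, which is exactly the first summand of $g_{i,\theta}$ in \eqref{eq::def_g} multiplied by $\phi(s',a')$.

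For the nested term, the key observation is that the outer expectation over $(s,a)\sim\sigma_{\pi_{i,\theta}}$ together with the inner expectation over $(s',a')\sim\sigma^{(s,a)}_{i,\pi_\theta}$ is precisely integration against the joint meta-visitation measure $\rho_{i,\pi_\theta}(s',a',s,a) = \sigma^{(s,a)}_{i,\pi_\theta}(s',a')\cdot\sigma_{\pi_{i,\theta}}(s,a)$ from \eqref{eq::def_meta_joint}. So this term reads $(\eta\gamma_i)/(n\tau)\cdot\sum_i\EE_{(s',a',s,a)\sim\rho_{i,\pi_\theta}}[\phi(s',a')\cdot A^{\pi_\theta}_i(s',a')\cdot A^{\pi_{i,\theta}}_i(s,a)]$. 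I would then condition on $(s',a')$ and use the tower property: since the marginal of $\rho_{i,\pi_\theta}$ over $(s',a')$ is $\varsigma_{i,\pi_\theta}$ by \eqref{eq::def_meta_visit}, and $\phi(s',a')\cdot A^{\pi_\theta}_i(s',a')$ depends only on $(s',a')$, the inner conditional expectation of $A^{\pi_{i,\theta}}_i(s,a)$ given $(s',a')$ is exactly $G_{i,\pi_\theta}(s',a')$ as defined in \eqref{eq::meta_grad_def_A}. This yields the second summand of $g_{i,\theta}$ times $\phi(s',a')$, and adding the two pieces gives \eqref{eq::meta_gra_refine}.

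The main obstacle I anticipate is purely measure-theoretic bookkeeping rather than any deep difficulty: one must justify the change of measure in the linear term (i.e.\ the absolute continuity $\sigma_{\pi_{i,\theta}}\ll\varsigma_{i,\pi_\theta}$ needed for the Radon-Nikodym derivative to exist) and, in the nested term, carefully verify the Fubini/tower-property manipulation that identifies the marginal of $\rho_{i,\pi_\theta}$ with $\varsigma_{i,\pi_\theta}$ and the conditional expectation with $G_{i,\pi_\theta}$. Once these identifications are in place, the result follows by collecting terms, so the lemma is essentially a re-bookkeeping of Proposition~\ref{prop::meta_grad} that centers every integral on the common reference measure $\varsigma_{i,\pi_\theta}$.
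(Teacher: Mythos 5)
Your proposal is correct and follows essentially the same route as the paper's proof: both substitute the formula from Proposition~\ref{prop::meta_grad}, recognize the nested double expectation as integration against the joint meta-visitation measure $\rho_{i,\pi_\theta}$ and apply the tower property to produce $G_{i,\pi_\theta}$, and handle the linear term by a Radon-Nikodym change of measure from $\sigma_{\pi_{i,\theta}}$ to $\varsigma_{i,\pi_\theta}$. The only difference is cosmetic ordering (the paper first rewrites everything under $\rho_{i,\pi_\theta}$ and then splits, while you split first), and your remark about needing $\sigma_{\pi_{i,\theta}}\ll\varsigma_{i,\pi_\theta}$ is a fair point that the paper leaves implicit.
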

\begin{proof}
See \S\ref{pf::meta_grad_refine} for a detailed proof.
\end{proof}

By Lemma \ref{lem::meta_grad_refine}, we obtain for all  $v \in \cB = \{\theta \in \RR^d: \|\theta\|_2 \leq 1\}$ that
\#\label{eq::pf_eps_opt_eq7}
\nabla_\omega L(\omega)^\top v = \frac{1}{n}\cdot \sum^n_{i = 1} \EE_{(s', a')\sim \varsigma_{i, \pi_{\omega}}}\bigl[g_{i, \omega}(s', a') \cdot  \phi(s', a')^\top v\bigr],
\#
where $g_{i, \omega}$ is defined in \eqref{eq::def_g} of Lemma \ref{lem::meta_grad_refine} with $\theta = \omega$. By plugging \eqref{eq::pf_eps_opt_eq7} into \eqref{eq::pf_eps_opt_eq6}, we obtain that
\#\label{eq::pf_eps_opt_eq8}
L(\theta^*) - L(\omega) &\leq R\cdot\epsilon + \frac{1}{n}\cdot \sum^n_{i = 1}(1 - \gamma	_i)^{-1}\cdot \EE_{(s, a) \sim \sigma_{\pi_{i, \theta^*}}}\bigl[A^{\pi_{i, \omega}}_i(s, a)\bigr] \notag\\
& \qquad\qquad\qquad\qquad- R\cdot \EE_{(s', a')\sim \varsigma_{i, \pi_{\omega}}}\bigl[g_{i, \omega}(s', a') \cdot  \phi(s', a')^\top v\bigr]\notag\\
&\leq R\cdot\epsilon+\EE_{(s', a')\sim \varrho_{\pi_\omega}}\biggl[\frac{1}{n}\cdot\sum^n_{i = 1}(1 - \gamma	_i)^{-1}\cdot A^{\pi_{i, \omega}}_i(s', a')\cdot \frac{\ud \sigma_{\pi_{i, \theta^*}}}{ \ud \varrho_{\pi_\omega}}(s', a')\\
&\qquad\qquad\qquad\qquad\qquad  - \frac{1}{n} \cdot\sum^n_{i = 1} g_{i, \omega}(s', a') \cdot \frac{\ud \varsigma_{i, \pi_{\omega}}}{ \ud\varrho_{\pi_\omega}}(s', a')\cdot \phi(s', a')^\top (R\cdot v) \biggr],\notag
\#
where $\ud \sigma_{i, \pi_{\theta^*}}/ \ud\varrho_{\pi_\omega}$ and $\ud \varsigma_{i, \pi_{\omega}} / \ud\varrho_{\pi_\omega}$ are Radon-Nikodym derivatives, and $\varrho_{\pi_\omega}$ is the mixed meta-visitation measure defined in \eqref{eq::def_mixed_meta} of Definition \ref{def::meta_meta_visit}. By the Cauchy-Schwartz inequality, we obtain from \eqref{eq::pf_eps_opt_eq8} that
\#\label{eq::pf_eps_opt_eq9}
L(\theta^*) - L(\omega) &\leq R\cdot\epsilon + \underbrace{ \biggl\|\frac{1}{n} \cdot\sum^n_{i = 1} g_{i, \omega} \cdot \frac{\ud \varsigma_{i, \pi_{\omega}}}{ \ud \varrho_{\pi_\omega}} \biggr\|_{\varrho_{\pi_\omega}}}_{\textstyle{H}}\cdot \|f_{\omega}(\cdot, \cdot) - \phi(\cdot, \cdot)^\top (R\cdot v)\|_{\varrho_{\pi_\omega}},
\#
where
\#\label{eq::pf_eps_opt_def_f}
f_\omega(s', a') = \biggl(\frac{1}{n}\cdot\sum^n_{i = 1}\frac{A^{\pi_{i, \omega}}_i(s', a')}{1 - \gamma	_i}\cdot \frac{\ud \sigma_{\pi_{i, \theta^*}}}{ \ud\varrho_{\pi_\omega}}(s', a')\biggr)\bigg/\biggl(\frac{1}{n} \cdot\sum^n_{i = 1} g_{i, \omega}(s', a') \cdot \frac{\ud \varsigma_{i, \pi_{\omega}}}{ \ud \varrho_{\pi_\omega}}(s', a')\biggr).
\#

It remains to upper bound the norm $H$ in \eqref{eq::pf_eps_opt_eq9}. By the definition of $g_{i, \omega}$ in \eqref{eq::def_g} of Lemma \ref{lem::meta_grad_refine}, we have
\#\label{eq::pf_eps_opt_eq10}
g_{i, \omega}(s', a') \cdot \frac{\ud \varsigma_{i, \pi_{\omega}}}{ \ud\varrho_{\pi_\omega}}(s', a') &= 1/\tau \cdot A^{\pi_{i, \omega}}_i(s', a') \cdot \frac{\sigma_{\pi_{i, \omega}}}{\varrho_{\pi_\omega}}(s', a') \\
&\qquad+  \gamma_i\cdot\eta/\tau\cdot A_{i, \omega}(s', a')\cdot A^{\pi_\omega}_i(s', a')\cdot \frac{\ud \varsigma_{i, \pi_{\omega}}}{ \ud \varrho_{\pi_\omega}}(s', a'),\notag
\#
which holds for all $i\in[n]$ and $(s', a')\in\cS\times\cA$. By the assumption that the rewards are upper bounded by $Q_{\max}$, we have
\#\label{eq::pf_eps_opt_eq11}
|A^{\pi_\omega}_i(s, a)|\leq 2Q_{\max}, \quad |A^{\pi_{i, \omega}}_i(s, a)| \leq 2Q_{\max}, \quad \forall  (s, a)\in\cS\times\cA, i \in [n].
\#
Meanwhile, by the definition of $A_{i, \omega}$ in \eqref{eq::meta_grad_def_A} of Lemma \ref{lem::meta_grad_refine}, we further have
\#\label{eq::pf_eps_opt_eq12}
|A_{i, \omega}(s', a')| \leq \EE_{(s',a',s,a)\sim \rho_{i, \pi_\omega}}\bigl[|A^{\pi_{i, \omega}}_i(s, a)| \,\big|\, s', a'\bigr] \leq 2Q_{\max}, \quad \forall(s', a')\in\cS\times\cA, i \in [n].
\#
Combining \eqref{eq::pf_eps_opt_eq10}, \eqref{eq::pf_eps_opt_eq11}, and \eqref{eq::pf_eps_opt_eq12}, we have
\#\label{eq::pf_eps_opt_eq13}
\biggl\|g_{i, \omega} \cdot \frac{\ud \varsigma_{i, \pi_{\omega}}}{ \ud \varrho_{\pi_\omega}}\biggr\|_{\varrho_{\pi_\omega}}\leq 2Q_{\max}/\tau\cdot\biggl\|\frac{\ud\sigma_{\pi_{i, \omega}}}{\ud\varrho_{\pi_\omega}}\biggr\|_{\varrho_{\pi_\omega}} + 4Q_{\max}^2\cdot\gamma_i\cdot\eta/\tau\cdot\biggl\|\frac{\ud \varsigma_{i, \pi_{\omega}}}{ \ud \varrho_{\pi_\omega}}\biggr\|_{\varrho_{\pi_\omega}}, \quad\forall i \in [n].
\#
Thus, following from Assumption \ref{asu::concen_coeff} and \eqref{eq::pf_eps_opt_eq13}, we obtain that
\#\label{eq::pf_eps_opt_eq14}
\biggl\|\frac{1}{n} \cdot\sum^n_{i = 1} g_{i, \omega} \cdot \frac{\ud \varsigma_{i, \pi_{\omega}}}{ \ud \varrho_{\pi_\omega}} \biggr\|_{\varrho_{\pi_\omega}} &\leq \frac{1}{n}\cdot\sum^n_{i = 1} \biggl\|g_{i, \omega} \cdot \frac{\ud \varsigma_{i, \pi_{\omega}}}{ \ud \varrho_{\pi_\omega}}\biggr\|_{\varrho_{\pi_\omega}} \leq 2C_0\cdot Q_{\max}/\tau\cdot(1 + 2Q_{\max}\cdot\overline\gamma\cdot\eta),
\#
where we define $\overline \gamma = (\sum^n_{i = 1}\gamma_i)/n$. Finally, by plugging \eqref{eq::pf_eps_opt_eq14} into \eqref{eq::pf_eps_opt_eq9}, we have for all $v\in\cB$ that
\#\label{eq::pf_eps_opt_eq15}
L(\theta^*) - L(\omega) &\leq R\cdot\epsilon + 2C_0\cdot Q_{\max}/\tau\cdot(1 + 2Q_{\max}\cdot\overline\gamma\cdot\eta)\cdot \|f_{\omega}(\cdot, \cdot) - \phi(\cdot, \cdot)^\top (R\cdot v)\|_{\varrho_{\pi_\omega}},
\#
where $f_\omega$ is defined in \eqref{eq::pf_eps_opt_def_f} and $\overline \gamma = (\sum^n_{i = 1}\gamma_i)/n$. By taking the infimum over $v\in\cB$ on the right-hand side of \eqref{eq::pf_eps_opt_eq15}, we complete the proof of Theorem \ref{thm::eps_opt}.
\end{proof}

\subsection{Proof of Theorem \ref{thm::sl_opt}}
\label{pf::sl_opt}
\begin{proof}
By Assumption \ref{asu::diff_risk} and Proposotion \ref{prop::conv}, we have
\#\label{eq::pf_sl_eq000}
R_i(h_{\theta_2}) - R_i(h_{\theta_1}) \leq \langle \delta R_i/\delta h_{\theta_2},  h_{\theta_2} - h_{\theta_1} \rangle_\cH, \quad \forall i \in [n] , \theta_1, \theta_2 \in \RR^d.
\#
Meanwhile, by the definition of meta-objective in \eqref{eq::def_sl_meta_obj}, we have
\#\label{eq::pf_sl_eq001}
L(\omega) - L(\theta^*) = \frac{1}{n}\cdot\sum^n_{i = 1} R_i(h_{\omega_i}) - R_i(h_{\theta^*_i}).
\#
Recall that $\omega_i$ and $\theta^*_i$ are defined as follow,
\$
\omega_i = \omega  - \eta \cdot \nabla_\omega R_i(h_{\omega}), \quad \theta^*_i = \theta^*  - \eta \cdot \nabla_{\theta^*} R_i(h_{\theta^*}), \quad \forall i \in [n].
\$ 
By plugging \eqref{eq::pf_sl_eq000} into \eqref{eq::pf_sl_eq001} with $\theta_2 = \omega_i$ and $\theta_1 = \theta^*_i$, respectively, for $i\in[n]$, we have
\#\label{eq::pf_sl_eq002}
L(\omega) - L(\theta^*) \leq \frac{1}{n}\cdot \sum^n_{i = 1}\langle \delta R_i/\delta h_{\omega_i},  h_{\omega_i} - h_{\theta^*_i} \rangle_\cH.
\#
Thus, combining \eqref{eq::pf_sl_eq002} and the definition of the $\epsilon$-stationary point $\omega$ in \eqref{eq::eps_optimal}, we have
\#\label{eq::pf_sl_eq003}
L(\omega) - L(\theta^*) \leq R\cdot \epsilon -\nabla_\omega L(\omega)^\top (R\cdot v) + \frac{1}{n}\cdot \sum^n_{i = 1}\langle \delta R_i/\delta h_{\omega_i},  h_{\omega_i} - h_{\theta^*_i} \rangle_\cH, 
\#
which holds for all $R >0$ and $v \in \cB = \{\theta \in \RR^d: \|\theta\|_2 \leq 1\}$. 

It suffices to upper bound the right-hand side of \eqref{eq::pf_sl_eq003}. To this end, we first compute the gradient $\nabla_\omega L(\omega)$. By the chain rule, we obtain for all $R_i$ defined in \eqref{eq::sl_task} and $v \in \RR^d$ that
\#\label{eq::pf_sl_eq1}
\nabla_\omega R_i(h_{\omega_i})^\top v = \langle \delta R_i/\delta h_{\omega_i}, (\ud h_{\omega_i}/\ud \omega)^\top v \rangle_\cH,\quad \forall i \in [n].
\#
Meanwhile, by the definition of $\omega_i$ in \eqref{eq::pf_sl_eq2} and the parameterization of hypothesis defined in \eqref{eq::def_H_theta}, we obtain from the chain rule that
\#\label{eq::pf_sl_eq3}
(\ud h_{\omega_i}/\ud \omega)(\cdot) = \bigl(I_d - \eta\cdot \nabla^2_\omega R_i(h_{\omega}) \bigr) \phi(\cdot),
\#
where $I_d$ is the identity matrix of size $d\times d$. By the Leibniz integral rule, we have
\#\label{eq::pf_sl_eq4}
\nabla^2_\omega R_i(h_{\omega}) &= \nabla^2_\omega\EE_{(x, y)\sim \cD_i} \Bigl[\ell\bigl( \phi(x)^\top\omega, (x, y) \bigr)\Bigr]= \int_{\cX\times\cY} \nabla^2_\omega \ell\bigl( \phi(x)^\top \omega, (x, y) \bigr) \ud \cD_i(x, y).
\#
In what follows, we write 
\#\label{eq::pf_sl_N}
N (\omega, x, y) = I_d - \eta\cdot\nabla^2_{\omega}\ell\bigl(\phi(x)^\top\omega, (x, y)\bigr)
\# 
for notational simplicity. By plugging \eqref{eq::pf_sl_eq3} and \eqref{eq::pf_sl_eq4} into \eqref{eq::pf_sl_eq1}, we obtain that
\#\label{eq::pf_sl_eq5}
\nabla_\omega R_i(h_{\omega_i})^\top v = \int_{\cX\times\cY\times\cX}  (\delta R_i/\delta h_{\omega_i})(x')\cdot \bigl(N(\omega, x, y) \phi(x')\bigr)^\top v\ud \cD_i(x, y) \ud \rho(x').
\#
Thus, by the definition of meta-objective in \eqref{eq::def_sl_meta_obj}, we have for all $v \in\RR^d$ that
\#\label{eq::pf_sl_eq6}
\nabla_\omega L(\omega)^\top v &= \frac{1}{n}\cdot \sum^n_{i = 1}\nabla_\omega R_i(h_{\omega_i})^\top v \notag\\
&= \frac{1}{n} \cdot\sum^n_{i = 1}\int_{\cX\times\cY\times\cX}  (\delta R_i/\delta h_{\omega_i})(x')\cdot \bigl(N(\omega, x, y) \phi(x')\bigr)^\top v\ud \cD_i(x, y) \ud \rho(x').
\#

By plugging \eqref{eq::pf_sl_eq6} into \eqref{eq::pf_sl_eq003}, we have
\#\label{eq::pf_sl_eq7}
&L(\omega) - L(\theta^*) \notag\\
&\quad\leq R\cdot \epsilon + \int_\cX \frac{1}{n}\cdot \sum^n_{i = 1} (\delta R_i/\delta h_{\omega_i})(x')\cdot  \bigl(h_{\omega_i}(x') - h_{\theta^*_i}(x') \bigr) \ud \rho(x')\notag\\
&\quad\qquad - \frac{1}{n}\cdot  \sum^n_{i = 1}\int_{\cX\times\cY\times\cX}  (\delta R_i/\delta h_{\omega_i})(x')\cdot \bigl(N(\omega, x, y) \phi(x')\bigr)^\top (R\cdot v)\ud \cD_i(x, y) \ud \rho(x'),
\#
which holds for all $v\in\cB$. Meanwhile, it holds for all $i\in[n]$ that %by Assumption \ref{asu::reg_sl}, 
\#\label{eq::pf_sl_eq8}
&\int_{\cX\times\cY\times\cX}  (\delta R_i/\delta h_{\omega_i})(x')\cdot \bigl(N(\omega, x, y) \phi(x')\bigr)^\top (R\cdot v)\ud \cD_i(x, y) \ud \rho(x')\\
&\quad = \int_{\cX\times\cY\times\cX}  (\delta R_i/\delta h_{\omega_i})(x')\cdot (\ud \cD_i/\ud \cM)(x, y) \cdot \bigl(N(\omega, x, y) \phi(x')\bigr)^\top (R\cdot v)\ud \cM(x, y)\ud \rho(x'),\notag
\#%of Assumption \ref{asu::reg_sl}
where $\cM$ is the mixed distribution defined in \eqref{eq::def_mix_sl} and $\ud \cD_i/\ud \cM$ is the Radon-Nikodym derivative. Thus, by plugging \eqref{eq::pf_sl_eq8} into \eqref{eq::pf_sl_eq7}, we obtain for all $R >0$ and $v\in\cB$ that
\#\label{eq::pf_sl_eq9}
&L(\omega) - L(\theta^*) \notag\\
&\quad\leq R\cdot \epsilon +\int_\cX \frac{1}{n}\cdot \sum^n_{i = 1} (\delta R_i/\delta h_{\omega_i})(x')\cdot  \bigl(h_{\omega_i}(x') - h_{\theta^*_i}(x') \bigr) \ud \rho(x')\\
&\quad\qquad - \int_{\cX\times\cY\times\cX}  \frac{1}{n}\cdot \sum^n_{i = 1}  (\delta R_i/\delta h_{\omega_i})(x')\cdot  (\ud \cD_i/\ud \cM)(x, y)\cdot \bigl(N(\omega, x, y) \phi(x')\bigr)^\top (R\cdot v)\ud \cM(x, y)\ud \rho(x')\notag\\
&\quad \leq R\cdot \epsilon +\| w \|_{\cM\cdot\rho}\cdot \biggl(\int _{\cX\times\cY\times\cX} u(x, y, x')  - \bigl(N(\omega, x, y)\phi(x')\bigr)^\top (R\cdot v)\ud \cM(x, y)\ud \rho(x')\biggr)^{1/2},\notag
\#
where the second inequality follows from the Cauchy-Schwartz inequality, and $N$ is defined in \eqref{eq::pf_sl_N}. Here we define 
\$%#\label{eq::pf_sl_uw}
w(x, y, x') &= \frac{1}{n}\cdot \sum^n_{i = 1}  (\delta R_i/\delta h_{\omega_i})(x')\cdot  (\ud \cD_i/\ud \cM)(x, y),\notag\\
u(x,y, x') &= \biggl(\frac{1}{n}\cdot \sum^n_{i = 1} (\delta R_i/\delta h_{\omega_i})(x')\cdot  \bigl(h_{\omega_i}(x') - h_{\theta^*_i}(x') \bigr)\biggr) \bigg/w(x, y, x'),
\$%#
and we define $\|w\|_{\cM\cdot \rho} = (\int w^2(x, y, x') \ud \cM(x, y)\ud \rho(x'))^{1/2}$ the $L_2(\cM\cdot\rho)$-norm of $w$. Thus, by taking the infimum on the right-hand side of \eqref{eq::pf_sl_eq9} over $v\in\cB$ and setting $\phi_{\ell, \omega}(x, y, x') = N(\omega, x, y)\phi(x')$ for all $(x, y, x')\in\cX\times\cY\times\cX$, we complete the proof of Theorem \ref{thm::sl_opt}.
\end{proof}

\subsection{Proof of Corollary \ref{cor::opt_sl_sqloss}}
\label{pf::opt_sl_sqloss}
\begin{proof}
The proof is similar to that of Theorem \ref{thm::sl_opt} in \S\ref{pf::sl_opt}. By \eqref{eq::pf_sl_eq003} in the proof of Theorem \ref{thm::sl_opt}, we have
\#\label{eq::pf_sqloss_eq1}
L(\omega) - L(\theta^*) \leq R\cdot \epsilon -\nabla_\omega L(\omega)^\top (R\cdot v) + \frac{1}{n}\cdot \sum^n_{i = 1}\langle \delta R_i/\delta h_{\omega_i},  h_{\omega_i} - h_{\theta^*_i} \rangle_\cH.
\#
In what follows, we upper bound the right-hand side of \eqref{eq::pf_sqloss_eq1}. To this end, we first compute the gradient $\nabla_\omega L(\omega)$. By \eqref{eq::pf_sl_eq5} in the proof of Theorem \ref{thm::sl_opt} in \S\ref{pf::sl_opt}, we have
\#\label{eq::pf_sqloss_eq2}
\nabla_\omega R_i(h_{\omega_i})^\top v = \int_{\cX\times\cY\times\cX}  (\delta R_i/\delta h_{\omega_i})(x')\cdot \bigl(N(\omega, x, y) \phi(x')\bigr)^\top v\ud \cD_i(x, y) \ud \rho(x'),
\#
where
\#\label{eq::pf_sqloss_eqN}
N (\omega, x, y) = I_d - \eta\cdot\nabla^2_{\omega}\ell\bigl(\phi(x)^\top\omega, (x, y)\bigr).
\#
Note that for $\ell(h, (x, y)) = (y - h(x))^2$, we have
\#\label{eq::pf_sqloss_eqN1}
\nabla^2_{\omega}\ell\bigl(\phi(x)^\top\omega, (x, y)\bigr) = 2 \phi(x)\phi(x)^\top,
\#
which does not depend on $y$. Thus, for $\overline \cD_i(x)= \int_\cY \cD_i(x, y)\ud y = \rho(x)$, we obtain from \eqref{eq::pf_sqloss_eqN} and \eqref{eq::pf_sqloss_eqN1} that
\#\label{eq::pf_sqloss_eqN2}
\int_{\cX\times\cY} N(\omega, x, y) \ud \cD_i(x, y) = I_d - \eta\cdot \int_\cX 2\phi(x)\phi(x)^\top \ud \rho(x) =\EE_{x\sim \rho} \bigl[I_d - 2\eta\cdot\phi(x)\phi(x)^\top \bigr].
\#
By further plugging \eqref{eq::pf_sqloss_eqN2} into \eqref{eq::pf_sqloss_eq2}, we obtain that
\#\label{eq::pf_sqloss_eq3}
\nabla_\omega R_i(h_{\omega_i})^\top v = \int_{\cX}  (\delta R_i/\delta h_{\omega_i})(x')\cdot \bigl(K_\eta \phi(x')\bigr)^\top v\ud \rho(x'),
\#
where we define 
\#\label{eq::pf_sqloss_K}
K_\eta = \EE_{x\sim \rho} \bigl[I_d - 2\eta\cdot\phi(x)\phi(x)^\top \bigr].
\#
Thus, by the definition of meta-objective in \eqref{eq::def_sl_meta_obj}, it holds for all $v \in\RR^d$ that
\#\label{eq::pf_sqloss_eq4}
\nabla_\theta L(\omega)^\top v &= \frac{1}{n}\cdot \sum^n_{i = 1}\nabla_\omega R_i(h_{\omega_i})^\top v = \frac{1}{n}\cdot \sum^n_{i = 1}\int_{\cX}  (\delta R_i/\delta h_{\omega_i})(x')\cdot \bigl(K_\eta \phi(x')\bigr)^\top v \ud \rho(x'),
\#
where $K_\eta$ is defined in \eqref{eq::pf_sqloss_K}. By plugging \eqref{eq::pf_sqloss_eq4} into \eqref{eq::pf_sqloss_eq1}, we have
\#\label{eq::pf_sqloss_eq5}
&L(\omega) - L(\theta^*) \notag\\
&\quad\leq R\cdot \epsilon +\int_\cX \frac{1}{n}\cdot \sum^n_{i = 1} (\delta R_i/\delta h_{\omega_i})(x')\cdot  \bigl(h_{\omega_i}(x') - h_{\theta^*_i}(x') \bigr) \ud \rho(x')\notag\\
&\quad\qquad - \int_\cX \frac{1}{n}\cdot\sum^n_{i = 1}(\delta R_i/\delta h_{\omega_i})(x')\cdot \bigl(K_\eta \phi(x')\bigr)^\top(R\cdot v) \ud \rho(x'),
\#
which holds for all $v \in \cB = \{\theta \in \RR^d: \|\theta\|_2 = 1\}$. By the Cauchy-Schwartz inequality, we obtain from \eqref{eq::pf_sqloss_eq5} that
\#\label{eq::pf_sqloss_eq6}
L(\omega) - L(\theta^*) \leq R\cdot \epsilon + \|w\|_\rho\cdot \|u -(K_\eta\phi)^\top (R\cdot v)\|_\rho,
\# 
which holds for all $v \in \cB$. Here we define for all $x'\in\cX$ that
\#\label{eq::pf_sqloss_eq7}
w(x') &= \frac{1}{n}\cdot\sum^n_{i = 1}(\delta R_i/\delta h_{\omega_i})(x'), \notag\\
u(x')& = \biggl( \frac{1}{n}\cdot \sum^n_{i = 1} (\delta R_i/\delta h_{\omega_i})(x')\cdot  \bigl(h_{\omega_i}(x') - h_{\theta^*_i}(x') \bigr)\biggr)\bigg/w(x').
\#

It remains to upper bound the norm $\|w\|_\rho$, where $w$ is defined in \eqref{eq::pf_sqloss_eq7}. By Proposition \ref{prop::sqloss_diff}, it holds for all $x'\in\cX$ that
\#\label{eq::pf_sqloss_diff}
(\delta R_i/\delta h)(x')  = 2\EE_{(x, y)\sim \cD_i}\bigl[h(x)- y~\big | ~ x = x'\bigr].
\#
Thus, by the fact that $\overline \cD_i(x) = \int_\cY \cD_i(x, y) \ud y = \rho(x)$, we obtain from \eqref{eq::pf_sqloss_diff} that
\#\label{eq::pf_sqloss_eq8}
\|\delta R_i/\delta h\|^2_\rho &= 4\int \Bigl\{\EE_{(x, y)\sim \cD_i}\bigl[h(x)- y~\big | ~ x = x'\bigr]\Bigr\}^2 \ud \rho(x') \notag\\
&\leq 4 \int \bigl(h(x) - y\bigr)^2 \ud \cD_i(y\given x) \ud \rho(x)\notag\\
&= 4 \EE_{(x, y)\sim \cD_i}\Bigl[\bigl(y - h(x)\bigr)^2\Bigr],
\#
where the second inequality follows from Jensen's inequality, and we denote by $\cD_i(y\given x)$ the conditional distribution of $y$ given $x$ for $(x, y)\sim \cD_i$. Thus, following from \eqref{eq::pf_sqloss_eq8} and the definition of $w$ in \eqref{eq::pf_sqloss_eq7}, we obtain that
\#\label{eq::pf_sqloss_eq9}
\|w\|_\rho \leq \frac{1}{n} \sum^n_{i = 1} \|\delta R_i/\delta h_{\omega_i}\|_\rho \leq \frac{2}{n}\cdot \sum^n_{i = 1} \Bigl\{\EE_{(x, y)\sim \cD_i}\Bigl[\bigl(y - h_{\omega_i}(x)\bigr)^2\Bigr]\Bigr\}^{1/2}.
\#
Finally, by plugging \eqref{eq::pf_sqloss_eq9} into \eqref{eq::pf_sqloss_eq7}, we have
\#\label{eq::pf_sqloss_eq10}
L(\omega) - L(\theta^*) \leq R\cdot \epsilon + 2 \overline R\cdot \|u - (K_\eta\phi)^\top(R\cdot v)\|_\rho,
\#
which holds for all $v \in \cB$. Here we define $u$ and $K_\eta$ in \eqref{eq::pf_sqloss_eq7} and \eqref{eq::pf_sqloss_K}, respectively, and we define $\overline R$ as follows,
\$
\overline R  = \frac{1}{n}\sum^n_{i = 1} R_i^{1/2}(h_{\omega_i})=\frac{1}{n}\sum^n_{i = 1}\Bigl\{\EE_{(x, y) \sim \cD_i}\Bigl[\bigl(y - h_{\omega_i}(x)\bigr)^2\Bigr]\Bigr\}^{1/2}.
\$
Thus, by taking the infimum over $v \in \cB = \{\theta\in\RR^d: \|\theta\|_2 \leq 1\}$ on the right-hand side of \eqref{eq::pf_sqloss_eq10}, we complete the proof of Corollary \ref{cor::opt_sl_sqloss}.
\end{proof}

\subsection{Proof of Corollary \ref{cor::eps_nn_opt}}
\label{pf::eps_nn_opt}
\begin{proof}
The proof hinges on the following lemma, which is adapted from \cite{cai2019neural}.
\begin{lemma}[Linearization Error \citep{cai2019neural}]
\label{lem::lin_err}
Under Assumption \ref{asu::reg_cond_rl}, it holds for $\omega_0, \omega_1, \omega_2 \in \cB = \{\theta\in\RR^{md}:\|\theta - W_{\text{\rm init}}\|_2 \leq R\}$ that
\$
\EE_{\text{\rm init}}\bigl[\|\phi_{\omega_0}(\cdot,\cdot)^\top \omega_2 - \phi_{\omega_1}(\cdot, \cdot)^\top\omega_2\|^2_{\varrho_{\pi_\theta}}\bigr] = \cO(R^{3}\cdot m^{-1/2}),
\$
where $\varrho_{\pi_\theta}$ is the mixed visitation measure defined in \eqref{eq::def_mixed_meta} of Definition \ref{def::meta_meta_visit}.
\end{lemma}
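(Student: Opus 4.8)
The plan is to reduce the squared $L_2(\varrho_{\pi_\theta})$-norm to a counting argument over the neurons whose activation pattern disagrees between $\omega_0$ and $\omega_1$, showing that each such neuron contributes only $\cO(R/\sqrt m)$ on average. Using the representation $\phi_W(x)^\top W' = m^{-1/2}\sum_{r=1}^m b_r\,(x^\top [W']_r)\,\ind\{[W]_r^\top x > 0\}$ that follows from \eqref{eq::def_nn_phi}, I would first write, for $x = (s, a)$,
\[
\phi_{\omega_0}(x)^\top \omega_2 - \phi_{\omega_1}(x)^\top \omega_2 = \frac{1}{\sqrt m}\sum_{r = 1}^m b_r\,(x^\top [\omega_2]_r)\bigl(\ind\{[\omega_0]_r^\top x > 0\} - \ind\{[\omega_1]_r^\top x > 0\}\bigr),
\]
so that the $r$-th term vanishes unless the two activation indicators disagree.

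The key structural observation is that, because $\|x\|_2 = \|(s,a)\|_2 \le 1$ and both $\omega_0, \omega_1 \in \cB$ lie within $R$ of $W_{\text{\rm init}}$, the two indicators can disagree only when the pre-activation at initialization is small. Setting $\beta_r = \max\{\|[\omega_0]_r - [W_{\text{\rm init}}]_r\|_2, \|[\omega_1]_r - [W_{\text{\rm init}}]_r\|_2\}$ and $R_{2,r} = \|[\omega_2]_r - [W_{\text{\rm init}}]_r\|_2$, a sign-comparison argument shows that the $r$-th term is supported on the event $E_r(x) = \{|[W_{\text{\rm init}}]_r^\top x| \le \beta_r\}$. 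Crucially, on $E_r(x)$ the same smallness of the pre-activation also forces $|x^\top[\omega_2]_r| \le \beta_r + R_{2,r}$; this is what keeps each contribution of order $R/\sqrt m$ rather than $1/\sqrt m$ times the (large) full weight norm $\|[\omega_2]_r\|_2$.

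Combining these two facts and applying Cauchy--Schwarz over $r$, I would obtain, for each fixed $x$,
\[
\bigl(\phi_{\omega_0}(x)^\top \omega_2 - \phi_{\omega_1}(x)^\top \omega_2\bigr)^2 \le \frac{1}{m}\Bigl(\sum_{r=1}^m (\beta_r + R_{2,r})^2\Bigr)\Bigl(\sum_{r=1}^m \ind\{E_r(x)\}\Bigr).
\]
The first factor is deterministically $\cO(R^2)$ for any $\omega_0, \omega_1, \omega_2 \in \cB$, since $\sum_r \beta_r^2 \le 2R^2$ and $\sum_r R_{2,r}^2 \le R^2$. Taking $\EE_{x\sim\varrho_{\pi_\theta}}$ of the second factor and invoking Assumption \ref{asu::reg_cond_rl} with $y = [W_{\text{\rm init}}]_r$ and $u = \beta_r$ gives $\EE_x[\sum_r \ind\{E_r(x)\}] \le c\sum_r \beta_r/\|[W_{\text{\rm init}}]_r\|_2$. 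It then remains to take $\EE_{\text{\rm init}}$: a second Cauchy--Schwarz bounds $\sum_r \beta_r/\|[W_{\text{\rm init}}]_r\|_2 \le \sqrt2\,R\,(\sum_r \|[W_{\text{\rm init}}]_r\|_2^{-2})^{1/2}$, and since $\|[W_{\text{\rm init}}]_r\|_2^2$ is distributed as $\chi^2_d/d$ with inverse second moment $d/(d-2) = \cO(1)$, Jensen's inequality yields $\EE_{\text{\rm init}}[(\sum_r \|[W_{\text{\rm init}}]_r\|_2^{-2})^{1/2}] = \cO(\sqrt m)$. Assembling the three factors gives $m^{-1}\cdot\cO(R^2)\cdot\cO(R\sqrt m) = \cO(R^3 m^{-1/2})$, as claimed.

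The step I expect to be the main obstacle is the interaction, under $\EE_{\text{\rm init}}$, between the perturbation magnitudes $\beta_r, R_{2,r}$ and the random weight norms $\|[W_{\text{\rm init}}]_r\|_2$: because $\omega_0, \omega_1, \omega_2$ are constrained only relative to $W_{\text{\rm init}}$, they may depend on the initialization, so these quantities cannot be treated as independent. The decoupling via the deterministic bounds $\sum_r \beta_r^2, \sum_r R_{2,r}^2 = \cO(R^2)$ together with the inverse-moment control of $\chi^2_d$ is what makes the initialization average tractable, and getting the exponents right ($R^3$ and $m^{-1/2}$) relies essentially on using the activation-flip smallness to bound $|x^\top[\omega_2]_r|$ and not merely the number of flipped neurons.
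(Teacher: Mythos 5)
Your proposal is correct, but it takes a genuinely different route from the paper. The paper does not reprove the linearization bound at all: it reduces Lemma \ref{lem::lin_err} to the cited result of \cite{cai2019neural} (stated in the appendix as Lemma \ref{lem::lin_err_origin} for i.i.d.\ rows $[W_{\text{\rm init}}]_r \sim N(0, I_d/d)$), inserting $\phi_0 = \phi_{W_{\text{\rm init}}}$ as an intermediate point via $\|f+g\|^2_{\varrho_{\pi_\theta}} \leq 2\|f\|^2_{\varrho_{\pi_\theta}} + 2\|g\|^2_{\varrho_{\pi_\theta}}$, and then splitting all vectors into upper and lower half-blocks so that the cited lemma (which needs mutual independence of the rows) can be applied to each half separately, since the paper's symmetric initialization couples $[W_{\text{\rm init}}]_r = [W_{\text{\rm init}}]_{r+m/2}$. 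Your proof instead establishes the bound from first principles --- activation-flip support $\{|[W_{\text{\rm init}}]_r^\top x| \leq \beta_r\}$, the crucial observation that on this event $|x^\top [\omega_2]_r| \leq \beta_r + R_{2,r}$ (which is exactly what keeps the rate at $R^3$ rather than involving $\|[\omega_2]_r\|_2$), Cauchy--Schwarz over neurons, Assumption \ref{asu::reg_cond_rl}, and the inverse $\chi^2_d$ moment --- so you are essentially reproving the content of the \cite{cai2019neural} lemma. What your route buys: it is self-contained, it handles the possible dependence of $\omega_0, \omega_1, \omega_2$ on $W_{\text{\rm init}}$ via deterministic ball bounds exactly as needed, and notably it does not require the upper/lower splitting at all, because you only use the marginal law of each $[W_{\text{\rm init}}]_r$ and linearity of expectation, never independence across neurons; it also bounds $\phi_{\omega_0} - \phi_{\omega_1}$ directly without the detour through $\phi_0$. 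What the paper's route buys is brevity and delegation of the probabilistic work to the cited reference. Two minor caveats on your argument: (i) $\EE[\|[W_{\text{\rm init}}]_r\|_2^{-2}] = d/(d-2)$ requires $d \geq 3$, so for $d \leq 2$ you would need to reorganize (e.g., take $\EE_{\text{\rm init}}$ of $\beta_r/\|[W_{\text{\rm init}}]_r\|_2$ term by term before applying Cauchy--Schwarz over $r$, which only needs the first inverse moment); (ii) strictly speaking $\beta_r$ may depend on $W_{\text{\rm init}}$, so when you invoke Assumption \ref{asu::reg_cond_rl} with $u = \beta_r$ you should note you are applying it conditionally on the initialization, which is legitimate since the assumption holds for every fixed $y$ and $u$.
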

\begin{proof}
See \S\ref{pf::lin_err} for a detailed proof.
\end{proof}

Note that $\nabla_\omega f((s, a); \omega) = \phi_{\omega}(s, a)$, which holds almost everywhere for $(s, a)\in\cS\times\cA$. Here $\phi_\omega$ is the feature mapping defined in \eqref{eq::def_nn_phi} with $W = \omega$. Hence, following from similar analysis to that in the proof of Theorem \ref{thm::eps_opt} in \S\ref{pf::thm_eps_opt}, we obtain that
\#\label{eq::pf_nnrl_eq1}
L(\theta^*) - L(\omega) &\leq \epsilon + C\cdot \| f_{\omega}(\cdot, \cdot) - \phi_\omega(\cdot, \cdot)^\top (v- \omega)\|_{\varrho_{\pi_\omega}},
\#
which holds for all $v \in \cB_{\text{\rm init}}$. Here $C = 2C_0\cdot Q_{\max}/\tau\cdot(1 + 2Q_{\max}\cdot\overline\gamma\cdot\eta)$, $C_0$ is defined in Assumption \ref{asu::concen_coeff}, and $\overline \gamma = (\sum^n_{i = 1}\gamma_i)/n$. Meanwhile, we define
\#\label{eq::pf_nnrl_f}
f_\omega(s', a') =   \biggl(\sum^n_{i = 1}\frac{A^{\pi_{i, \omega}}_i(s', a')}{1 - \gamma_i}\cdot \frac{\ud \sigma_{\pi_{i, \theta^*}}}{ \ud \varrho_{\pi_\omega}}(s', a')\biggr)\bigg/\biggl(\sum^n_{i = 1} g_{i, \omega}(s', a') \cdot \frac{\ud \varsigma_{i, \pi_{\omega}}}{ \ud \varrho_{\pi_\omega}}(s', a')\biggr),
\#
where $g_{i, \omega}$ is defined in \eqref{eq::def_g} of Lemma \ref{lem::meta_grad_refine}. In what follows, we define $v_0 \in \cB_{\text{\rm init}}$ as follows,
\#\label{eq::pf_nnrl_v0}
v_0 \in \argmin_{v\in\cB_{\text{\rm init}}}\|f_{\omega}(\cdot, \cdot)+ \phi_{\omega}(\cdot, \cdot)^\top \omega - \phi_v(\cdot, \cdot)^\top v\|_{\varrho_{\pi_\omega}}.
\#
It then holds from \eqref{eq::pf_nnrl_eq1} that
\#\label{eq::pf_nnrl_eq2}
L(\theta^*) - L(\omega) &\leq \epsilon + C\cdot \bigl\|f_{\omega}(\cdot, \cdot) - \phi_\omega(\cdot, \cdot)^\top (v_0 - \omega)\|_{\varrho_{\pi_\omega}}\\
&\leq \epsilon + C\cdot \|f_{\omega}(\cdot, \cdot) + \phi_{\omega}(\cdot, \cdot)^\top \omega - \phi_{v_0}(\cdot, \cdot)^\top v_0\|_{\varrho_{\pi_\omega}} + \bigl\|\bigl(\phi_{\omega}(\cdot, \cdot) - \phi_{v_0}(\cdot, \cdot)\bigr)^\top v_0\bigr\|_{\varrho_{\pi_\omega}}.\notag
\#
Note that $v_0, \omega \in \cB_{\text{\rm init}}$. Thus, following from Lemma \ref{lem::lin_err}, upon taking expectation of \eqref{eq::pf_nnrl_eq2} over the random initialization, we obtain that
\#\label{eq::pf_nnrl_eq3}
\EE_{\text{\rm init}}\bigl[L(\theta^*) - L(\omega)\bigr] \leq  \epsilon + C\cdot \EE_{\text{\rm init}}\bigl[\|f_{\omega}(\cdot, \cdot) + \phi_{\omega}^\top \omega - \phi_{v_0}(\cdot, \cdot)^\top v_0\|_{\varrho_{\pi_\omega}}\bigr] + \cO(R_T^{3/2}\cdot m^{-1/4}).
\#
Note that by the definition of neural network and feature mapping in \eqref{eq::def_nn} and \eqref{eq::def_nn_phi}, respectively, we have 
\$
f((\cdot, \cdot); v_0) = \phi_{v_0}(\cdot, \cdot)^\top v_0, \quad f((\cdot, \cdot); \omega) = \phi_{\omega}(\cdot, \cdot)^\top \omega.
\$
Thus, by plugging the definition of $v_0$ in \eqref{eq::pf_nnrl_v0} into \eqref{eq::pf_nnrl_eq3} and setting $c_\omega(\cdot, \cdot) = f((\cdot, \cdot); \omega)+ f_\omega(\cdot, \cdot)$, we have
\$
\EE_{\text{\rm init}}\bigl[L(\theta^*) - L(\omega)\bigr] &\leq  \epsilon + C\cdot \EE_{\text{\rm init}}\Bigl[ \inf_{v\in\cB} \bigl\|c_{\omega}(\cdot, \cdot) - f\bigl((\cdot, \cdot); v\bigr)\bigr\|_{\varrho_{\pi_\omega}}\Bigr] + \cO(R_T^{3/2}\cdot m^{-1/4}),
\$
which completes the proof of Corollary \ref{cor::eps_nn_opt}.
\end{proof}

\subsection{Proof of Corollary \ref{cor::nnsl}}
\label{pf::nnsl}
\begin{proof}
The proof is similar to that of Corollary \ref{cor::opt_sl_sqloss} in \S\ref{pf::opt_sl_sqloss}. Similar to \eqref{eq::pf_sl_eq003} in the proof of Theorem \ref{thm::sl_opt}, we have
\#\label{eq::pf_nnsl_eq1}
L(\omega) - L(\theta^*) \leq  \epsilon -\nabla_\omega L(\omega)^\top  (\omega - v) + \frac{1}{n}\cdot \sum^n_{i = 1}\langle \delta R_i/\delta h_{\omega_i},  h_{\omega_i} - h_{\theta^*_i} \rangle_\cH,
\#
which holds for all $v \in\cB_{\text{\rm init}}$. It then suffices to upper bound the right-hand side of \eqref{eq::pf_nnsl_eq1}. To this end, we first calculate the gradient $\nabla_\omega L(\omega)$. Similar to \S\ref{pf::opt_sl_sqloss}, we have
\#\label{eq::pf_nnsl_eq2}
&\nabla_\omega R_i(h_{\omega_i})^\top (\omega-v) \notag\\
&\qquad= \int_{\cX\times\cY\times\cX}  (\delta R_i/\delta h_{\omega_i})(x')\cdot \bigl(N(\omega, x, y) \phi_{\omega_i}(x')\bigr)^\top(\omega- v)\ud \cD_i(x, y) \ud \rho(x'),
\#
where $\phi_{\omega_i}$ is the feature mapping defined in \eqref{eq::def_nn_phi} with $W = \omega_i$ and
\#\label{eq::pf_nnsl_eqN}
N (\omega, x, y) = I_{md} - \eta\cdot\nabla^2_{\omega}\ell\bigl(h_{\omega}, (x, y)\bigr).
\#
Note that for $\ell(h, (x, y)) = (y - h(x))^2$, we have
\#\label{eq::pf_nnsl_eqN1}
\nabla^2_{\omega}\ell\bigl(h_{\omega}, (x, y)\bigr) = 2 \bigl(\nabla_\omega h_\omega(x)\bigr)\bigl(\nabla_\omega h_\omega(x)\bigr)^\top + 2(h_{\omega}(x) - y)\nabla^2_\omega h_\omega(x).
\#
Meanwhile, by the parameterization $h_\omega(x) = f(x; \omega)$ defined in \eqref{eq::def_nn},we obtain that $\nabla_\omega h_\omega(x) = \phi_\omega(x)$ and $\nabla^2_\omega h_\omega(x) = 0$, which holds almost everywhere for $x\in \cX$. Thus, it follows from \eqref{eq::pf_sqloss_eqN1} that
\#\label{eq::pf_nnsl_eqN2}
\nabla^2_{\omega}\ell\bigl(h_{\omega}, (x, y)\bigr) = 2\phi_\omega(x)\phi_\omega(x)^\top, 
\#
which holds almost everywhere for $x\in\cX$. Moreover, for a fixed $x$, \eqref{eq::pf_nnsl_eqN2} holds almost everywhere on $\omega\in\RR^{md}$. Here recall that $\phi_\omega$ is the feature mapping defined in \eqref{eq::def_nn_phi} with $W = \omega$. 
Hence, we can obtain \eqref{eq::pf_nnsl_eqN2} uniformly for all $x\in\cX$ by setting the second order derivative of the neural network with respect to the parameter to be zero in the optimization of meta-objective when it is infinite (which occurs with zero probability and does not affect the convergence of meta-SL). By plugging \eqref{eq::pf_nnsl_eqN2} and \eqref{eq::pf_nnsl_eqN} into \eqref{eq::pf_nnsl_eq2}, we have
\#\label{eq::pf_nnsl_eq3}
\nabla_\omega R_i(h_{\omega_i})^\top v = \int_{\cX}  (\delta R_i/\delta h_{\omega_i})(x')\cdot  \phi_{\omega_i}(x')^\top K_{\omega, \eta} (\omega-v) \ud \rho(x'),
\#
where we define
\#\label{eq::pf_nnsl_K}
K_{\omega, \eta} =I_{md} - \eta\cdot \EE_{x\sim \rho}\bigl[2\phi_\omega(x)\phi_\omega(x)^\top\bigr].
\#
In the sequel, we define 
\#\label{eq::pf_nnsl_phi_0}
\phi_0(x) = \phi_{W_{\text{\rm init}}}(x)
\#
for notational simplicity, where $W_{\text{\rm init}}$ is the initial parameter of the neural network, and $\phi_{W_{\text{\rm init}}}$ is the feature mapping defined in \eqref{eq::def_nn_phi} with $W = W_{\text{\rm init}}$. It then follows from \eqref{eq::pf_nnsl_eq3} that
\#\label{eq::pf_nnsl_eq4}
\nabla_\omega R_i(h_{\omega_i})^\top (\omega - v) &= \int_{\cX}  (\delta R_i/\delta h_{\omega_i})(x')\cdot  \phi_{0}(x')^\top K_{\omega, \eta} (\omega-v) \ud \rho(x') \\
&\qquad+ \int_{\cX}  (\delta R_i/\delta h_{\omega_i})(x')\cdot \bigl( \phi_{\omega_i} - \phi_{0}(x')\bigr)^\top K_{\omega, \eta} (\omega-v)\ud \rho(x').\notag
\#
Thus, by the definition of meta-objective in \eqref{eq::def_sl_meta_obj}, it follows from \eqref{eq::pf_nnsl_eq4} that
\#\label{eq::pf_nnsl_eq5}
\nabla_\omega L(\omega)^\top  (\omega - v) &= \frac{1}{n}\cdot\sum^n_{i = 1}\nabla_\omega R_i(h_{\omega_i})^\top (\omega - v)\notag\\
&= \frac{1}{n}\cdot\sum^n_{i = 1}\int_{\cX}  (\delta R_i/\delta h_{\omega_i})(x')\cdot  \phi_{0}(x')^\top K_{\omega, \eta} (\omega-v) \ud \rho(x')+ \frac{1}{n}\cdot\sum^n_{i = 1} P_i,
\#
where recall that we define $\phi_0$ in \eqref{eq::pf_nnsl_phi_0}, and we define
\#\label{eq::pf_nnsl_Pi}
P_i = \int_{\cX}  (\delta R_i/\delta h_{\omega_i})(x')\cdot \bigl( \phi_{\omega_i} - \phi_{0}(x')\bigr)^\top K_{\omega, \eta} (\omega-v)\ud \rho(x').
\#

Similar to \eqref{eq::pf_sqloss_eq6} in the proof of Corollary \ref{cor::opt_sl_sqloss} in \S\ref{pf::opt_sl_sqloss}, we obtain from \eqref{eq::pf_nnsl_eq1} and \eqref{eq::pf_nnsl_eq5} that
\#\label{eq::pf_nnsl_eq7}
L(\omega) - L(\theta^*) \leq  \epsilon +  \int_{\cX}w(x')\cdot \bigl(u(x') - \phi_{0}(x')^\top K_{\omega, \eta} (\omega-v)\bigr) \ud \rho(x') +\frac{1}{n}\cdot \sum^n_{i=1}P_i,
\#
which holds for all $v\in\cB_{\text{\rm init}}$. Here we define for all $x'\in\cX$ that
\#\label{eq::pf_nnsl_eq8}
w(x') &=\frac{1}{n}\cdot\sum^n_{i = 1}(\delta R_i/\delta h_{\omega_i})(x')\notag\\
u(x') &= \biggl( \sum^n_{i = 1} (\delta R_i/\delta h_{\omega_i})(x')\cdot  \bigl(h_{\omega_i}(x') - h_{\theta^*_i}(x') \bigr)\biggr) \bigg/\biggl(\sum^n_{i = 1}  (\delta R_i/\delta h_{\omega_i})(x')\biggr).
\#
In what follows, we fix $v \in \cB_{\text{\rm init}}$ as follows,
\#\label{eq::pf_nnsl_eq9}
v \in \argmin_{\theta \in \cB_{\text{\rm init}}} \bigl\|u(\cdot) - f\bigl(\cdot; K_{\omega, \eta}(\omega - \theta) + W_{\text{\rm init}}\bigr)\bigr\|_\rho.
\#
Meanwhile, we define
\#\label{eq::pf_nnsl_s}
s = K_{\omega, \eta}(\omega - v) + W_{\text{\rm init}},
\#
where $v$ is fixed in \eqref{eq::pf_nnsl_eq9}. Note that $f(x; W_{\text{\rm init}})=\phi_0(x)^\top W_{\text{\rm init}} = 0$ for all $x\in\cX$ by the initialization of neural networks. It then holds from \eqref{eq::pf_nnsl_eq7} that
\#\label{eq::pf_nnsl_eq10}
L(\omega) - L(\theta^*) &\leq  \epsilon +  \int_{\cX}w(x')\cdot \Bigl(u(x') - \phi_{0}(x')^\top \bigl(K_{\omega, \eta} (\omega-v) + W_{\text{\rm init}}\bigr)\Bigr) \ud \rho(x') + \frac{1}{n}\cdot\sum^n_{i=1}P_i\notag\\
&\leq  \epsilon + \int_{\cX}w(x')\cdot \bigl(u(x') - \phi_{s}(x')^\top s \bigr)\ud \rho(x')+P_0 + \frac{1}{n}\cdot\sum^n_{i=1}P_i,
\#
where we define $P_i$ and $s$ in \eqref{eq::pf_nnsl_Pi} and \eqref{eq::pf_nnsl_s}, respectively, and we define
\#\label{eq::pf_nnsl_P0}
P_0 = \int_{\cX}w(x')\cdot \bigl(\phi_0(x') - \phi_{s}(x')\bigr)^\top s \ud \rho(x').
\#
By further plugging \eqref{eq::pf_nnsl_eq9} into \eqref{eq::pf_nnsl_eq10}, we have
\#\label{eq::pf_nnsl_eq11}
L(\omega) - L(\theta^*) &\leq \epsilon + \|w\|_\rho\cdot \inf_{\theta \in \cB_0} \|u(\cdot) - f(\cdot; \theta)\|_\rho + P_0 + \frac{1}{n}\cdot\sum^n_{i=1}P_i,
\#
where we define
\#\label{eq::pf_nnsl_B0}
\cB_0 = K_{\omega, \eta}(\omega - \cB_{\text{\rm init}}) + W_{\text{\rm init}}.
\#
Following from \eqref{eq::pf_sqloss_eq9} in the proof of Corollary \ref{cor::opt_sl_sqloss} in \S\ref{pf::opt_sl_sqloss}, we obtain that
\#\label{eq::pf_nnsl_w}
\|w\|_\rho \leq 2\overline R = \frac{2}{n}\sum^n_{i = 1} R_i^{1/2}(h_{\omega_i})=\frac{1}{n}\sum^n_{i = 1}\Bigl\{\EE_{(x, y) \sim \cD_i}\Bigl[\bigl(y - h_{\omega_i}(x)\bigr)^2\Bigr]\Bigr\}^{1/2}.
\#

\vskip4pt
\noindent{\bf Upper Bounding $P_0$ and $P_i$ for $i\in[n]$.} It remains to upper bound the terms $P_0$ and $P_i$ in \eqref{eq::pf_nnsl_eq11} for $i\in[n]$. By the Cauchy-Schwartz inequality and the definition of $P_0$ in \eqref{eq::pf_nnsl_P0}, we have
\#\label{eq::pf_nnsl_eq12}
\EE_{\text{\rm init}}[P_0] \leq \Bigl\{\EE_{\text{\rm init}}\bigl[\|w\|^2_\rho\bigr]\cdot \EE_{\text{\rm init}}\bigl[\|\phi_{s}(\cdot)^\top s - \phi_0(\cdot)^\top s \|^2_\rho\bigr]\Bigr\}^{1/2}.
\#
Meanwhile, for $s$ and $K_{\omega, \eta}$ defined in \eqref{eq::pf_nnsl_s} and \eqref{eq::pf_nnsl_K}, respectively, we have
\#\label{eq::pf_nnsl_eq13}
\|s - W_{\text{\rm init}}\|_2 &\leq \|\omega - v\|_2 + 2\eta\cdot \Bigl\|\EE_{x\sim \rho}\bigl[\phi_\omega(x)\phi_\omega(x)^\top(\omega - v)\bigr] \Bigr\|_2\notag\\
&\leq \|\omega - v\|_2 + 2\eta\cdot \EE_{x\sim\rho}\bigl[\|\phi_\omega(x)\phi_\omega(x)^\top(\omega - v)\|_2\bigr]\notag\\
&\leq (1 + 2\eta)\cdot\|\omega - v\|_2 \leq (2 + 4\eta)\cdot R_T,
\#
where the first inequality follows from the triangle inequality, the second inequality follows from Jensen's inequality, the third inequality follows from the fact that $\|\phi_\omega(x)\|_2 \leq 1$ for all $x\in\cX$, and the fourth inequality follows from the fact that $\omega, v \in \cB_T$. Hence, by \eqref{eq::pf_nnsl_eq13} and Lemma \ref{lem::lin_err}, we obtain that
\#\label{eq::pf_nnsl_eq14}
\EE_{\text{\rm init}}\bigl[\|\phi_{s}(\cdot)^\top s - \phi_0(\cdot)^\top s \|^2_\rho\bigr] = \cO\bigl((1 + 2\eta)^{3}\cdot R_T^{3}\cdot m^{-1/2}\bigr).
\#
By further plugging \eqref{eq::pf_nnsl_eq14} and \eqref{eq::pf_nnsl_w} into \eqref{eq::pf_nnsl_eq12}, we have
\#\label{eq::pf_nnsl_eq15}
\EE_{\text{\rm init}}[P_0] = \cO\bigl(\overline C\cdot(1 + 2\eta)^{3/2}\cdot R_T^{3/2}\cdot m^{-1/4}\bigr).
\#
Here we define
\#\label{eq::def_C_bar}
\overline C = \{\EE_{\text{\rm init}}[4\overline R^2]\}^{1/2},
\#
where $\overline R$ is defined in \eqref{eq::pf_nnsl_w}. 

Similarly, for $P_i$ defined in \eqref{eq::pf_nnsl_Pi}, we have
\#\label{eq::pf_nnsl_eq16}
\EE_{\text{\rm init}}[P_i] \leq \Bigl\{\EE_{\text{\rm init}}\bigl[\|\delta R_i/\delta h_{\omega_i}\|^2_\rho\bigr]\cdot \underbrace{ \EE_{\text{\rm init}}\bigl[\| \phi_{\omega_i}(\cdot)^\top K_{\omega, \eta} (\omega-v) - \phi_{0}^\top K_{\omega, \eta} (\omega-v) \|^2_\rho\bigr]}_{\textstyle{U}}\Bigr\}^{1/2}.
\#
We first upper bound the term $U$ in \eqref{eq::pf_nnsl_eq16}. By the definition of $\omega_i$ in \eqref{eq::pf_sl_eq2}, we have
\#\label{eq::pf_nnsl_eq17}
\|\omega_i - W_{\text{\rm init}}\|_2 \leq \|\omega - W_{\text{{\rm init}}}\|_2 + \eta \cdot \|\nabla_\omega R_i(h_\omega)\|_2.
\#
Meanwhile, by the definition of risk in \eqref{eq::sl_task} and the definition of squared loss in \eqref{eq::def_sl_sqloss}, we have
\#\label{eq::pf_nnsl_eq18}
\|\nabla_\omega R_i(h_\omega)\|_2 &= \Bigl\|\EE_{(x, y)\sim\cD_i}\Bigl[ 2\bigl(h_\omega(x) - y\bigr) \phi_\omega(x) \Bigr]\Bigr\|_2\leq \EE_{(x, y)\sim\cD_i}\bigl[ 2| h_\omega(x) - y | \cdot \|\phi_\omega(x)\|_2 \bigr],
\#
where the first equality follows from the neural network parameterization of $h_\omega$ and the definition of feature mapping in \eqref{eq::def_nn_phi}, and the second inequality follows from Jensen's inequality. Following from the fact that $\|\phi_\omega(x)\|_2\leq 1$ for all $x\in \cX$ and the assumption that $|y| \leq Y_{\max}$ for all $y\in\cY$, we further have
\#\label{eq::pf_nnsl_eq19}
\EE_{(x, y)\sim\cD_i}\bigl[ 2| h_\omega(x) - y | \cdot \|\phi_\omega(x)\|_2 \bigr] \leq 2Y_{\max} + 2\EE_{(x, y)\sim\cD_i}\bigl[ | h_\omega(x) | \bigr].
\#
Note that $f(x; W_{\text{\rm init}}) = \phi_0(x)^\top W_{\text{\rm init}} = 0$ by the initialization. Hence, we have for all $x\in\cX$ that
\#\label{eq::pf_nnsl_eq20}
| h_\omega(x) | &= |f(x;\omega) - f(x; W_{\text{\rm init}})| \notag\\
&\leq \sup_{\theta \in\RR^{md}} \| \nabla_\theta f(x; \theta)\|_2 \cdot \|\omega - W_{\text{\rm init}}\|_2\notag\\
&\leq \|\omega - W_{\text{\rm init}}\|_2 \leq R_T,
\#
where the first equality follows from the neural network parameterization of the hypothesis $h_\omega$, the second inequality follows from the fact that $\| \nabla_\theta f(x; \theta)\|_2 = \|\phi_\theta(x)\|_2 \leq 1$ for all $x \in \cX$, and the last inequality follows from the fact that $\omega \in \cB_T$. By plugging \eqref{eq::pf_nnsl_eq20} into \eqref{eq::pf_nnsl_eq19}, we have
\#\label{eq::pf_nnsl_eq21}
\EE_{(x, y)\sim\cD_i}\bigl[ 2| h_\omega(x) - y | \cdot \|\phi_\omega(x)\|_2 \bigr] \leq 2Y_{\max} + 2R_T.
\#
By further plugging \eqref{eq::pf_nnsl_eq18} into \eqref{eq::pf_nnsl_eq21} into \eqref{eq::pf_nnsl_eq17}, we obtain for all $i \in [n]$ that
\#\label{eq::pf_nnsl_eq22}
\|\omega_i - W_{\text{\rm init}}\|_2 \leq \bigl((1 + 2\eta)\cdot R_T + 2\eta\cdot Y_{\max}\bigr).
\#
Meanwhile, similar to \eqref{eq::pf_nnsl_eq13}, we obtain for $\omega, v\in\cB_T$ that
\#\label{eq::pf_nnsl_eq23}
\|K_{\omega, \eta} (\omega-v)\|_2 \leq (2 + 4\eta)\cdot\|\omega-v\|_2\leq (4 + 8\eta)R_T.
\#
Finally, by Assumption \ref{asu::reg_cond_sl} with Lemma \ref{lem::lin_err}, we obtain for $U$ in \eqref{eq::pf_nnsl_eq16} that
\#\label{eq::pf_nnsl_eq24}
U &\leq 2\EE_{\text{\rm init}}\Bigl[\bigl\| \bigl(\phi_{\omega_i}(\cdot) - \phi_{0}(\cdot)\bigr)^\top \bigl(K_{\omega, \eta} (\omega-v) +W_{\text{\rm init}}\bigr)\bigr\|^2_\rho\Bigr] + 2\EE_{\text{\rm init}}\bigl[\| \phi_{\omega_i}(\cdot)^\top W_{\text{\rm init}} - \phi_{0}^\top W_{\text{\rm init}} \|^2_\rho\bigr]\notag\\
&= \cO\Bigl(\bigl((1 +  \eta)\cdot R_T + \eta\cdot Y_{\max}\bigr)^{3}\cdot m^{-1/2}\Bigr).
\#
Meanwhile, by \eqref{eq::pf_sqloss_eq8} in the proof of Corollary \ref{cor::opt_sl_sqloss} in \S\ref{pf::opt_sl_sqloss}, we obtain that
\#\label{eq::pf_nnsl_C_i}
\EE_{\text{\rm init}}\bigl[\|\delta R_i/\delta h_{\omega_i}\|^2_\rho\bigr] \leq 4 \EE_{\text{\rm init}}[R_i(h_{\omega_i})].
\#
Thus, by plugging \eqref{eq::pf_nnsl_eq24} and \eqref{eq::pf_nnsl_eq25} into \eqref{eq::pf_nnsl_eq16}, we have
\#\label{eq::pf_nnsl_eq25}
\EE_{\text{\rm init}}[P_i] = \cO(C_i \cdot R_1^{3/2}\cdot m^{-1/4}),
\#
where we define $C_i$ and $R_1$ as follows,
\#\label{eq::pf_nnsl_eq26}
C_i = 2\bigl\{\EE_{\text{\rm init}}[R_i(h_{\omega_i})]\bigr\}^{1/2}, \qquad R_1 =(1 +  \eta)\cdot R_T + \eta\cdot Y_{\max}.
\#

Finally, by plugging \eqref{eq::pf_nnsl_w}, \eqref{eq::pf_nnsl_eq15}, and \eqref{eq::pf_nnsl_eq25} into \eqref{eq::pf_nnsl_eq11}, we conclude that
\#\label{eq::111222}
\EE_{\text{\rm init}}\bigl[L(\omega) - L(\theta^*) \bigr]&\leq \epsilon + \EE_{\text{\rm init}}\bigl[2\overline R\cdot \inf_{\theta \in \cB_0} \|u(\cdot) - f(\cdot; \theta)\|_\rho\bigr] \\
&\qquad+ \cO(\overline C\cdot (1 + 2\eta)^{3/2}\cdot R_T^{3/2}\cdot m^{-1/4} + D_0\cdot R_1^{3/2}\cdot m^{-1/4}).\notag
\#
Here $\cB_0$ and $\overline C$ are the constants defined in \eqref{eq::pf_nnsl_B0} and \eqref{eq::def_C_bar}, respectively, $R_1 = (1 +  \eta)\cdot R_T + \eta\cdot Y_{\max}$, and
$
 D_0 = \frac{1}{n}\cdot \sum^n_{i = 1} C_i,
$
where $C_i$ is the constant defined in \eqref{eq::pf_nnsl_eq26}. Thus, by setting $G_T = (1  + \eta)\cdot R_T + \eta\cdot Y_{\max}$, we obtain from \eqref{eq::111222} that
\$
\EE_{\text{\rm init}}\bigl[L(\omega) - L(\theta^*) \bigr]\leq \epsilon + \EE_{\rm init} \Bigl[2\overline R \cdot \inf_{v\in\cB_{0}} \|\overline u(\cdot) - f(\cdot; v)\|_{\rho} \Bigr]+  \cO(G_T^{3/2}\cdot m^{-1/4}),
\$
which completes the proof of Corollary \ref{cor::nnsl}.
\end{proof}

\section{Proof of Auxiliary Result}
In this section, we present the proofs fo the auxiliary results.
\subsection{Proof of Proposition \ref{prop::meta_grad}}
\label{pf::meta_grad}
\begin{proof}
By Lemma \ref{lem::pg_thm}, which is the policy gradient theorem \citep{sutton2018reinforcement}, we have
\#\label{eq::pf_meta_grad_eq1}
\nabla_\theta J_i(\pi_{i, \theta}) = \EE_{(s, a)\sim \sigma_{\pi_{i, \theta}}}\bigl[ \nabla_\theta \log \pi_{i, \theta}(a\given s) \cdot A_i^{\pi_{i, \theta}}(s, a)\bigr],
\#
where recall that $J_i$ and $A_i^{\pi_{i, \theta}}$ are the expected total reward and the advantage function of the policy $\pi_{i, \theta}$ corresponding to the MDP $(\cS, \cA, P_i, r_i, \gamma_i, \zeta_i)$, respectively, and $\sigma_{\pi_{i, \theta}}$ is the state-action visitation measure induced by the policy $\pi_{i, \theta}$. By plugging the form of $\pi_{i, \theta}$ in \eqref{eq::PPO_optimal} into \eqref{eq::pf_meta_grad_eq1}, we obtain that
\#\label{eq::pf_meta_grad_eq2}
\nabla_\theta J_i(\pi_{i, \theta}) = \EE_{(s, a)\sim \sigma_{\pi_{i, \theta}}}\Bigl[\bigl(1/\tau\cdot \phi(s, a) +\eta\cdot \nabla_\theta Q^{\pi_{\theta}}_i(s, a)\bigr)\cdot A_i^{\pi_{i, \theta}}(s, a)\Bigr].
\#
Here $Q^{\pi_\theta}_i$ is the state-action value function of the main effect $\pi_{\theta}$ corresponding to the MDP $(\cS, \cA, P_i, r_i, \gamma_i, \zeta_i)$. Applying Lemma \ref{lem::pg_thm} again, we obtain that
\#\label{eq::pf_meta_grad_eq3}
\nabla_\theta Q^{\pi_{\theta}}_i(s, a) &=\nabla_\theta \Bigl((1 - \gamma_i)\cdot r_i(s, a) + \gamma_i\cdot\EE_{s'\sim P_i(\cdot\given s, a)}\bigl[V^{\pi_\theta}_i(s')\bigr]\Bigr) \notag\\
&=\gamma_i\cdot \EE_{(s', a')\sim \sigma_{i, \pi_\theta}^{(s, a)}}\bigl[\nabla_\theta \log  \pi_\theta(a'\given s')\cdot A^{\pi_\theta}_i(s', a')\bigr]\notag\\
&= \gamma_i\cdot\EE_{(s', a')\sim \sigma_{i, \pi_\theta}^{(s, a)}}\Bigl[\bigl(1/\tau \cdot \phi(s', a')\bigr)\cdot A^{\pi_\theta}_i(s', a')\Bigr].
\#
Here the last equality follows from the parameterization of $\pi_\theta$ in \eqref{eq::def_leader} and $\sigma_{i, \pi_\theta}^{(s, a)}$ is the state-action visitation measure of the main effect $\pi_\theta$, which is defined in Definition \ref{def::meta_visit}. By plugging \eqref{eq::pf_meta_grad_eq3} into \eqref{eq::pf_meta_grad_eq2}, we complete the proof of Proposition \ref{prop::meta_grad}.
\end{proof}

\subsection{Proof of Lemma \ref{lem::meta_grad_refine}}
\label{pf::meta_grad_refine}
\begin{proof}
Following from Proposition \ref{prop::meta_grad}, we have
\#\label{eq::pf_ref_eq1}
\nabla_\theta L(\theta) = \frac{1}{n}\cdot \sum^n_{i = 1} \EE_{(s', a', s, a)\sim \rho_{i, \pi_\theta} }\bigl[&\gamma_i\cdot\eta/\tau\cdot\phi(s', a')\cdot A^{\pi_\theta}_i(s', a')\cdot A_i^{\pi_{i, \theta}}(s, a) \\
&\quad+ 1/\tau\cdot\phi(s, a)\cdot A_i^{\pi_{i, \theta}}(s, a)\bigr],\notag
\#
where $\rho_{i, \pi_\theta}$ is the joint meta-visitation measure defined in \eqref{eq::def_meta_joint} of Definition \ref{def::meta_meta_visit}. Meanwhile, it holds that
\#\label{eq::pf_ref_eq2}
&\EE_{(s', a', s, a)\sim \rho_{i, \pi_\theta} }\bigl[\phi(s', a')\cdot A^{\pi_\theta}_i(s', a')\cdot A^{\pi_{i, \theta}}_i(s, a)\bigr]\notag\\
&\quad = \EE_{(s', a')\sim \varsigma_{i, \pi_\theta}}\bigl[\phi(s', a')\cdot G_{i, \pi_\theta}(s', a')\cdot A^{\pi_\theta}_i(s', a') \bigr],
\#
where $\varsigma_{i, \pi_\theta}$ is the meta-visitation measure defined in \eqref{eq::def_meta_visit} of Definition \ref{def::meta_meta_visit}, and $G_{i, \pi_\theta}$ is defined as follows,
\#\label{eq::pf_ref_def_A}
G_{i, \pi_\theta}(s', a') = \EE_{(s',a',s,a)\sim \rho_{i, \pi_\theta}}\bigl[A^{\pi_{i, \theta}}_i(s, a) \,\big|\, s', a'\bigr].
\#
Here $\rho_{i, \pi_\theta}$ is the joint meta-visitation measure defined in \eqref{eq::def_meta_joint} of Definition \ref{def::meta_meta_visit}.
By plugging \eqref{eq::pf_ref_eq2} into \eqref{eq::pf_ref_eq1}, we obtain that
\$
\nabla_\theta L(\theta) &= \frac{1}{n}\cdot \sum^n_{i = 1} \EE_{(s, a)\sim \sigma_{\pi_{i, \theta}}}\bigl[ 1/\tau \cdot \phi(s, a)\cdot A^{\pi_{i, \theta}}_i(s, a) \bigr]\notag\\
&\notag\quad\quad\qquad\quad + \EE_{(s', a')\sim \varsigma_{i, \pi_\theta}}\bigl[\gamma_i\cdot\eta/\tau \cdot\phi(s', a')\cdot G_{i, \pi_\theta}(s', a')\cdot A^{\pi_{\theta}}_i(s', a') \bigr]\notag\\
&= \frac{1}{n}\cdot \sum^n_{i = 1} \EE_{(s', a')\sim \varsigma_{i, \pi_\theta}}\Bigl[\phi(s', a')\cdot \bigl(1/\tau \cdot A^{\pi_{i, \theta}}_i(s', a') \cdot \bigl(\ud \sigma_{\pi_{i, \theta}}/\ud \varsigma_{i, \pi_\theta}(s', a')\bigr) \\
&\qquad\qquad\qquad\qquad\qquad + \gamma_i\cdot\eta/\tau\cdot G_{i, \pi_\theta}(s', a')\cdot A^{\pi_{\theta}}_i(s', a')\bigr)\Bigr],\notag
\$
where $A_{i, \theta}$ is defined in \eqref{eq::pf_ref_def_A} and $\ud \sigma_{\pi_{i, \theta}}/\ud \varsigma_{i, \pi_\theta}$ is the Radon-Nikodym derivative. Thus, we complete the proof of Lemma \ref{lem::meta_grad_refine}.
\end{proof}

\subsection{Proof of Proposition \ref{prop::sqloss_diff}}
\label{pf::sqloss_diff}
\begin{proof}
It suffices to prove for all $h \in \cH$ and $\delta R_i/\delta_h$ defined in \eqref{eq::sqloss_diff} that the operator
\#\label{eq::pf_def_A}
A_{h}(\cdot) = \langle\cdot, \delta R_i/\delta_h\rangle_{\cH}
\#
is the Fr\'echet derivative of $R_i$ at $h\in\cH$ defined in Definition \ref{def::fre_diff}. For all $h_1 \in\cH$, we have
\#\label{eq::pf_sqloss_diff_eq1}
&R_i(h_1) - R_i(h) - A_h(h_1 - h) \notag\\
&\quad= \int_{\cX\times\cY} \bigl(h_1(x) - y\bigr)^2 - \bigl(h(x) - y\bigr)^2 \ud \cD_i(x, y) - \int_\cX \bigl(h_1(x) - h(x)\bigr)\cdot (\delta R_i/\delta_h)(x) \ud \rho(x),
\#
where the equality follows from the definition of $A_h$ in \eqref{eq::pf_def_A}. Meanwhile, for $\delta R_i/\delta h$ defined in \eqref{eq::sqloss_diff} and $\overline \cD_i = \rho$, we have
\#\label{eq::pf_sqloss_diff_eq2}
\int_\cX \bigl(h_1(x) - h(x)\bigr)\cdot (\delta R_i/\delta_h)(x) \ud \rho(x) = \int_\cX 2\bigl(h_1(x) - h(x)\bigr)\cdot\bigl (h(x) - y\bigr) \ud \cD_i(x, y).
\#
By plugging \eqref{eq::pf_sqloss_diff_eq2} into \eqref{eq::pf_sqloss_diff_eq1}, we obtain that
\#\label{eq::pf_sqloss_diff_eq3}
&R_i(h_1) - R_i(h) - A_h(h_1 - h) \notag\\
&\quad =  \int_{\cX\times\cY} \bigl(h_1(x) - y\bigr)^2 - \bigl(h(x) - y\bigr)^2 -  2\bigl(h_1(x) - h(x)\bigr)\cdot\bigl (h(x) - y\bigr)\ud \cD_i(x, y)\notag\\
&\quad = \int_{\cX\times\cY}\bigl(h_1(x) - h(x)\bigr)\cdot\bigl(h_1(x) + h(x) - 2y\bigr) -  2\bigl(h_1(x) - h(x)\bigr)\cdot\bigl (h(x) - y\bigr) \ud \cD_i(x, y)\notag\\
&\quad = \int_\cX \bigl(h_1(x) - h(x)\bigr)^2 \ud \rho(x) = \|h_1 - h\|^2_\cH.
\#
Hence, by \eqref{eq::pf_sqloss_diff_eq3}, we have
\$
\lim_{\|h_1 - h\|_\cH \to 0} \frac{|R_i(h_1) - R_i(h) - A_h(h_1 - h)|}{\|h_1 - h\|_\cH} = \lim_{\|h_1 - h\|_\cH \to 0} \|h_1 - h\|_\cH = 0.
\$
Thus, following from the definition of Fr\'echet derivative in Definition \ref{def::fre_diff}, we conclude that $A_h$ defined in \eqref{eq::pf_def_A} is the Fr\'echet derivative of $R_i$ at $h\in\cH$, which completes the proof of Proposition \ref{prop::sqloss_diff}.
\end{proof}

\subsection{Proof of Lemma \ref{lem::lin_err}}
\label{pf::lin_err}
\begin{proof}
In what follows, we write $\phi_{W_{\text{\rm init}}} = \phi_0$ for notational simplicity, where $\phi_{W_{\text{\rm init}}}$ is the feature mapping defined in \eqref{eq::def_nn_phi} with $W = W_{\text{\rm init}}$. Note that
\#\label{eq::linerr_pf_1}
&\|\phi_{\omega_0}(\cdot,\cdot)^\top \omega_2 - \phi_{\omega_1}(\cdot, \cdot)^\top\omega_2\|^2_{\varrho_{\pi_\theta}}\notag\\&\qquad\leq 2\|\phi_{\omega_0}(\cdot,\cdot)^\top \omega_2 - \phi_{0}(\cdot, \cdot)^\top\omega_2\|^2_{\varrho_{\pi_\theta}} + 2\|\phi_{\omega_1}(\cdot,\cdot)^\top \omega_2 - \phi_{0}(\cdot, \cdot)^\top\omega_2\|^2_{\varrho_{\pi_\theta}},
\#
where the inequality follows from the fact that $\|f(\cdot) + g(\cdot)\|^2_{\varrho_{\pi_\theta}} \leq 2\|f(\cdot)\|^2_{\varrho_{\pi_\theta}} + 2\|g(\cdot)\|^2_{\varrho_{\pi_\theta}}$. We now upper bound the right-hand side of \eqref{eq::linerr_pf_1} under the expectation with respect to the random initialization. In the sequel, we write $\phi_\omega(\cdot) = ([\phi_\omega(\cdot)]_{\text{\rm U}}^\top, [\phi_\omega(\cdot)]_{\text{\rm L}}^\top)^\top$ and $\phi_0(\cdot) = ([\phi_0(\cdot)]_{\text{\rm U}}^\top, [\phi_0(\cdot)]_{\text{\rm L}}^\top)^\top$, respectively, where $[\phi_0(\cdot)]_{\text{\rm U}} = ([\phi_0(\cdot)]^\top_1, \ldots, [\phi_0(\cdot)]^\top_{m/2})^\top$ and $[\phi_0(\cdot)]_{\text{\rm L}} = ([\phi_0(\cdot)]^\top_{m/2+1}, \ldots, [\phi_0(\cdot)]^\top_{m})^\top$. Similarly, we write $\omega = ([\omega]_{\text{\rm U}}^\top, [\omega]_{\text{\rm L}}^\top)^\top$ and $W_{\text{\rm init}} = ([W_{\text{\rm init}}]_{\text{\rm U}}^\top, [W_{\text{\rm init}}]_{\text{\rm L}}^\top)^\top$, respectively. Note that for $\|\omega - W_{\text{\rm init}}\|_2 \leq R$, we have
\#\label{eq::linerr_pf_2}
\|\omega - W_{\text{\rm init}}\|^2_2 = \|[\omega]_{\text{\rm U}} - [W_{\text{\rm init}}]_{\text{\rm U}}\|^2_2 + \|[\omega]_{\text{\rm L}} - [W_{\text{\rm init}}]_{\text{\rm L}}\|^2_2 \leq R^2.
\#
Hence, we obtain from \eqref{eq::linerr_pf_2} that $\|[\omega]_{\text{\rm U}} - [W_{\text{\rm init}}]_{\text{\rm U}}\|_2 \leq R$ and $\|[\omega]_{\text{\rm L}} - [W_{\text{\rm init}}]_{\text{\rm L}}\|_2 \leq R$. Meanwhile, note that $[W_{\text{\rm init}}]_{\text{\rm U}} = ([W_{\text{\rm init}}]_1^\top, \ldots, [W_{\text{\rm init}}]_{m/2}^\top)^\top$, where $[W_{\text{\rm init}}]_r\sim N(0, I_d/d)$ are mutually independent for all $r\in[m/2]$. Thus, by Lemma \ref{lem::lin_err_origin}, under Assumption \ref{asu::reg_cond_rl}, we obtain for $\omega_0, \omega_2 \in \cB = \{\theta\in\RR^{md}:\|\theta - W_{\text{\rm init}}\|_2 \leq R\}$ that
\#\label{eq::linerr_pf_3}
\EE_{\text{\rm init}}\bigl[\| [\phi_{\omega_0}(\cdot)]_{\text{\rm U}}^\top [\omega_2]_{\text{\rm U}} -  [\phi_{0}(\cdot)]_{\text{\rm U}}^\top [\omega_2]_{\text{\rm U}} \|_{\varrho_{\pi_\theta}}^2\bigr] = \cO(R^3\cdot m^{-1/2}).
\#
Similarly, we have
\#\label{eq::linerr_pf_4}
\EE_{\text{\rm init}}\bigl[\| [\phi_{\omega_0}(\cdot)]_{\text{\rm L}}^\top [\omega_2]_{\text{\rm L}} -  [\phi_{0}(\cdot)]_{\text{\rm L}}^\top [\omega_2]_{\text{\rm L}} \|_{\varrho_{\pi_\theta}}^2\bigr] = \cO(R^3\cdot m^{-1/2}).
\#
Following from \eqref{eq::linerr_pf_3} and \eqref{eq::linerr_pf_4}, we have
\#\label{eq::linerr_pf_5}
&\EE_{\text{\rm init}}\bigl[\|\phi_{\omega_0}(\cdot,\cdot)^\top \omega_2 - \phi_{0}(\cdot, \cdot)^\top\omega_2\|^2_{\varrho_{\pi_\theta}}\bigr]\notag\\
&\qquad= \EE_{\text{\rm init}}\bigl[\|[\phi_{\omega_0}(\cdot)]_{\text{\rm U}}^\top [\omega_2]_{\text{\rm U}} -  [\phi_{\omega_0}(\cdot)]_{\text{\rm U}}^\top [\omega_2]_{\text{\rm U}} + [\phi_{\omega_0}(\cdot)]_{\text{\rm L}}^\top [\omega_2]_{\text{\rm L}} -  [\phi_{\omega_0}(\cdot)]_{\text{\rm L}}^\top [\omega_2]_{\text{\rm L}} \|^2_{\varrho_{\pi_\theta}}\bigr]\notag\\
&\qquad \leq 2\EE_{\text{\rm init}}\bigl[\| [\phi_{\omega_0}(\cdot)]_{\text{\rm U}}^\top [\omega_2]_{\text{\rm U}} -  [\phi_{0}(\cdot)]_{\text{\rm U}}^\top [\omega_2]_{\text{\rm U}} \|_{\varrho_{\pi_\theta}}^2\bigr] + 2 \EE_{\text{\rm init}}\bigl[\| [\phi_{\omega_0}(\cdot)]_{\text{\rm L}}^\top [\omega_2]_{\text{\rm L}} -  [\phi_{0}(\cdot)]_{\text{\rm L}}^\top [\omega_2]_{\text{\rm L}} \|_{\varrho_{\pi_\theta}}^2\bigr]\notag\\
&\qquad =\cO(R^{3}\cdot m^{-1/2}),
\#
where the second inequality follows from the fact that $\|f(\cdot) + g(\cdot)\|^2_{\varrho_{\pi_\theta}} \leq 2\|f(\cdot)\|^2_{\varrho_{\pi_\theta}} + 2\|g(\cdot)\|^2_{\varrho_{\pi_\theta}}$. Similarly, we have
\#\label{eq::linerr_pf_6}
\EE_{\text{\rm init}}\bigl[\|\phi_{\omega_1}(\cdot,\cdot)^\top \omega_2 - \phi_{0}(\cdot, \cdot)^\top\omega_2\|^2_{\varrho_{\pi_\theta}}\bigr] = \cO(R^{3}\cdot m^{-1/2}).
\#
Finally, by plugging \eqref{eq::linerr_pf_5} and \eqref{eq::linerr_pf_6} into \eqref{eq::linerr_pf_1}, we have
\$
&\EE_{\text{\rm init}}\bigl[\|\phi_{\omega_0}(\cdot,\cdot)^\top \omega_2 - \phi_{\omega_1}(\cdot, \cdot)^\top\omega_2\|^2_{\varrho_{\pi_\theta}}\bigr]\notag\\
&\qquad\leq 2\EE_{\text{\rm init}}\bigl[\|\phi_{\omega_0}(\cdot,\cdot)^\top \omega_2 - \phi_{0}(\cdot, \cdot)^\top\omega_2\|^2_{\varrho_{\pi_\theta}}\bigr] + 2\EE_{\text{\rm init}}\bigl[\|\phi_{\omega_1}(\cdot,\cdot)^\top \omega_2 - \phi_{0}(\cdot, \cdot)^\top\omega_2\|^2_{\varrho_{\pi_\theta}}\bigr]\notag\\
&\qquad = \cO(R^{3}\cdot m^{-1/2}),
\$
which concludes the proof of Lemma \ref{lem::lin_err}.
\end{proof}

\section{Auxiliary Lemma}
In this section, we present the auxiliary lemmas.

\begin{lemma}[Linearization Error \citep{cai2019neural}]
\label{lem::lin_err_origin}
Let $\|x\| \leq 1$ for all $x\in\cX$ and $ [W_{\text{\rm init}}]_r \sim N(0, I_d/d)$ be mutually independent for all $r\in[m]$. For parameters $\omega, \omega'\in \cB_{\text{\rm init}} = \{\theta \in \RR^{md}: \|\theta - W_{\text{\rm init}}\|_2\leq R\}$ and the distribution $\rho$ over $\cX$ such that Assumption \ref{asu::reg_cond_sl} holds, we have
\$
\EE_{\text{\rm init}}\bigl[\|\phi_{\omega}(\cdot)^\top\omega' - \phi_{W_{\text{\rm init}}}(\cdot)^\top\omega'\|_\rho^2\bigr] = \cO(R^3\cdot m^{-1/2}).
\$
\end{lemma}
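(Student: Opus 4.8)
The plan is to bound the $L_2(\rho)$-linearization error directly from the explicit form of the ReLU feature map, following \cite{cai2019neural}, by exploiting the geometric fact that moving the weights from $W_{\text{\rm init}}$ to $\omega$ changes the activation pattern of neuron $r$ only inside a thin slab around the decision boundary $\{x: [W_{\text{\rm init}}]_r^\top x = 0\}$. By \eqref{eq::def_nn_phi} and $|b_r| = 1$, the pointwise error obeys
\begin{align*}
\bigl|\phi_{\omega}(x)^\top \omega' - \phi_{W_{\text{\rm init}}}(x)^\top \omega'\bigr| \le \frac{1}{\sqrt m}\sum_{r=1}^m \bigl|[\omega']_r^\top x\bigr|\cdot \kappa_r(x),
\end{align*}
where $\kappa_r(x) = |\ind\{[\omega]_r^\top x > 0\} - \ind\{[W_{\text{\rm init}}]_r^\top x > 0\}|$ is the sign-flip indicator of neuron $r$. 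Writing $a_r = \|[\omega - W_{\text{\rm init}}]_r\|_2$ and $a_r' = \|[\omega' - W_{\text{\rm init}}]_r\|_2$ and using $\|x\|_2 \le 1$, a flip forces $|[W_{\text{\rm init}}]_r^\top x| \le a_r$, so $\kappa_r(x) \le \ind\{|[W_{\text{\rm init}}]_r^\top x| \le a_r\}$, and on this event $|[\omega']_r^\top x| \le |[W_{\text{\rm init}}]_r^\top x| + |[\omega' - W_{\text{\rm init}}]_r^\top x| \le a_r + a_r'$. The budget constraints are $\sum_r a_r^2 \le R^2$ and $\sum_r (a_r')^2 \le R^2$.

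Next I would square, take $\EE_{\text{\rm init}}$, integrate against $\rho$, and split the resulting double sum over neurons into diagonal and off-diagonal parts. Assumption \ref{asu::reg_cond_sl} supplies the slab estimate $\EE_{x\sim\rho}[\ind\{|[W_{\text{\rm init}}]_r^\top x|\le a_r\}] \le c\,a_r/\|[W_{\text{\rm init}}]_r\|_2$, and since $[W_{\text{\rm init}}]_r \sim N(0, I_d/d)$ gives $\EE_{\text{\rm init}}[1/\|[W_{\text{\rm init}}]_r\|_2] = \cO(1)$, the diagonal contributes at most $\tfrac{1}{m}\sum_r (a_r + a_r')^2\, a_r \le \tfrac{4R^3}{m} = \cO(R^3/m)$, which is negligible. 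The off-diagonal term is the crux:
\begin{align*}
\frac{1}{m}\sum_{r\ne s}(a_r + a_r')(a_s + a_s')\,\EE\bigl[\ind\{|[W_{\text{\rm init}}]_r^\top x|\le a_r\}\,\ind\{|[W_{\text{\rm init}}]_s^\top x|\le a_s\}\bigr].
\end{align*}
I would bound each slab-intersection probability by the geometric mean of the two individual slab probabilities (i.e. $\min\{p,q\}\le\sqrt{pq}$ applied to the $x$-conditional probabilities, then $\EE_{\text{\rm init}}$ via Cauchy--Schwarz over the two independent norm factors), yielding a factor $\cO(\sqrt{a_r a_s})$. A Cauchy--Schwarz step in $r$ then gives
\begin{align*}
\frac{1}{m}\Bigl(\sum_{r=1}^m (a_r + a_r')\sqrt{a_r}\Bigr)^2 \le \frac{1}{m}\Bigl(\sum_{r=1}^m (a_r+a_r')^2\Bigr)\Bigl(\sum_{r=1}^m a_r\Bigr) \le \frac{4R^2}{m}\sum_{r=1}^m a_r.
\end{align*}
The essential final step is the dimension-counting inequality $\sum_r a_r \le \sqrt m\,(\sum_r a_r^2)^{1/2} \le \sqrt m\, R$, which turns the right-hand side into $\cO(R^3 m^{-1/2})$, matching the claim.

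I expect the main obstacle to be exactly these off-diagonal cross-terms: the crude route of summing the $m$ neurons uniformly (bounding the neuron sum by $m$ times its squared $\ell_2$-norm) discards the $m^{-1/2}$ gain and yields only the useless $\cO(R^3)$. The gain is recovered solely through (i) the pairwise slab-intersection control from the anti-concentration of Assumption \ref{asu::reg_cond_sl} and (ii) the inequality $\sum_r a_r \le \sqrt m\,R$, which is precisely where the factor $m^{1/2}$ arises to reduce the $1/m$ prefactor to $m^{-1/2}$. A secondary technical point is that $a_r = \|[\omega - W_{\text{\rm init}}]_r\|_2$ may itself depend on $W_{\text{\rm init}}$; this is handled by treating the per-block radii as an arbitrary deterministic profile constrained by $\sum_r a_r^2 \le R^2$ and bounding uniformly over $\cB_{\text{\rm init}}$, so that the only randomness invoked is that of $[W_{\text{\rm init}}]_r^\top x$ in the anti-concentration estimate and of $\|[W_{\text{\rm init}}]_r\|_2$.
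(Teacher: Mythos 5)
You are supplying a proof for a statement the paper itself never proves: the paper's ``proof'' of Lemma \ref{lem::lin_err_origin} is a citation to \cite{cai2019neural}. Measured against the argument in that reference, your proposal follows essentially the same route, and its core is sound: the flip-indicator bound $\kappa_r(x)\le \ind\{|[W_{\text{init}}]_r^\top x|\le a_r\}$, the bound $|[\omega']_r^\top x|\le a_r+a_r'$ on the flip event, the anti-concentration estimate from Assumption \ref{asu::reg_cond_sl}, and the $\ell_1$--$\ell_2$ inequality $\sum_r a_r\le \sqrt m\,R$ are exactly the ingredients that produce the rate $\cO(R^3 m^{-1/2})$. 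Your diagonal/off-diagonal split with geometric-mean control of the slab intersections is a minor variant of the more common order of operations (apply Cauchy--Schwarz once to $\sum_r (a_r+a_r')\ind\{|[W_{\text{init}}]_r^\top x|\le a_r\}$ before squaring, which yields $\tfrac{4R^2}{m}\sum_r \ind\{|[W_{\text{init}}]_r^\top x|\le a_r\}$ and avoids cross terms entirely), but the two computations are equivalent in substance and both give the claimed bound.

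The one step that does not hold as written is your treatment of the dependence of $\omega,\omega'$ on $W_{\text{init}}$. This matters: in the paper's application (Lemma \ref{lem::lin_err}, then Corollaries \ref{cor::eps_nn_opt} and \ref{cor::nnsl}), $\omega$ is the attained stationary point and the comparator is an argmin over $\cB_{\text{init}}$, both of which depend on the initialization, so the lemma must hold for init-dependent parameters. Declaring the per-block radii ``an arbitrary deterministic profile'' does not achieve this: a bound on $\EE_{\text{init}}[\cdot]$ valid for each fixed profile does not bound $\EE_{\text{init}}[\sup_{\text{profile}}(\cdot)]$, and the specific interchange you invoke, $\EE_{\text{init}}\bigl[a_r/\|[W_{\text{init}}]_r\|_2\bigr]=a_r\cdot\EE_{\text{init}}\bigl[1/\|[W_{\text{init}}]_r\|_2\bigr]$, fails when $a_r$ is a function of $W_{\text{init}}$ (the adversarial profile loads the neurons with small $\|[W_{\text{init}}]_r\|_2$). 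The repair is one more Cauchy--Schwarz that decouples the profile from the initialization: uniformly over $\sum_r a_r^2\le R^2$ one has $\sum_r a_r/\|[W_{\text{init}}]_r\|_2 \le R\cdot\bigl(\sum_r \|[W_{\text{init}}]_r\|_2^{-2}\bigr)^{1/2}$, and then $\EE_{\text{init}}\bigl[\bigl(\sum_r\|[W_{\text{init}}]_r\|_2^{-2}\bigr)^{1/2}\bigr]\le \bigl(m\cdot\EE_{\text{init}}\bigl[\|[W_{\text{init}}]_1\|_2^{-2}\bigr]\bigr)^{1/2}=\cO(\sqrt m)$, the last moment being finite for $d\ge 3$ since $d\,\|[W_{\text{init}}]_1\|_2^{2}$ is chi-square with $d$ degrees of freedom. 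Feeding this into your off-diagonal bound gives $\tfrac{4cR^2}{m}\cdot R\cdot\cO(\sqrt m)=\cO(R^3m^{-1/2})$ uniformly over the ball, and the diagonal is handled the same way and remains lower order. Note that the anti-concentration step itself needs no modification: Assumption \ref{asu::reg_cond_sl} is quantified over all $y$ and $u>0$, so it applies pathwise even when $a_r$ depends on $W_{\text{init}}$; the only place requiring care is the expectation over the initialization.
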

\begin{proof}
See \cite{cai2019neural} for a detailed proof.
\end{proof}

\begin{lemma}[Policy Gradient \citep{sutton2018reinforcement}]
\label{lem::pg_thm}
Let $\pi_\theta$ be the parameterized policy with the parameter $\theta$. It holds that
\$
\nabla_\theta J(\pi_\theta) &= \EE_{s\sim \nu_{\pi_\theta}}\bigl[ \langle \pi_\theta(\cdot \given s),   Q^{\pi_\theta}(s, \cdot)\rangle \bigr]\\
&= \EE_{(s, a)\sim \sigma_{\pi_\theta}}\bigl[\nabla_\theta \log \pi_\theta(a\given s)\cdot Q^{\pi_\theta}(s, a)\bigr],
\$
where $\nu_{\pi_\theta}$ is the state visitation measure defined in \eqref{eq::def_visit} with $\pi = \pi_\theta$, and  $\sigma_{\pi_{\theta}}(\cdot, \cdot) = \pi_\theta(\cdot\given\cdot)\cdot \nu_{\pi_\theta}(\cdot)$ is the corresponding state-action visitation measure induce by $\pi_\theta$.
\end{lemma}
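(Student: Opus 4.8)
The plan is to establish the two displayed identities by differentiating the value function through its Bellman recursion and unrolling the resulting recursion in time, which is the classical argument behind the policy gradient theorem \citep{sutton2018reinforcement}. Throughout I would read the inner product in the first line as $\langle \nabla_\theta\pi_\theta(\cdot\given s), Q^{\pi_\theta}(s,\cdot)\rangle = \sum_{a\in\cA}\nabla_\theta\pi_\theta(a\given s)\cdot Q^{\pi_\theta}(s,a)$, since without the gradient the expression reduces to $V^{\pi_\theta}(s)$ rather than to its gradient.

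First I would record the two Bellman identities implied by \eqref{eq::def_V} and \eqref{eq::def_Q}, namely $V^{\pi_\theta}(s) = \langle\pi_\theta(\cdot\given s), Q^{\pi_\theta}(s,\cdot)\rangle$ and $Q^{\pi_\theta}(s,a) = (1-\gamma)\cdot r(s,a) + \gamma\cdot\EE_{s'\sim P(\cdot\given s,a)}[V^{\pi_\theta}(s')]$. Differentiating the first with the product rule gives
\$
\nabla_\theta V^{\pi_\theta}(s) = \underbrace{\sum_{a}\nabla_\theta\pi_\theta(a\given s)\cdot Q^{\pi_\theta}(s,a)}_{\psi(s)} + \sum_a\pi_\theta(a\given s)\cdot\nabla_\theta Q^{\pi_\theta}(s,a),
\$
and differentiating the second, using that $r$ does not depend on $\theta$, gives $\nabla_\theta Q^{\pi_\theta}(s,a) = \gamma\cdot\EE_{s'\sim P(\cdot\given s,a)}[\nabla_\theta V^{\pi_\theta}(s')]$. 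Substituting the latter into the former yields the recursion $\nabla_\theta V^{\pi_\theta}(s) = \psi(s) + \gamma\cdot\EE_{s'}[\nabla_\theta V^{\pi_\theta}(s')]$, where the expectation is taken over one transition under $\pi_\theta$.

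Next I would unroll this recursion: iterating it expresses $\nabla_\theta V^{\pi_\theta}(s_0)$ as $\sum_{t=0}^\infty\gamma^t\cdot\EE[\psi(s_t)\given s_0]$. Taking the expectation over $s_0\sim\zeta$ and recognizing the time sum $\sum_{t\geq 0}\gamma^t\PP(s_t=s)$ as $\nu_{\pi_\theta}(s)/(1-\gamma)$ from \eqref{eq::def_visit} converts the summation into an expectation over $\nu_{\pi_\theta}$, delivering the first identity once the normalization constants carried by the paper's definitions of $\nu_{\pi_\theta}$, $Q^{\pi_\theta}$, and $J$ are tracked through the telescoping, so that the occupancy measure enters as the normalized probability measure $\nu_{\pi_\theta}$. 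For the second identity I would apply the log-derivative identity $\nabla_\theta\pi_\theta(a\given s) = \pi_\theta(a\given s)\cdot\nabla_\theta\log\pi_\theta(a\given s)$ inside $\psi(s)$; this produces the factor $\pi_\theta(a\given s)$ which, together with $\sigma_{\pi_\theta}(s,a) = \pi_\theta(a\given s)\cdot\nu_{\pi_\theta}(s)$, rewrites $\EE_{s\sim\nu_{\pi_\theta}}[\psi(s)]$ as $\EE_{(s,a)\sim\sigma_{\pi_\theta}}[\nabla_\theta\log\pi_\theta(a\given s)\cdot Q^{\pi_\theta}(s,a)]$.

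The main obstacle is not conceptual but a matter of rigor and bookkeeping: I must justify interchanging $\nabla_\theta$ with both the infinite discounted sum and the expectations, which follows from $\gamma\in(0,1)$, the boundedness of the rewards, and the smoothness of the softmax parameterization in \eqref{eq::def_leader}, so that the partial sums and their gradients converge uniformly and dominated convergence applies. The accompanying fiddly point is carrying the $(1-\gamma)$ factors consistently, since $V^{\pi_\theta}$, $Q^{\pi_\theta}$, and $\nu_{\pi_\theta}$ are each normalized by $(1-\gamma)$ in this paper. A convenient simplification to note is that replacing $Q^{\pi_\theta}$ by the advantage $A^{\pi_\theta}$ leaves both identities unchanged, because $\sum_a\nabla_\theta\pi_\theta(a\given s)\cdot V^{\pi_\theta}(s) = V^{\pi_\theta}(s)\cdot\nabla_\theta\sum_a\pi_\theta(a\given s) = 0$; this is precisely the advantage form invoked downstream in \eqref{eq::pf_meta_grad_eq1}.
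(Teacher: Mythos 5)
Your derivation is the canonical proof of the policy gradient theorem --- differentiate the Bellman identities, unroll the resulting recursion in $t$, convert the discounted time sum into the visitation measure, and apply the log-derivative trick --- and this is precisely the argument that the paper's citation to \cite{sutton2018reinforcement} points to: the paper itself gives no proof of this lemma. Your reading of the first display, inserting the missing $\nabla_\theta$ so that the inner product is $\langle \nabla_\theta \pi_\theta(\cdot \given s), Q^{\pi_\theta}(s,\cdot)\rangle$, is the correct fix (as written, the expression is just $\EE_{s\sim\nu_{\pi_\theta}}[V^{\pi_\theta}(s)]$), and your closing remark that $Q^{\pi_\theta}$ may be replaced by $A^{\pi_\theta}$ because $\sum_a \nabla_\theta\pi_\theta(a\given s) = 0$ correctly anticipates the form used in \eqref{eq::pf_meta_grad_eq1}.

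The one place your proposal overstates is the claim that tracking the normalization constants ``delivers the first identity'' exactly. It does not: it delivers it up to a factor $(1-\gamma)^{-1}$. With the paper's definitions, both Bellman identities you write down hold verbatim ($V^{\pi_\theta}(s) = \langle \pi_\theta(\cdot\given s), Q^{\pi_\theta}(s,\cdot)\rangle$ and $Q^{\pi_\theta}(s,a) = (1-\gamma)\cdot r(s,a) + \gamma\cdot \EE_{s'\sim P(\cdot\given s,a)}[V^{\pi_\theta}(s')]$), so the term $\psi(s) = \langle \nabla_\theta\pi_\theta(\cdot\given s), Q^{\pi_\theta}(s,\cdot)\rangle$ in your recursion carries no compensating $(1-\gamma)$ factor, while \eqref{eq::def_visit} gives $\sum_{t\geq 0}\gamma^t\cdot\PP(s_t = s) = \nu_{\pi_\theta}(s)/(1-\gamma)$. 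Hence your unrolling yields
\[
\nabla_\theta J(\pi_\theta) = (1-\gamma)^{-1}\cdot \EE_{s\sim \nu_{\pi_\theta}}\bigl[\langle \nabla_\theta \pi_\theta(\cdot\given s), Q^{\pi_\theta}(s,\cdot)\rangle\bigr] = (1-\gamma)^{-1}\cdot \EE_{(s,a)\sim\sigma_{\pi_\theta}}\bigl[\nabla_\theta\log\pi_\theta(a\given s)\cdot Q^{\pi_\theta}(s,a)\bigr],
\]
not the displayed identity without the prefactor. This is consistent with \cite{sutton2018reinforcement}, where the theorem is stated with proportionality rather than equality; the paper silently absorbs the constant here and drops it consistently downstream (the same factor is missing in \eqref{eq::pf_meta_grad_eq1} and in the step \eqref{eq::pf_meta_grad_eq3}, where $\nabla_\theta Q^{\pi_\theta}_i(s,a) = \gamma_i/(1-\gamma_i)\cdot\EE_{(s',a')\sim\sigma^{(s,a)}_{i,\pi_\theta}}[\cdots]$ would be the exact form), so nothing qualitative in the paper changes, but a rigorous write-up of your proof should either carry the $(1-\gamma)^{-1}$ or state the lemma as a proportionality. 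Your handling of the remaining analytic points --- uniform convergence of the unrolled sum and dominated convergence to justify exchanging $\nabla_\theta$ with the expectations, using $\gamma\in(0,1)$, bounded rewards, finite $\cA$, and smoothness of the parameterization --- is appropriate and suffices to kill the remainder term $\gamma^T\cdot\EE[\nabla_\theta V^{\pi_\theta}(s_T)]$ in the unrolling.
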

\begin{proof}
See \cite{sutton2018reinforcement} for a detailed proof.
\end{proof}

\begin{lemma}[Performance Difference \citep{kakade2002approximately}]
\label{lem::performance_diff}
It holds for all policies $\pi$ and $\tilde \pi$ that
\$
J(\tilde\pi) - J(\pi) = (1 - \gamma)^{-1}\cdot \EE_{(s, a)\sim \sigma_{\tilde\pi}}\bigl[ A^{\pi}(s, a)\bigr],
\$
where $\sigma_{\tilde\pi}$ is the state-action visitation measure induced by $\tilde\pi$.
\end{lemma}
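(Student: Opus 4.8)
The plan is to use the standard telescoping argument of Kakade and Langford, adapted to the $(1-\gamma)$-normalization of the value functions in \eqref{eq::def_V} and \eqref{eq::def_Q}. First I would record the one-step Bellman identity that follows directly from those definitions, namely
\[
Q^\pi(s,a) = (1-\gamma)\cdot r(s,a) + \gamma\cdot\EE_{s'\sim P(\cdot\given s,a)}\bigl[V^\pi(s')\bigr],
\]
so that the advantage in \eqref{eq::def_A} can be written as $A^\pi(s,a) = (1-\gamma)\cdot r(s,a) + \gamma\cdot\EE_{s'\sim P(\cdot\given s,a)}[V^\pi(s')] - V^\pi(s)$.

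Next I would evaluate the discounted sum of advantages along a trajectory generated by $\tilde\pi$, that is, with $s_0\sim\zeta$, $a_t\sim\tilde\pi(\cdot\given s_t)$, and $s_{t+1}\sim P(\cdot\given s_t,a_t)$. Taking the full trajectory expectation $\EE_{\tilde\pi}$ and using the tower property to replace $\EE_{s'\sim P(\cdot\given s_t,a_t)}[V^\pi(s')]$ by $V^\pi(s_{t+1})$, I obtain
\[
\EE_{\tilde\pi}\Bigl[\sum_{t=0}^\infty \gamma^t\cdot A^\pi(s_t,a_t)\Bigr] = (1-\gamma)\cdot\EE_{\tilde\pi}\Bigl[\sum_{t=0}^\infty\gamma^t\cdot r(s_t,a_t)\Bigr] + \EE_{\tilde\pi}\Bigl[\sum_{t=0}^\infty \bigl(\gamma^{t+1} V^\pi(s_{t+1}) - \gamma^t V^\pi(s_t)\bigr)\Bigr].
\]
The first term on the right equals $J(\tilde\pi)$ by \eqref{eq::def_J} and the normalization in \eqref{eq::def_V}, while the second is a telescoping series that collapses to $-V^\pi(s_0) + \lim_{T\to\infty}\gamma^{T+1}V^\pi(s_{T+1})$; since the rewards are bounded, $V^\pi$ is bounded and the tail term vanishes as $\gamma\in(0,1)$, leaving $-\EE_{s_0\sim\zeta}[V^\pi(s_0)] = -J(\pi)$. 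Hence $\EE_{\tilde\pi}[\sum_t\gamma^t A^\pi(s_t,a_t)] = J(\tilde\pi) - J(\pi)$.

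Finally I would convert the left-hand side into an expectation against the visitation measure $\sigma_{\tilde\pi}$. Interchanging the sum and the expectation gives $\sum_t \gamma^t\,\EE[A^\pi(s_t,a_t)]$, and by the definition $\nu_{\tilde\pi}(s) = (1-\gamma)\sum_t\gamma^t\PP(s_t=s)$ in \eqref{eq::def_visit} together with $\sigma_{\tilde\pi}(s,a) = \tilde\pi(a\given s)\cdot\nu_{\tilde\pi}(s)$, this discounted occupancy equals $(1-\gamma)^{-1}$ times the $\sigma_{\tilde\pi}$-expectation of $A^\pi$. Combining the two displays yields the claimed identity. I expect the only real care to lie in two places: keeping the $(1-\gamma)$ factors consistent throughout, since this paper normalizes both $V^\pi$ and $Q^\pi$ by $(1-\gamma)$ unlike some references, and rigorously justifying the vanishing of the tail term and the interchange of summation and expectation, both of which follow from the boundedness of the rewards and $\gamma<1$.
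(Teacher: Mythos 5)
Your proof is correct: the paper itself gives no argument for this lemma (it simply cites Kakade and Langford), and your telescoping argument is precisely the standard proof from that reference, correctly adapted to this paper's convention of normalizing $V^\pi$, $Q^\pi$, and $\nu_{\tilde\pi}$ by $(1-\gamma)$, so that the Bellman identity carries the factor $(1-\gamma)\cdot r(s,a)$ and the occupancy conversion produces the $(1-\gamma)^{-1}$ prefactor. The two points you flag (vanishing tail term and interchange of sum and expectation) are indeed the only technical care needed, and both follow from bounded rewards and $\gamma\in(0,1)$ as you say.
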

\begin{proof}
See \cite{kakade2002approximately} for a detailed proof.
\end{proof}
\end{document}